\newtheorem{remark}{Remark}
\newtheorem{example}{Example}
\theoremstyle{definition}
\newtheorem{theorem}{Theorem}
\newtheorem{proposition}{Proposition}
\newtheorem{lemma}{Lemma}
\newtheorem{definition}{Definition}
\DeclareMathAlphabet{\mathpzc}{OT1}{pzc}{m}{it}
\DeclareFontFamily{U}{jkpmia}{}
\DeclareFontShape{U}{jkpmia}{m}{it}{<->s*jkpmia}{}
\DeclareFontShape{U}{jkpmia}{bx}{it}{<->s*jkpbmia}{}
\DeclareMathAlphabet{\mathfrak}{U}{jkpmia}{m}{it}
\DeclareMathOperator{\dt}{dt}
\DeclareMathOperator{\Pre}{Pre}
\DeclareMathOperator{\PRE}{PRE}
\newcommand{\Hh}{{\mathbf H}}
\newcommand{\xxi}{{ \boldsymbol \xi}}
\newcommand{\w}{{\mathbf w}}
\newcommand{\Cs}{{\boldsymbol{\cal C}}}
\newcommand{\x}{{\mathbf x}}
\newcommand{\Aa}{{\mathbf A}}
\newcommand{\Bb}{{\mathbf B}}
\renewcommand{\u}{{\mathbf u}}
\newcommand{\Pf}{{\mathbf P}}
\newcommand{\Bx}{\boldsymbol{\mathcal{B}}}
\newcommand{\sigmaa}{\boldsymbol{\sigma}}
\newcommand{\norm}[1]{\|#1\|}
\renewcommand{\c}{\mathbf{c}}
\newcommand{\so}{\emph{SO}(3)}
\newcommand{\gr}{\mathtt{g}}
\newcommand{\hz}{\mathtt{h}}
\newcommand{\Id}{\mathbf{I}}
\newcommand{\Ie}{\mathtt{I}}
\newcommand{\h}{\mathbf{h}}
\newcommand{\0}{\mathbf{0}}
\newcommand{\p}{{\mathbf p}}%
\newcommand{\R}{\mathbb{R}}
\newcommand{\Na}{\mathbb{N}}
\renewcommand{\S}{\mathcal{S}}
\newcommand{\B}{\mathcal{B}}
\newcommand{\X}{\mathcal{X}}
\newcommand{\U}{\mathcal{U}}
\newcommand{\W}{\mathcal{W}}
\newcommand{\Ww}{\mathbf{W}}
\newcommand{\I}{\mathbf{I}}
\newcommand{\F}{\mathcal{F}}
\renewcommand{\P}{\mathcal{P}}
\newcommand{\Reach}{\mathcal{R}}
\newcommand{\Q}{\mathcal{Q}}
\newcommand{\Qc}{\mathbf{Q}}
\newcommand{\Rc}{\mathbf{R}}
\newcommand{\case}[1]{\begin{cases}#1\end{cases}}
\newcommand{\ar}[1]{\left[\begin{array}#1\end{array}\right]}
\newcommand{\al}[1]{\begin{align}#1\end{align}}
\newcommand{\eq}[1]{\begin{equation}#1\end{equation}}
\newcommand{\ald}[1]{\begin{aligned}#1\end{aligned}}
\newcommand{\eqn}[1]{\begin{equation*}#1\end{equation*}}
\newcommand{\subeq}[1]{\begin{subequations}#1\end{subequations}}
\newcommand{\st}{\text{s.t. }}
\newcommand{\In}{{\mathbf 1}}
\newcommand{\omegaa}{\boldsymbol{\omega}}
\newcommand{\f}{\mathbf f}
\renewcommand{\v}{\mathbf v}
\newcommand{\Ca}{{\cal C}}
\newcommand{\Rx}{{\mathscr {R}}}
\newcommand{\Cx}{{\mathscr {C}}}
\newcommand{\footremember}[2]{%
    \footnote{#2}
    \newcounter{#1}
    \setcounter{#1}{\value{footnote}}%
}
\newcommand{\footrecall}[1]{%
    \footnotemark[\value{#1}]%
} 
\setlist[enumerate]{%
wide =0.5\parindent,
listparindent=0pt%
}
\setlist[itemize]{%
wide =0.5\parindent,
listparindent=0pt%
}
\title{Quadruped Capturability and Push Recovery via a Switched-Systems Characterization of Dynamic Balance}
\author{Hua Chen\footremember{sustech}{Department of Mechanical and Energy Engineering, Southern University of Science and Technology, Shenzhen, China. Emails: { \tt chenh6@sustech.edu.cn, hongzj@mail.sustech.edu.cn, 11930364@mail.sustech.edu.cn, zhangw3@sustech.edu.cn}}, Zejun Hong\footrecall{sustech}, Shunpeng Yang\footrecall{sustech}, Patrick M. Wensing\footremember{nd}{ Department of Aerospace and Mechanical Engineering, University of Notre Dame, Notre Dame, IN 46556 USA. Email: {\tt   pwensing@nd.edu}} and Wei Zhang\footrecall{sustech}}
\date{}
\begin{document}

\maketitle
\begin{abstract}
This paper studies capturability and push recovery for quadrupedal locomotion. Despite the rich literature on capturability analysis and push recovery control for legged robots, existing tools are developed mainly for bipeds or humanoids. Distinct quadrupedal features such as point contacts and multiple swinging legs prevent direct application of these methods. To address this gap, we propose a switched systems model for quadruped dynamics, and instantiate the abstract viability concept for quadrupedal locomotion with a time-based gait. Capturability is characterized through a novel specification of {\em dynamically balanced states} that addresses the time-varying nature of quadrupedal locomotion and balance. A linear inverted pendulum (LIP) model is adopted to demonstrate the theory and show how the newly developed quadrupedal capturability can be used in motion planning for quadrupedal push recovery. We formulate and solve an explicit model predictive control (EMPC) problem whose optimal solution fully characterizes quadrupedal capturability with the LIP. Given this analysis, an optimization-based planning scheme is devised for determining footsteps and center of mass references during push recovery. To validate the effectiveness of the overall framework, we conduct numerous simulation and hardware experiments. Simulation results illustrate the necessity of considering dynamic balance for quadrupedal capturability, and the significant improvement in disturbance rejection with the proposed strategy. Experimental validations on a replica of the Mini Cheetah quadruped demonstrate an up to $100\%$ improvement as compared with state-of-the-art.
\end{abstract}

\section{Introduction}\label{sec:intro}

Quadrupeds form an important class of legged robots that are capable of reliably traversing challenging uneven terrains beyond the reach of classical mobile robots. Recent advancements in the design of high performance actuators and hardware structures have recently led to various quadruped platforms and commercial products, including Spot from Boston Dynamics~\cite{BD}, HyQ from IIT~\cite{Semini2011}, Cheetah series from MIT~\cite{Seok2013,Bledt2018,Katz2019}, and ANYmal from ETH~\cite{Hutter2016}, among others. 

Despite many impressive achievements from these systems, the design of effective and robust quadruped control strategies for handling large disturbances remains challenging. In this paper, we focus on quadrupedal push recovery problem that requires rapid determination of discretely changing footsteps and continuous input torques in reaction to large unexpected disturbances. This problem is challenging to solve in its fully generality due to its combinatorial nature and the nonlinear dynamics of the system. Furthermore, online computation requirements pose additional difficulties.

To address these challenges, we develop a quadrupedal push recovery framework that leverages capturability analysis. To account for the flexibility of choosing different gaits for quadrupeds, we adopt a switched system of quadrupedal locomotion. To formalize the notion of quadruped balance during locomotion, we leverage the idea of control invariant sets for the underlying switched systems and propose a dynamical balance concept. With this dynamic balance concept, we extend the tools for capturability analysis to quadrupedal locomotion using reachability theory, which can be pre-computed offline. Building on the offline analysis with a switched linear inverted pendulum (LIP) model, we further construct an efficient online planning scheme for quadrupedal push recovery. By exploiting a translational symmetry property of the set of capturable states, we provide a tractable approach for generating upcoming footsteps and CoM references via solving a series of quadratic programs (QPs). Simulation and hardware experiments with a replica of the Mini Cheetah platform~\cite{Katz2019} demonstrate the effectiveness of the proposed scheme. Furthermore, comparisons with the state-of-the-art model predictive control and whole-body impulse control (MPC+WBIC)~\cite{Kim2019} show that the proposed approach can resist the impact up-to {\bf 200\%}.

\subsection{Related Works}

Throughout the history of push recovery controller synthesis for legged robots, viability and capturability-based approaches play a central role. Viability theory for general nonlinear systems~\cite{Aubin1990,Aubin2011} concerns with determining whether a given state of the nonlinear system can be controlled to stay safe over time, which has a strong implication on the push recovery ability for legged robots. Wieber~\cite{Wieber2002} pioneered the study on viability theory for legged robots. Later on, Wieber~\cite{Wieber2008} further drew the connection between viability and MPC for legged locomotion. Following the main idea of viability and aiming to synthesize practically applicable controllers, the famous capturability concept was proposed. Building upon the seminal capture point notion, Pratt \emph{et al.}~\cite{Pratt2006} synthesized a push recovery controller for humanoids with a simplified LIP model with flywheel. Along this direction, Koolen \emph{et al.}~\cite{Koolen2012} analyzed capturability for the three-dimensional LIP and its extensions with finite-sized feet and a reaction mass body. These analytical results have been later applied for synthesizing locomotion controllers for the M2V2 robot in practice~\cite{Pratt2012}, and later extended to other simplified models such as the variable height inverted pendulum model~\cite{Koolen2016,Caron2018,Liu2021}. Alternatively, Stephens~\cite{Stephens2007} proposed the so-called ankle, hip, and stepping strategies for humanoid push recovery. Later on, these strategies were combined with MPC and were tested using real-world force-controlled robots~\cite{Stephens2010,Stephens2011}. All above mentioned tools focus on bipeds in particular and have not been formally extended to other legged robotic systems. 

As one of the most important parts in push recovery controllers, footstep planning has attracted a considerable amount of research attention. Naturally, all capturability-based schemes mentioned previously provide a guide for footstep planning to enable successful push recovery. Automatic footstep adjustment with a LIP model has been proposed within an MPC framework and been deployed on bipedal robots~\cite{Diedam2008,Herdt2010}. These methods require pre-generated reference step locations that are commonly hard to obtain in practical push recovery scenarios. In addition, the simplified model used for footstep planning in these methods is a LIP with finite-sized support, which cannot fully characterize the point-feet and underactuated feature for quadrupedal locomotion. Scianca {\em et al.}~\cite{Scianca2020} developed an intrinsically stable MPC approach for generating footsteps and step timing for humanoids. Shifting focus from bipeds to quadrupeds, Boussema {\em et al.}~\cite{Boussema2019} proposed a feasible impulse set approach for footstep planning with quadrupeds. Given the estimated feasible impulse sets, upcoming contact locations and contact timings can be optimized. However, analyzing the feasible impulse set is computationally challenging, restricting its applicability to only planar considerations. So far, it remains an open problem to design reliable footstep planning schemes for quadruped push recovery.

Instead of looking for controllers designed exclusively for the task of push recovery, generic optimization-based legged locomotion controllers often demonstrate impressive capability in rejecting external disturbances. Convex MPC controllers use a linearly approximated single rigid body dynamics  model for quadrupedal locomotion and have shown an ability to generalize across a range of gaits~\cite{DiCarlo2018,Kim2019}. Building upon similar ideas, regularized predictive control frameworks that addresses nonlinearities in the original problem via convexifying the cost function have been designed and verified with hardware implementations~\cite{Bledt2017,Bledt2019,Bledt2020}. Leveraging a variational-based linearization technique, optimization-based strategies have demonstrated their ability to manage underactuated balance~\cite{Chignoli2020} and dynamic locomotion~\cite{Ding2021}. Winkler \emph{et al.}~\cite{Winkler2017} worked directly with the nonlinear problem and applied standard solvers to tackle the problem. Whole-body MPC~(e.g.,~\cite{Neunert2018}) is another promising approach that directly optimizes the quadruped's footsteps and center-of-mass (CoM) trajectories with the whole-body model in a single optimization problem. More recently, data-driven quadrupedal controllers utilizing reinforcement learning have also demonstrated remarkable locomotion capabilities ~\cite{Lee2020,Tsounis2020}. Albeit the impressive performance, these optimization-based or learning-based controllers serve a general purpose and behave less reliably in the specific task of push recovery.

\subsection{Contributions}

Despite the rich literature in capturability analysis for bipeds and locomotion controller design for legged robots, capturability analysis and push recovery has not been adequately investigated for quadrupeds. One of the major challenges lies in the lack of considering the time-varying nature of the underlying quadrupedal dynamics during locomotion. In addition, the underactuated point-contact feet for quadrupeds further complicate both system analysis and control synthesis. To address the above issues, this paper rigorously studies quadrupedal capturability and develops an associated push recovery planning scheme. The main contributions of this paper are summarized as follows. 

First, we propose to use switched systems to model quadrupedal locomotion and instantiate the abstract capturability concept to formally characterize quadrupedal capturability. Specifically, we leverage the control invariant set and backward reachable set notions in reachability theory to characterize the time-varying effects of changing contact configurations in quadrupedal capturability, which is the main distinction from classical bipedal capturability analysis approaches.

Second, we further investigate how quadrupedal capturability can be analyzed with practically available tools. We adopt the LIP as the simplified model and provide the associated switched system formulation with time-fixed gaits. With these switched LIP dynamics, we formulate quadrupedal capturability analysis as an EMPC problem and solve it via polytopic operations. By inspecting the quadrupedal capturability analysis result, we illustrate how the quadruped should react to external disturbances applied at different instances during the gait and we show the effects of different gait patterns on capturability. 

Third, we develop a push recovery motion planning scheme for the CoM trajectory and footstep sequence capitalizing on the quadrupedal capturability analysis result. Leveraging this offline obtainable result, real-time motion planning for push recovery can be decoupled into the planning of target footsteps and the planning of trajectories for the CoM and footsteps during push recovery. A quadratic programming scheme addressing the overall planning problem is devised, enabling efficient real-time implementation. This planning scheme is then integrated with an MPC controller to achieve real-time control of the quadruped. Extensive simulations and experiments demonstrate that the proposed push recovery strategy performs at least two-times better than the state-of-the-art methods. Simulations and experiments reveal how different gait patterns and impact timings affect the push recovery performance.   

\subsection{Overview of the Quadrupedal Capturability-based Push Recovery Framework}
 
\begin{figure*}[t!]
	\centering
	{\includegraphics[width=\linewidth]{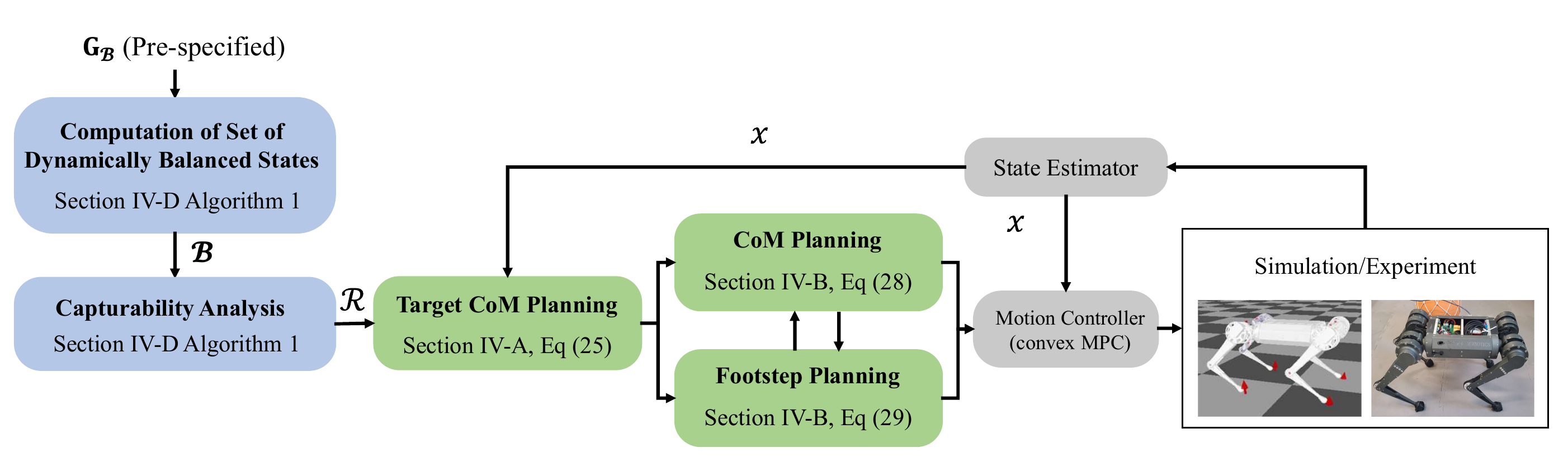}}
	\caption{ \footnotesize   Proposed push recovery framework. Blocks in blue run offline. Blocks in green run online at $\SI{30}{Hz}$. Blocks in gray run online at $500\SI{}{Hz}$.}
	\label{fig:over_arch}
\end{figure*}

 
Fig.~\ref{fig:over_arch} provides an overview of the proposed framework. Overall, the framework involves an offline quadrupedal capturability analysis module and an online optimization-based planning and control module. The offline quadrupedal capturability analysis instantiates the classical capturability concept with a time-switched system that models the quadrupedal locomotion with time induced gaits, and applies an EMPC approach to obtain the capturable states. During online execution, the proposed framework computes the target footsteps based on the offline capturability computations and the real-time estimated CoM state. The target footsteps are then used as a terminal constraint in the motion planning problem, in which the robot updates the sequence of upcoming footsteps and the CoM reference trajectory. Apart from the offline part, the proposed push recovery framework requires solving only a number of quadratic programs, authorizing efficient online implementation. The overall planning framework is then completed with a lower level convex MPC controller and a conventional state estimator.

\section{Switched System Modeling for Quadrupedal Locomotion and Quadrupedal Capturability}\label{sec:sys_qcap}

As mentioned before, push recovery controllers aim to find the discretely changing footsteps and continuous joint input torques for recovery from external pushes. For quadrupeds, the distinct features, such as multiple swinging legs and point-contact feet, prevent direct application of existing push recovery strategies that are mainly developed for bipeds. To address these issues, we first consider switched system modeling for quadrupedal locomotion and then instantiate viability theory for quadrupedal capturability based on the proposed switched system model.  

\subsection{Switched System Modeling for Quadrupedal Locomotion}
Various kinds of models can be used to design controllers for quadrupeds. They can all be viewed as special or simplified cases of whole-body multi-link rigid-body dynamics models. Throughout this paper, we focus on quadrupedal locomotion with time-fixed gaits. To this end, we adopt the following nonlinear switched system to describe the quadrupedal dynamics for notational convenience and theoretical generality.
\subeq{\label{eq:quad_sys}\al{\dot{\x}(t) &= f_{\sigmaa(t)} (\x(t),\u(t),\w(t)), \\ \x(t) &\in \X_{\sigmaa(t)},\ \u(t) \in \U_{\sigmaa(t)},\ \w(t)\in \W_{\sigmaa(t)} }}where $\x$ is the continuous state of the quadruped, $\u$ is the continuous input, $\w$ is the discrete input, $\sigmaa$ is the discrete mode denoting the gait information, $f_{\sigmaa(t)}$ is the underlying dynamics, $\X_{\sigmaa(t)}$, $\U_{\sigmaa(t)}$ and $\W_{\sigmaa(t)}$ are respectively the state, continuous input, and discrete input constraint sets, possibly dependent on the gait mode. The specific meaning of $\x$, $\u$, $\w$, $\sigmaa$ and $f$ depend on the underlying model used. 

\begin{remark}\label{rmk:impulse}
In the above model for quadrupedal locomotion, impulsive effects associated with foot touchdowns are neglected. Such a simplification has been widely accepted in most template models for legged locomotion~\cite{Pratt2006,Koolen2012,DiCarlo2018}. In fact, the impulsive effects can be incorporated into the generic formulation by introducing a state constraint specifying the touchdown condition and an associated state-reset map capturing the impulse. However, considering the impulsive effects requires carefully constructed conditions for identifying touchdown events, and this modeling strategy greatly complicates the overall problem. Hence, we consider the simplified cases to better deliver our main idea. The simulation and hardware experiments in Section~\ref{sec:validation} show the admissibility of this assumption.  
\end{remark}

Below we provide two widely adopted examples to better illustrate how the above switched system can be used for modeling quadrupedal locomotion.

\begin{example}[Single Rigid Body]
The single rigid body (SRB) model is commonly adopted for quadrupedal locomotion analysis and controller design~\cite{DiCarlo2018}. The state of this model consists of the position and orientation of the single rigid body as well as their velocities, i.e., $\x = (\c,\dot{\c},\Rc,\omegaa)$ where $\c \in \R^3$ is the position of the CoM of the body, $\Rc\in \so$ is a rotation matrix representing the orientation of the body and $\omegaa\in \R^3$ is the angular velocity of the body. The continuous input to the single rigid body model consists of the ground reaction forces  $\u = \ar{{cccc} \f_1^\top & \f_2^\top & \f_3^\top & \f_4^\top}^\top \in \R^{12}$ at the footstep locations $\w = \ar{{cccc} \p_1 & \p_2 & \p_3 & \p_4}\in \R^{3\times 4}$. The contact points serve as the discrete inputs to the SRB model that change from step to step. The dynamics of the model are given by \eq{\label{eq:srb}  \dot{\x} = \ar{{c} \dot{\c}\\ \ddot{\c} \\ \dot{\Rc} \\ \dot{\omegaa}}  =  \ar{{c} \dot{\c} \\ \frac{1}{m}\sum\limits_{ i=1}^{4} \In_i \f_i - \textbf{g} \\ \omegaa\times \Rc \\  \Ie^{-1} \Big(\sum\limits_{i=1}^4 \In_i (\p_i\!-\!\c)\!\times\! \f_i \!-\!\omegaa\times \Ie \omegaa\Big) }} where $m$ is the mass, $\Ie$ is the rotational inertia, $\textbf{g}$ is the gravitational acceleration vector, and $\In_i\in \{0,1\}$ is the indicator of whether the $i$-th leg is in contact that is encoded in the discrete mode signal $\sigmaa(t)$.

Consider the trot gait, for instance, where the diagonal legs $\{1,3\}$ and $\{2,4\}$ switch contacts alternatively. The associated SRB dynamics become\subeq{\nonumber \small  \al{\dot{\x} & =\case{ &\! \ar{{c} \dot{\c} \\ \frac{1}{m} (\f_1+\f_3) - \textbf{g} \\ \omegaa \times \Rc \\  \Ie^{-1} \Big( (\p_1\!-\!\c)\!\times \!\f_1 \!+\! (\p_3\! -\!\c)\!\times\! \f_3 \!-\!\omegaa \!\times\! \Ie \omegaa \Big)}\!, \text{if } \!\sigmaa = 1 \\ &\! \ar{{c} \dot{\c} \\ \frac{1}{m} (\f_2+\f_4) - \textbf{g} \\ \omegaa \times \Rc \\  \Ie^{-1} \Big( (\p_2\!-\!\c)\!\times \!\f_2 \!+\! (\p_4\! -\!\c)\!\times\! \f_4 \!-\!\omegaa \!\times\! \Ie \omegaa \Big)}\!, \text{if } \!\sigmaa = 2 }  }}
\end{example}

\begin{example}[3D Linear Inverted Pendulum Model~\cite{Kajita2001}]\label{exp:3dlip}
Similar to the previous example, the three-dimensional linear inverted pendulum (3D-LIP) model can be written in the form~\eqref{eq:quad_sys} as well. Denoting by $\c_{xy} = (c_x,c_y)\in \R^2$ the planar CoM position, and by $\p = (p_x,p_y) \in \S$ the center of pressure (CoP) with $\S$ being the support region, the 3D-LIP dynamics are given by
\subeq{\label{eq:3dlip} \al{ \dot{\x} =  \ar{{c}\dot{\c}_{xy} \\ \ddot{\c}_{xy}} = \ar{{c}\dot{c}_x \\ \dot{c}_y \\  \frac{\gr}{\hz} (c_x - p_x) \\ \frac{\gr}{\hz} (c_y - p_y)  } \label{eq:lip_eom} }} where $\gr=\SI{9.81}{m\per s^2}$ is the gravitational acceleration constant, and $\hz$ is the fixed height  of the CoM.

Note that, the 3D-LIP dynamics remain the same for different modes during locomotion, while the input constraints change over time depending on the contact situation. Given the footstep locations $\w = \ar{{cccc} \p_1 & \p_2 & \p_3 & \p_4}\in \R^{3\times 4}$, the CoP can be obtained by 
\eq{ \label{eq:sigma} \ald{ & \p = \sum\limits_{i=1}^4  \lambda_i  \p_i, \ (\lambda_1,\ldots, \lambda_4) \in \S_{\lambda}  \text{ with } \\ &   \S_{\lambda}   \triangleq \Big\{ (\lambda_1,\ldots, \lambda_4) \in [0,1]^4~|~ \sum\limits_{i=1}^{4}\In_i \lambda_i = 1 \\ & \qquad \qquad \qquad \qquad \qquad \text{ and } \lambda_i= 0,  \text{ if } \In_i= 0. \Big\}}}
Taking $\u = \ar{{cccc}\lambda_1 & \lambda_2 & \lambda_3 &  \lambda_4}^\top \in \R^4$ as the continuous input, the 3D-LIP dynamics for the trot gait are given below
\eq{ \dot{\x} =   A_\text{LIP} \x +B_\text{LIP} \w \u,   \ \u \in \S_{\lambda}, \ \sigmaa = 1,2 \label{eq:ss_lip}} where \eq{A_\text{LIP} = \ar{{cccc} 0 & 1 & 0 & 0\\ 0 & 0 & 0 & 1 \\ \frac{\gr}{\hz} & 0 & 0 & 0\\ 0 & 0 & \frac{\gr}{\hz} & 0},\ \ B_\text{LIP} = \ar{{ccc} 0 & 0 & 0\\ 0 & 0 & 0\\ - \frac{\gr}{\hz} & 0 & 0  \\ 0 & -\frac{\gr}{\hz} & 0}\label{eq:AB_LIP}} are the system matrices associated with the LIP model. 
\end{example}

Given the above switched system modeling, push recovery controller design for quadrupeds can be directly formulated as a switched system optimal control problem. However, directly working with the complex switched dynamics is still practically challenging. Capturability-based hierarchical schemes have been extensively studied and applied to address the push recovery problem. In the following, we will first review the basics in conventional capturability and then instantiate this abstract concept with the above introduced switched system to yield the quadrupedal capturability notion.
 
\subsection{Preliminaries on Classical Capturability for Legged Robots}\label{sec:pre_cap}
 
Capturability originated from the viability concept for general nonlinear systems~\cite{Aubin1990,Aubin2011} and was later extended to study legged locomotion~\cite{Wieber2002}. It concerns with robots' ability to maintain balance and recover from external pushes. The original definition of viability for legged locomotion is given below.

\begin{definition}[Viability of a Legged Robot~\cite{Wieber2002,Pratt2006b}]\label{def:viab_leg}
A robot state $\x$ is {\em viable} if there exists at least one state trajectory starting with $\x$ that never reaches a prescribed set of fallen states $\F$. The {\em viability kernel} is defined as the set consisting all viable states.
\end{definition}

The above definition is quite abstract. Even when the set of fallen states $\F$ can be precisely defined, there are limited analytical or numerical tools for analyzing viability and designing corresponding viability-based fall prevention strategies. To address these issues, the seminal capture point concept was introduced, which is pivotal in the history of push recovery analysis and controller synthesis for legged robots. 

\begin{definition}[Capture Point~\cite{Pratt2006b}]\label{def:capturepoint}
A capture point is a point on the ground, such that if the robot's center-of-pressure (CoP) is maintained to coincide with the capture point, then the robot can be controlled to a stop asymptotically. 
\end{definition}
 
Leveraging the capture point concept, the problem of checking whether a robot state is recoverable without taking step (i.e., it is $0$-step capturable) becomes one of checking if the robot's support region (footprint) covers a capture point. However, capture points for generic legged models are very challenging to compute in general. Thus, simplified models such as the 3D-LIP model described in~\eqref{eq:3dlip} have been widely adopted in capturability analysis. For the 3D-LIP model, the associated (instantaneous) capture point (ICP)~\cite{Koolen2012}, denoted by $\xxi$, is simply given by
\eq{\label{eq:icp} \xxi = \c_{xy} +\sqrt{\frac{\hz}{\gr}} \dot{\c}_{xy}. }
Once the ICP is computed, the basic idea of many push recovery controllers is to take a step so that the robot's support region covers the ICP.

\begin{figure}[tp!]
	\centering
	{\includegraphics[width=0.4\linewidth]{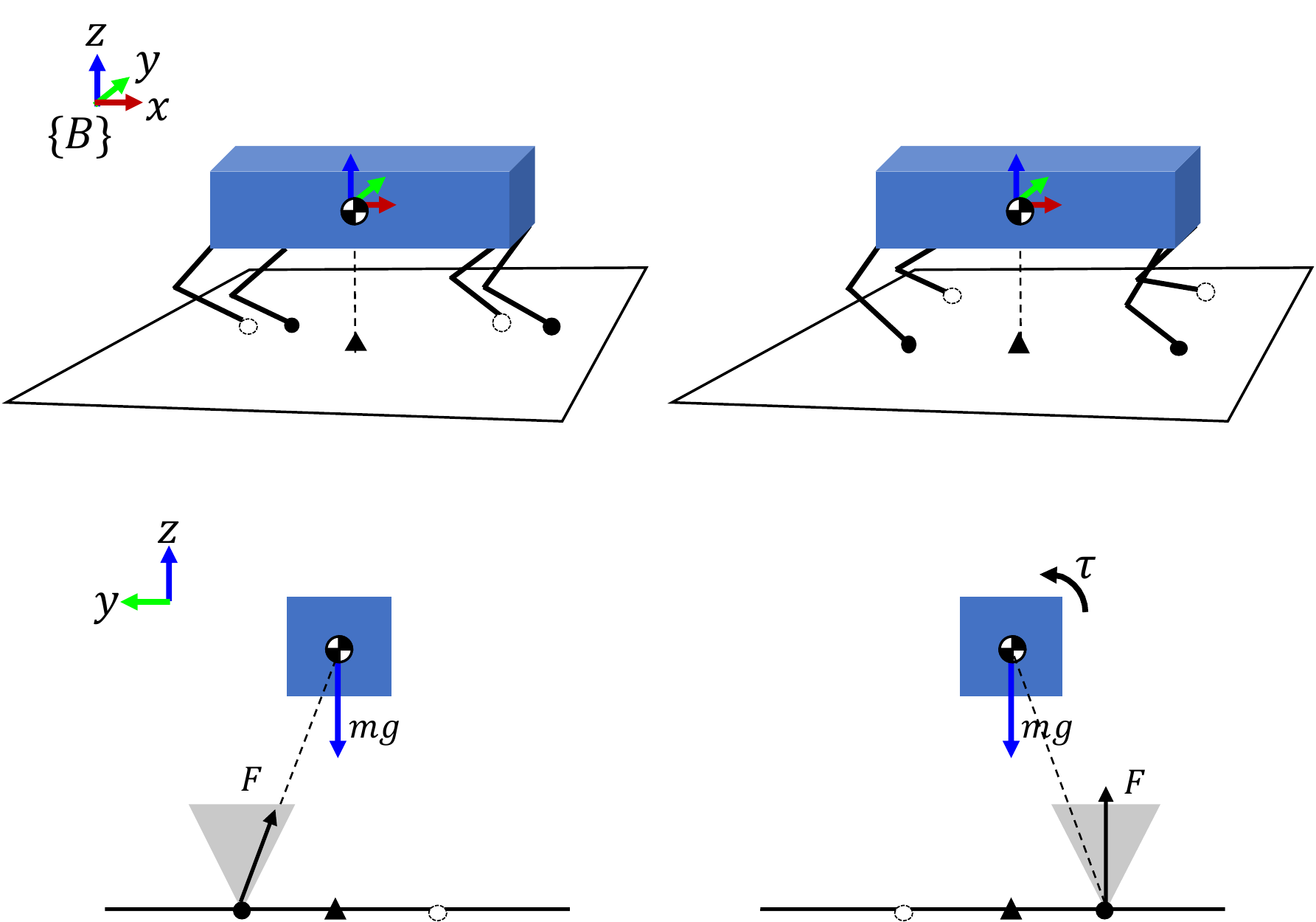}}
	\caption{ \footnotesize   When the quadruped is commanded to pace in place, it is in general impossible to maintain a static balance. Top: 3D illustrations. Bottom: Rear-front view. Due to the contact feature of pace gait, the contact force will either create a linear momentum change (bottom left) or create an angular momentum change (bottom right) or both.}
	\label{fig:pace_issue}
	\vspace{-10px}
\end{figure}

\subsection{Quadrupedal Capturability}

To extend capturability to quadrupeds, it is important to note that most existing capturability analyses explicitly or implicitly rely on the following definition of {\em static balance}.

\begin{definition}[Static Balance]\label{def:stat_bal}
Given a gait signal $\sigmaa(\cdot)$ and the associated discrete footsteps $\w(\cdot)$, a quadrupedal state $\x_b$ is considered in {\em static balance} if it is able to stay at rest at a controlled equilibrium, i.e., if there exists a feasible control signal $\u(\cdot)$ such that
\eq{\label{eq:controlledeq}\dot{\x}_b = f_{\sigmaa(t)}(\x_b,\u(t),\w(t)) = 0, \forall t.}
\end{definition}

However, this definition can be rather restrictive for characterizing stability for quadrupedal locomotion. To see this, consider the example shown in Fig.~\ref{fig:pace_issue} where a quadruped is pacing in place. If the robot is initialized with static balance, then the capture point specified in Definition~\ref{def:capturepoint} is simply given by the projection of the CoM onto the ground. In this case, it is impractical for the quadruped to maintain its support line segments to cover the capture point. Moreover, any feasible contact force corresponding to the pace gait would either result in a change in linear momentum or angular momentum or both, making it difficult to maintain static balance according to Definition~\ref{def:stat_bal}. Therefore, a more general ``balance'' definition incorporating more dynamic behaviors is needed.  

\begin{figure*}[h]
    \centering
    \subfigure[ \footnotesize   Static balance]{
    \includegraphics[width=0.31\linewidth]{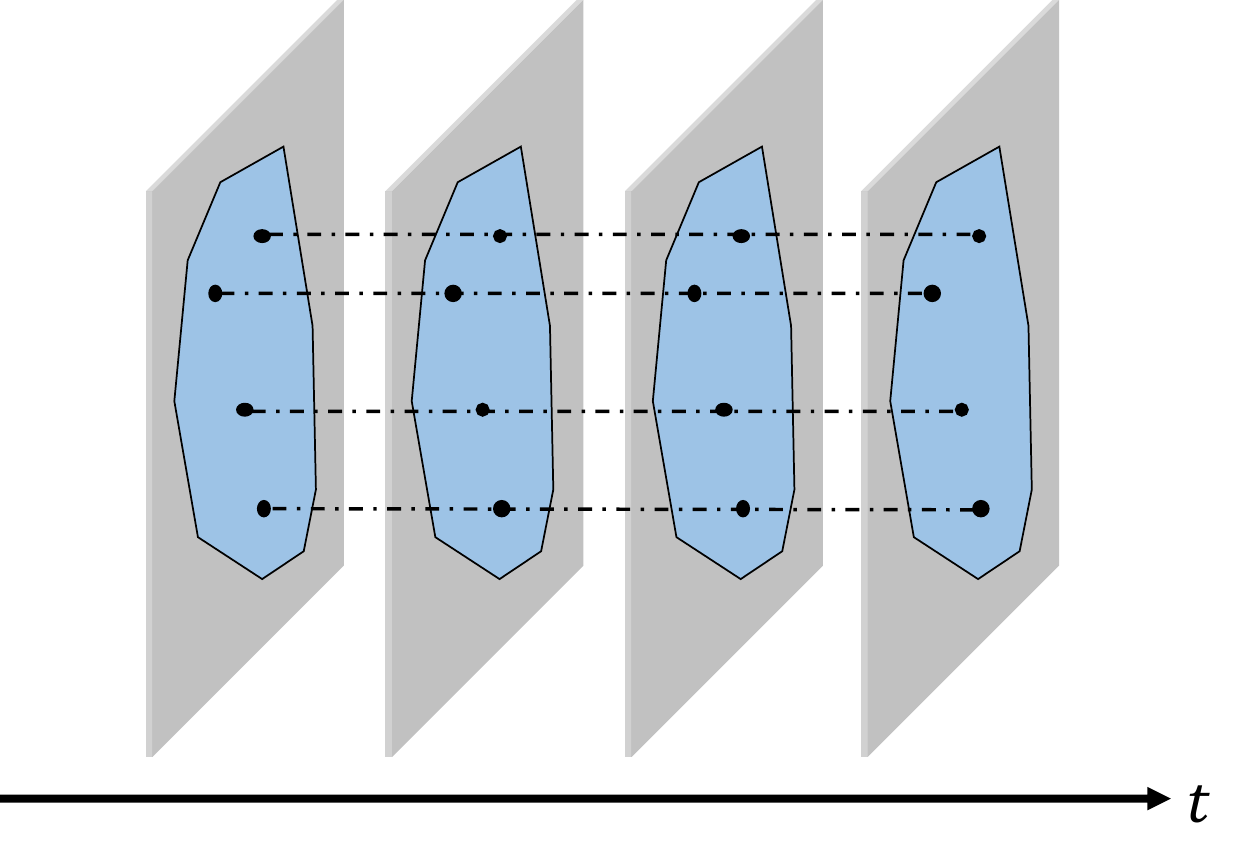}}
    \subfigure[ \footnotesize    Time-invariant dynamic balance]{
    \includegraphics[width=0.31\linewidth]{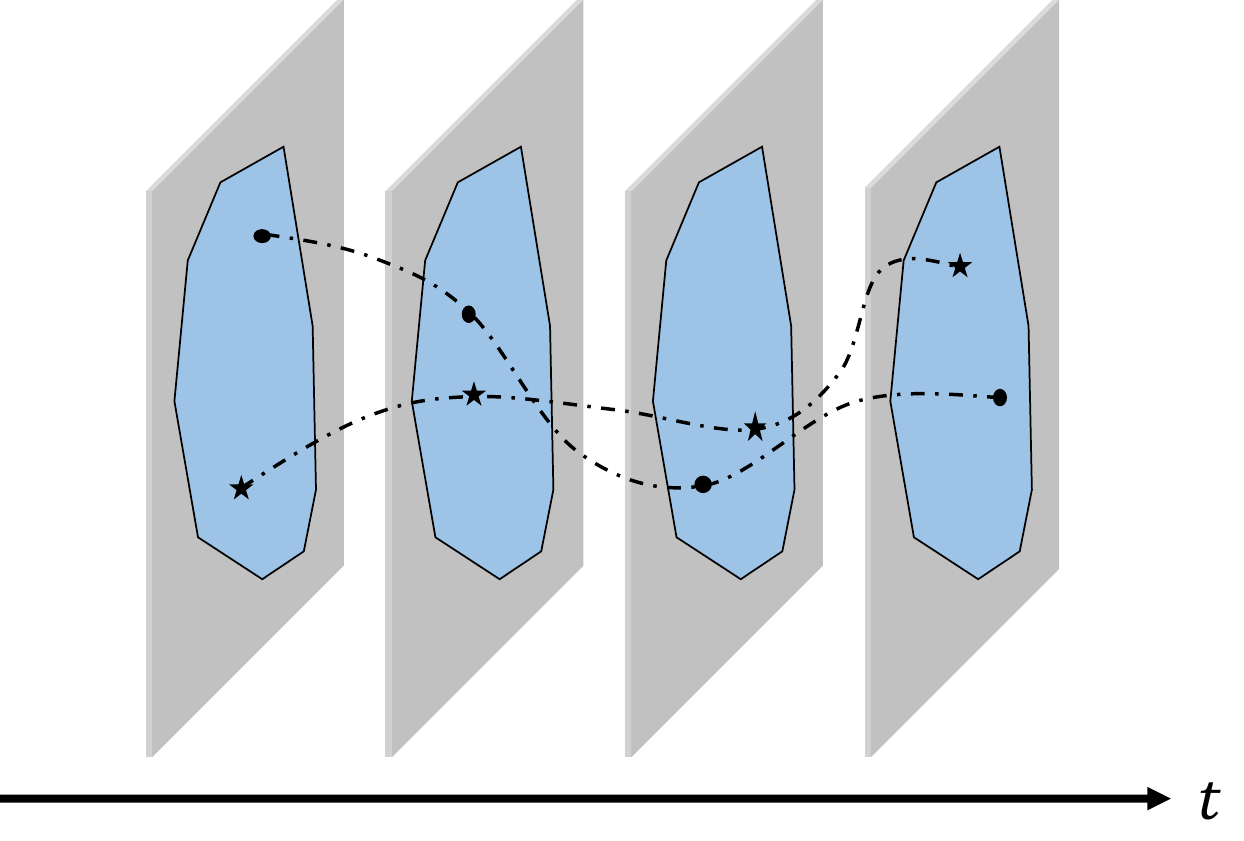}}
    \subfigure[ \footnotesize    Time-varying dynamic balance]{
    \includegraphics[width=0.31\linewidth]{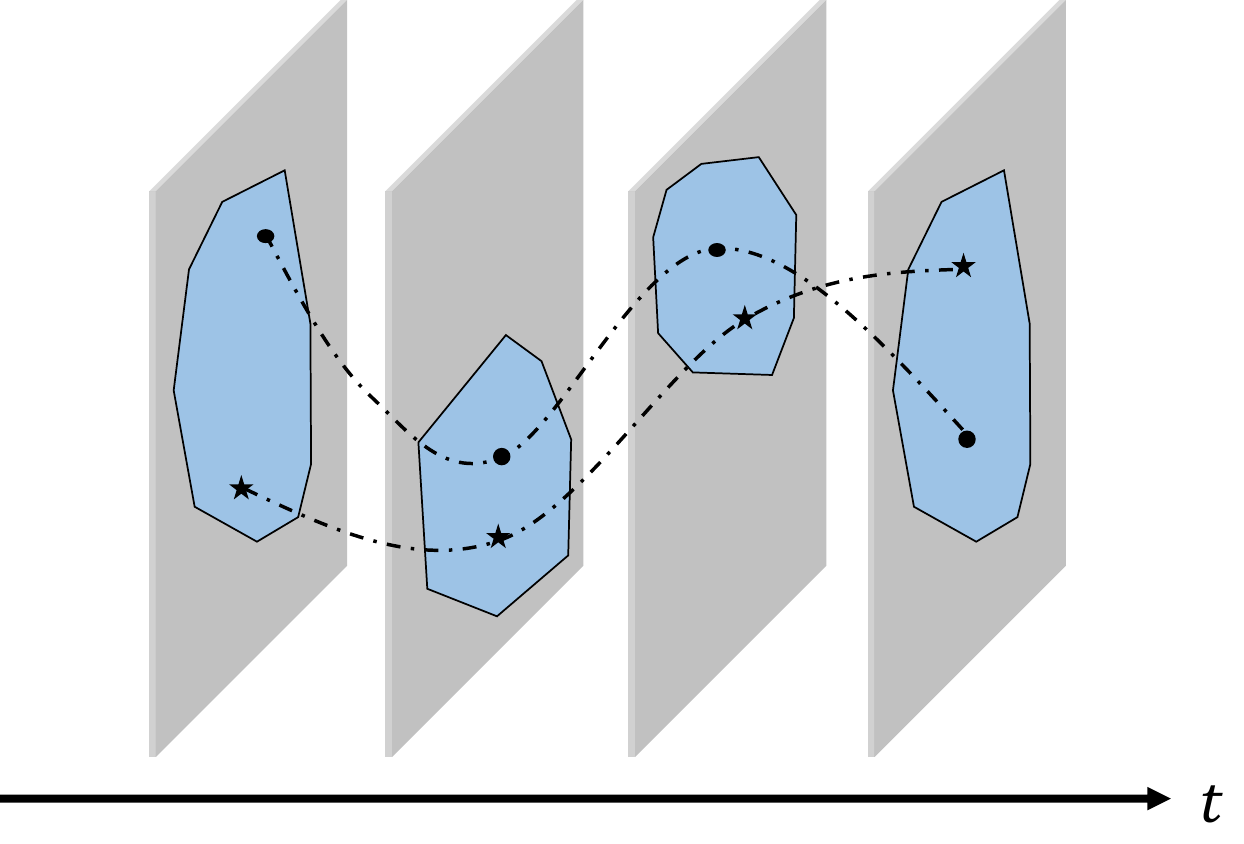}}
    \caption{ \footnotesize    Comparison between different specifications on balance. Grey area represents the state space at each time slice, blue region with black boundary denotes the set (slice of tube) of balanced states, black circles and stars denote states along particular trajectories shown as dash-dotted black lines. }
    \label{fig:balanceset}
    \vspace{-10px}
\end{figure*}

The fundamental idea of our generalization lies in the consideration of \emph{dynamic balance}. Roughly speaking, a quadrupedal state is considered in {\em dynamic balance} if its evolution can be controlled to have relatively small linear and angular momenta with appropriate control indefinitely. This generalization allows the quadruped state to move during balance, but in a controlled and restrictive manner. To mathematically formalize this intuitive description, we introduce the following definition of \emph{dynamic balance}.

\begin{definition}[Dynamic Balance]\label{def:balanceset}
Consider a gait signal $\sigmaa(\cdot)$, a fixed footstep sequence $\w(\cdot)$ over time, and a prescribed target region $\X_T$ where the balanced state must reside. A quadrupedal state $\x_{db}$ is considered in {\em dynamic balance} if it falls inside the set of dynamically balanced states $\B \subset \X_T$ defined as follows. For any $\x_\text{ini} \in \B$, there exists a feasible control signal $\u(\cdot)$ such that 
\eq{\forall t\ge t_\text{ini},\  \phi(t;t_\text{ini},\x_\text{ini},\u(\cdot),\w(\cdot))\in \B, } where $\phi(t;t_\text{ini},\x_\text{ini},\u(\cdot),\w(\cdot))$ is the state trajectory of the dynamics~\eqref{eq:quad_sys} starting from $\x_\text{ini}$ at $t_\text{ini}$ with gait signal $\sigmaa(\cdot)$, footsteps $\w(\cdot)$ and input signal $\u(\cdot)$.
\end{definition}

\begin{remark}\label{rmk:dyn_bs}
It is worth mentioning that the set of all dynamically balanced states itself is time-invariant (i.e., does not change over time). However, there is a fundamental difference between this time-invariant set and the set of all states that are in static balance for classical capturability (Definition~\ref{def:stat_bal}). Specifically, static balance requires not only the set to be time-invariant, but also the quadrupedal state to be not changing over time. On the other hand, any particular quadrupedal state inside the set of dynamic balance may change over time even though the set itself remains time-invariant. Fig.~\ref{fig:balanceset} provides a graphical illustration on the above mentioned difference.
\end{remark}

A single time-invariant set describing dynamic balance is overly restrictive for characterizing the concept of balancing for dynamic quadrupedal locomotion. For example, the robot's state may leave the set at some point during the gait and then re-enter it later on. To account for this issue, we introduce a more complete concept that considers a collection of sets over time (i.e., tube) for characterizing dynamic balance. 

Denoting by $T_G$ the gait period, we consider the tube of dynamically balanced states to be a collection of sets over time $\Bx = \{\B_t\}_{t=0}^{T_G}$. Each set $\B_t$ in the tube contains the quadrupedal states that can be maintained inside the tube when starting at time $t$. The formal definition of this tube is given below.

\begin{definition}[Tube of Dynamically Balanced States]\label{def:tdbs}
Given a gait period $T_G$, gait signal $\sigmaa(\cdot)$, the footsteps $\w(\cdot)$ over time, and the target region $\X_T$, the collection of sets $\Bx=\{\B_t\}_{t=0}^{T_G}$ is called a tube of dynamically balanced states if
\begin{enumerate}
    \item Every slice of the tube belongs to the target region, i.e., $\B_t\in \X_T$ for all $t$
    \item There exists admissible control to keep the trajectory inside the tube if the initial state is inside the tube, i.e.,$\exists \u(\cdot)$ over $[t_\text{ini}, T_G]$, such that $\phi(t;t_\text{ini},\x_\text{ini},\u(\cdot),\w(\cdot))\in \B_t, \ \forall t\in [t_\text{ini},T_G]$ if $(t_\text{ini},\x_\text{ini})  \in [0,T_G]\times \B_{t_\text{ini}} $, 
    \item The tube is periodic, i.e., $\B_0 = \B_{T_G}$ 
\end{enumerate}

 \end{definition}
 
\begin{remark}\label{rmk:controlinvariance}
A graphical illustration on the subtle difference between the tube of dynamically balanced states and the set of dynamically balanced states is provided in Fig.~\ref{fig:balanceset}. From the set-theoretic perspective, the concept of dynamic balance is equivalent to the control invariance property of the underlying quadrupedal dynamics~\cite{Blanchini1999,Blanchini2008}.  
\end{remark}

In the sequel, we will use the term {\em tube} to characterize dynamic balance in a unified way and will simply call a quadrupedal state $\x$ in dynamic balance (or belongs to the tube), if there exists a $t\in [0,T_G]$ such that $\x\in \B_{t}$. With all the above setups, quadrupedal capturability with dynamic balance can be formally defined as follows. 

\begin{definition}[Quadrupedal Capturability]
Given the gait signal $\sigmaa(\cdot)$, a quadrupedal state $\x$ is capturable if there exists a sequence of feasible footsteps over time $\w(\cdot)$ and feasible continuous input $\u(\cdot)$ such that the quadrupedal state eventually achieves dynamic balance, i.e.,
\eq{\exists t>0, \text{ such that } \phi(t;0,\x,\u(\cdot),\w(\cdot)) \in \B_t. }
In the sequel, we denote by $\Ca(t;\Bx)$ the set of capturable states with fixed time horizon $t$ and denote by $\Cs(\Bx)_T = \left\{ \Ca(t;\Bx) \right\}_{t=0}^T$ the collection of sets (tube) of capturable states over the entire horizon. 
\end{definition}

So far, we have generalized conventional capturability to quadrupedal scenarios via considering dynamic balance. Such a generalization accounts for the influences on capturable states attributed to different gait patterns and explicitly characterizes how the capturable states evolve over time. These two new features allow capturability analysis for quadrupedal locomotion with various time-fixed gaits.

\section{Analysis of Quadrupedal Capturability}\label{sec:analysis_QC}

In this section, we further show how quadrupedal capturability discussed in the above section can be analyzed with the concrete 3D-LIP model in Example~\ref{exp:3dlip}.

Given any periodic gait with period $T_G$ and the associated footsteps denoted by $\{\w_{\sigmaa_k} \}_{k=1}^{T_G}$, we take the 3D-LIP model as an example and consider the following discrete-time version of its dynamics    \vspace{-10px}
\eq{\x_{k+1} = \Aa\x_k + \Bb \w_{\sigmaa_k} \u_k, \ \u_k\in \U \label{eq:dlip}\vspace{-5px}}
where $\Aa = e^{A_\text{LIP} \dt}$ and $\Bb = A_\text{LIP}^{-1}(e^{A_\text{LIP} \dt} - I)B_\text{LIP}$ are the discretized system matrices obtained from the continuous-time LIP model~\eqref{eq:AB_LIP}.

The first step in quadrupedal capturability analysis is to construct the tube of states $\Bx$ that are in dynamic balance. Construction of such a tube relies heavily on the fact that it is a control-invariant tube of the underlying dynamics in essence.

Explicit model predictive control (EMPC) is an important topic in the MPC community, which focuses on solving an optimal control (MPC) problem offline to obtain an explicitly function representing the optimal control law. Moreover, solving the EMPC problem leads to an optimal cost function (a.k.a. value function) that encodes information about the set of all feasible states as well as the quality of a particularly given state. We leverage these two features associated with EMPC to develop computationally tractable tools for analyzing quadrupedal capturability with 3D-LIP model.

To analyze capturability for quadrupedal locomotion, we consider the target region $\X_T$ as a orthotope (i.e., hyper-rectangle) for simplicity, given by\vspace{-5px}
\eq{\ald{ \X_T &= \Big\{ \x~|~ c_x \in [\underline{c}_x, \overline{c}_x], c_y \in [\underline{c}_y, \overline{c}_y], \dot{c}_x \in [\underline{\dot{c}}_x, \overline{\dot{c}}_x],\\ & \qquad \qquad   \dot{c}_y \in [\underline{\dot{c}}_y, \overline{\dot{c}}_y], \Big\} = \left\{\x~|~ H^\top\x\le h^\top\right\}.\label{eq:stateset}} } 

Given this target region, optimal solution to the following problem characterizes the tube of dynamically balanced states for the underlying model~\eqref{eq:dlip}

\subeq{\label{eq:empc_cit}\al{ V^N_\B(\x_0;t) =  \\ \min\limits_{\{\u_k\}_{k = t}^{N+t-1} }  & \left( \sum\limits_{k = t}^{N+t-1} \norm{\x_k}^2_\Qc    + \norm{\u_k}^2_\Rc \right)+ \norm{\x_N}^2_{\Qc_F} \\  \st \quad  & \x_{k+1} \!= \!\Aa\x_k\!+\!\Bb\w_{\sigmaa_k} \u_k,  k\!=\!t\!:\!N\!+\!t\!-\!1, \label{eq:empc_dyn} \\ & \x_k\! \in\!  \X_T ,\ \u_k\! \in \! \U,\qquad \quad    k\!=\!t\!:\!N\!+\!t\!-\!1, \label{eq:empc_state_input} \\&  \x_N \!\in\! \X_T. \label{eq:empc_ter}   }    \vspace{-10px}
}

Conventionally, solution to the above optimal control problem refers to the optimal control sequence (or law) $\{\u^*_k\}_{k=0}^{N-1}$. For the purpose of capturability analysis, we take the EMPC perspective on the optimal control problem and think of solution to the above optimal control problem as the associated value function $V^N_\B(\x;t)$ over $\x\in \X$. Such a value function encodes information about all feasible initial states to the optimal control problem as the set $\X^N_{\B,t} = \{\x ~|~ V^N_\B(\x;t) < \infty\}$. More importantly, this set approximates a slice of the desired tube of dynamically balanced states $\B_t$. If $N$ is large enough such that $V^N_\B(\x;t) = V^{N+1}_\B(\x;t)$ holds, then the set $\X^N_{\B,t}$ gives the exact result. 

\begin{remark}\label{rmk:inf_mpc}
In practice, to obtain the overall tube of dynamically balanced states $\Bx$, it is common to first solve the above EMPC problem with a relatively large horizon $N$ and then analyze solution with selected tolerance level. Examples showing the balanced states solved from the EMPC problems are given later in this section (Section~\ref{sec:analysis_QC_res}).

\end{remark}

Once the sets of balanced states are obtained, they are used as the terminal constraints in the problem of computing capturable states. Then, the tube of capturable states can be computed analogously, via working with the following optimal control (EMPC) problem.  
\subeq{\label{eq:empc_brs}\al{  V^N_\Ca(\x_0;t)  =\\ \min\limits_{\{\u_k\}_{k = t}^{N+t-1} }  & \left( \sum\limits_{k = t}^{N+t-1} \norm{\x_k}^2_\Qc    + \norm{\u_k}^2_\Rc \right)+ \norm{\x_N}^2_{\Qc_F} \\  \st \quad  & \x_{k+1} \!= \!\Aa\x_k\!+\!\Bb\w_{\sigmaa_k} \u_k,  k\!=\!t\!:\!N\!+\!t\!-\!1, \label{eq:empc_dyn} \\ & \x_k\! \in\!  \X,\ \u_k\! \in \! \U,\qquad \quad    k\!=\!t\!:\!N\!+\!t\!-\!1,   \label{eq:empc_brs_kunning}\\& \x_N \in \B_t.   \label{eq:empc_brs_terminal}}}

The key differences between~\eqref{eq:empc_cit} and~\eqref{eq:empc_brs} lie in the state constraint in~\eqref{eq:empc_brs_kunning} and the terminal state constraint in~\eqref{eq:empc_brs_terminal}. When determining the tube of dynamically balanced states, all states need to be constrained within the terminal region in which the CoM's linear momentum is limited. Hence, the running and terminal state constraint sets both are $\X_T$. On the other hand, the running state constraint imposed on the evolution of state trajectory is simply the general state constraint ($\x_k \in \X$), while the trajectory eventually needs to enter the tube of balanced states, leading to the constraints in~\eqref{eq:empc_brs}.

Thanks to the linear-quadratic nature of the optimal control problems, the EMPC problems~\eqref{eq:empc_cit} and~\eqref{eq:empc_brs} can be efficiently solved offline with, e.g., multi-parametric quadratic programming (mpQP) based approaches~\cite{Bemporad2002,Baotic2005} or dynamic programming based approaches~\cite{Borrelli2005}. Moreover, the resulting value functions to the above two EMPC problems encode performance properties of balance and capturability that are summarized in the following theorem. 

\begin{theorem}\label{thm:empc_pro}
Given a fixed horizon $T$, feasible sets solved from the EMPC problems~\eqref{eq:empc_cit} and~\eqref{eq:empc_brs} are both polytopic with the following representation
\subeq{\label{eq:critical_region}\al{\label{eq:tube_balance}\X_{\B,i}^N & = \left\{\x ~|~ V_{\B,i}^N(\x)< \infty \right\} = \left\{\x ~|~ \Hh^{\Bx}_i \x \le \h^{\Bx}_i \right\}, \\ \X_{\Ca,i}^N  & = \left\{\x ~|~ V_{\Ca,i}^N(\x)< \infty \right\} = \left\{\x ~|~ \Hh^\Ca_i \x \le \h^\Ca_i \right\}\label{eq:reachset}}}
for some $\Hh^{\Bx}_i$, $\h^{\Bx}_i$, $\Hh^\Ca_i$ and $\h^\Ca_i$ obtained from the EMPC computation. 

Furthermore, both value functions $V_{\B,i}^N(\x)$ and $V_{\Ca,i}^N(\x)$ are continuous, convex and piecewise quadratic over polyhedral partitions of $\X_{\B,i}^N$ and $\X_{\Ca,i}^N$, i.e.,
\eq{\label{eq:pwq}V_{B,i}^N(\x) =  \ar{{c} \x\\ 1}^\top  \Pf_j \ar{{c} \x\\ 1}, \text{ if } \Hh_{i,j}^{\Bx} \ar{{c} \x\\ 1} \le \h_{i,j}^{\Bx},} 
\eq{\label{eq:pwq2}V_{\Ca,i}^N (\x) = \ar{{c} \x\\ 1}^\top \Pf_j \ar{{c} \x\\ 1}, \text{ if } \Hh_{i,j}^\Ca \ar{{c} \x\\ 1} \le \h_{i,j}^\Ca. } for some $ \Hh_{i,j}^{\Bx}$, $\h_{i,j}^{\Bx}$, $\Hh_{i,j}^\Ca $ and $\h_{i,j}^\Ca$ obtained from the EMPC computation. 
\end{theorem}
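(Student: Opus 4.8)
The plan is to recognize Theorem~\ref{thm:empc_pro} as a direct instantiation of the now-classical structural results on multi-parametric quadratic programming (mpQP) applied to constrained linear-quadratic optimal control, as developed in~\cite{Bemporad2002,Baotic2005,Borrelli2005}. First I would observe that both EMPC problems~\eqref{eq:empc_cit} and~\eqref{eq:empc_brs} have the same structure: a quadratic cost that is convex (since $\Qc$, $\Rc$, $\Qc_F$ are positive (semi-)definite) in the decision variables $\{\u_k\}$, subject to affine equality constraints (the dynamics~\eqref{eq:empc_dyn}) and affine inequality constraints (the polytopic running and terminal constraints~\eqref{eq:empc_state_input}, \eqref{eq:empc_ter} or~\eqref{eq:empc_brs_kunning}, \eqref{eq:empc_brs_terminal}; note $\X_T$, $\X$, $\U$, and each $\B_t$ are polytopes, the last by induction on the construction in Remark~\ref{rmk:inf_mpc}). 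The initial state $\x_0$ enters both the cost and the constraints affinely, so the problem is a genuine mpQP with parameter $\x_0$.

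The core of the proof is then a three-step argument. Step one: eliminate the state variables using the dynamics, writing each $\x_k$ as an affine function of $\x_0$ and the stacked input vector $\mathbf{U} = [\u_t^\top,\ldots,\u_{N+t-1}^\top]^\top$; substituting into cost and constraints yields a standard-form mpQP, $\min_{\mathbf{U}} \tfrac12 \mathbf{U}^\top \mathbf{F} \mathbf{U} + \x_0^\top \mathbf{E}^\top \mathbf{U}$ subject to $\mathbf{G}\mathbf{U} \le \mathbf{W} + \mathbf{S}\x_0$, with $\mathbf{F}\succ 0$ (or after a standard change of variables removing the cross term). Step two: invoke the mpQP solution theorem~\cite{Bemporad2002} — the set of feasible parameters $\{\x_0 : \exists\,\mathbf{U},\ \mathbf{G}\mathbf{U}\le \mathbf{W}+\mathbf{S}\x_0\}$ is the projection of a polyhedron and hence itself a polyhedron (and here bounded, hence a polytope, because it is contained in the bounded orthotope $\X_T$ or in $\X$ assumed bounded), giving the outer representations in~\eqref{eq:critical_region}; moreover this feasible set is partitioned into finitely many full-dimensional polyhedral ``critical regions,'' on each of which the active constraint set is fixed, the optimizer $\mathbf{U}^*(\x_0)$ is affine in $\x_0$, and consequently the value function is quadratic in $\x_0$. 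Step three: assemble these pieces — continuity and convexity of the value function follow from the general fact that the optimal value of a convex parametric program with the above structure is a convex function of the parameter, and its restriction to each critical region is the quadratic form obtained by back-substituting the affine optimizer, which is exactly the piecewise-quadratic representation~\eqref{eq:pwq}, \eqref{eq:pwq2} after homogenizing with the constant ``$1$'' coordinate. The finiteness of the partition comes from there being only finitely many possible active sets.

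For the capturability problem~\eqref{eq:empc_brs} one extra remark is needed: the terminal set $\B_t$ must itself be polytopic for the argument to apply. This holds because, as described around Remark~\ref{rmk:inf_mpc}, $\B_t$ is obtained as a (sub- or super-level) feasible set from the EMPC problem~\eqref{eq:empc_cit} — which by the first part of this very theorem is polytopic — or, in the iterative control-invariant-tube construction, as an intersection of polytopes with preimages of polytopes under the affine map~\eqref{eq:empc_dyn}, which is again polytopic; finiteness of the iteration (or the chosen horizon/tolerance truncation) keeps the facet count finite. So I would state this polytopic-terminal-set fact as a short lemma or inline observation preceding the main argument.

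The main obstacle is not any single deep step — each ingredient is standard — but rather making the reduction genuinely rigorous in the switched/time-varying setting here: the matrices $\Bb\w_{\sigmaa_k}$ depend on the stage $k$ through the gait, so the elimination in step one produces time-varying block-Toeplitz-like matrices rather than the clean constant-dynamics case of the textbook references, and one must check that positive definiteness of $\mathbf{F}$ (or the fallback to lower semicontinuity when only $\mathbf{F}\succeq 0$, e.g. if $\Rc$ is merely PSD) still holds so that the optimizer selection and hence the affine/quadratic structure survive. A secondary subtlety is degenerate critical regions (lower-dimensional pieces, primal/dual degeneracy) where the active set is non-unique; the clean statement is recovered by noting that continuity and convexity of $V$ are insensitive to how ties are broken and that the closures of the full-dimensional critical regions cover the feasible set. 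I would handle both by citing the degeneracy-robust treatments in~\cite{Baotic2005,Borrelli2005} and remarking that the time-varying $\Bb\w_{\sigmaa_k}$ enters only through the data matrices $\mathbf{F},\mathbf{E},\mathbf{G},\mathbf{W},\mathbf{S}$, leaving the parametric analysis verbatim.
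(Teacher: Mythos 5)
Your proposal is correct and follows essentially the same route as the paper, which simply observes that the theorem ``follows directly from standard results in EMPC literature''~\cite{Bemporad2002,Borrelli2005}; your write-up is a faithful elaboration of that standard mpQP argument, with the appropriate extra checks (polytopicity of the terminal sets $\B_t$ and the stage-dependence of $\Bb\w_{\sigmaa_k}$) that the paper leaves implicit.
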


This theorem follows directly from standard results in EMPC literature~\cite{Bemporad2002,Borrelli2005}. More importantly, this property authorizes us to incorporate capturability performance analysis into the subsequent push recovery controller design. Detailed numerical results regarding the computation will be provided later in Section~\ref{sec:validation}.

\subsection{A Set-theoretic Perspective on EMPC based Capturability Analysis}

In this subsection, we provide additional discussions on the underlying tools used within EMPC computation from a set-theoretic perspective. Such a viewpoint explicitly reveals the underlying operations on the sets of interest. 

Overall, the main operation used in analyzing backward reachability is the following $\Pre$ operator on sets:

\eqn{\Pre_k(\X) = \left\{\x~|~ \exists \u\in \U, \text{ such that } \Aa\x+\Bb\sigmaa_k \u\in \X. \right\}} Intuitively speaking, the $\Pre$ operator constructs the set of system states that can be driven to set $\X$ in a single step using a feasible control input under the given linear dynamics. Suppose the set $\X$ is polyhedral, then the $\Pre$ operator admits a closed-form representation with the Minkowski addition and the linear mapping with polyhedra. The Minkowski addition of two generic sets $\P$ and $\Q$, commonly denoted by $\oplus$, is given by 
\eqn{\P \oplus \Q = \left\{x = p+q ~|~ p\in \P, \ q \in \Q \right\}.} Moreover, for a polyhedral set $\P$ with representation $\P = \{x~|~ H x\le h\}$, we define two linear mappings $A\circ \P$ and $\P\circ A$ in the following form
\eqn{\ald{ \P \circ A & =  \{x~|~ HAx\le h \}, \\ A\circ \P & = \{Ax~| ~x\in \P \}.}}
With the above two operations, it has been shown in~\cite{Borrelli2017} that 
\eq{\label{eq:pre} \Pre_k(\X) = \Big(\X \oplus (-\Bb\sigmaa_k \circ  \U )\Big) \circ \Aa  }
  
Now, computing the tube of dynamically balanced states and capturable states can be simply achieved by sequentially applying the $\Pre_k$ operator backward in time. Due to the periodic nature of the underlying switched LIP dynamics, we further consider the following $\PRE$ operator that encodes how the set of states evolves backward over periods 
\eq{\label{eq:bigpre} \PRE =  \Pre_1   \circ \ldots \circ \Pre_{T_G}.} 

Fig.~\ref{fig:pre} gives a graphical illustration on how the $\Pre_k$ and $\PRE$ operators behave. With these operators, the main procedures for constructing the desired sets are summarized below in Algorithm~\ref{alg:cit}.

\begin{figure}[tp!]
	\centering
	{\includegraphics[width=0.5\linewidth]{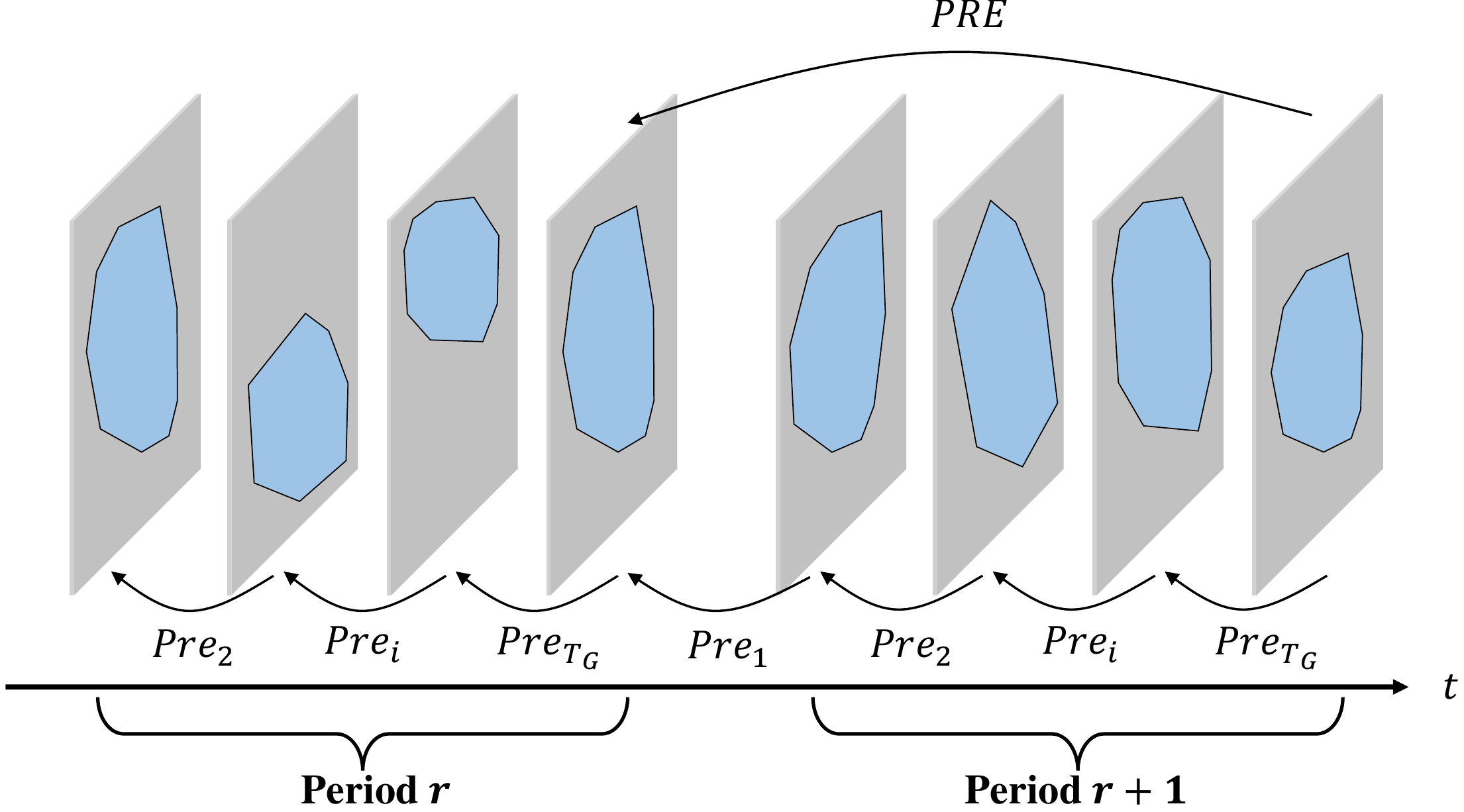}}
	\caption{ \footnotesize   Illustration on the $\Pre_k$ and $\PRE$ operators.  }
	\label{fig:pre}
	\vspace{-10px}
\end{figure}

By adopting the $\PRE$ operator, the above algorithm can be viewed as being applied to a linear time-invariant system characterizing the periodic state evolution of the original system. To this end, let $\v = \ar{{cccc} \u_{T_G}^T &\u_1^T  & \ldots & \u_{T_G-1}^T}^T$ be the stacked inputs taking values in $\U^{T_G}$, and let 
\eq{\label{eq:peri_evo} \ald{ \x^+ &= \Aa^{T_G}\x + [\Aa^{T_G-1}\Bb\sigmaa_{T_G}\   \ldots\  \Bb\sigmaa_{T_G-1}] \v \\ & = \bar{\Aa}\x +\bar{\Bb} \v}} be the state evolution between two consecutive periods. With these notations, the $\PRE$ operator can be understood as 
\eqn{\PRE(\X) = \{\x~|~\exists \v\in \U^{T_G}, \text{ s.t. }\x^+ = \bar{\Aa}\x +\bar{\Bb}\v\in \X \}.} 
Then the following theorem summarizes the convergence property of the above Algorithm. 

\begin{theorem}\label{thm:pre_con}
If there exists a positive integer $K$ such that for all $k\ge K$ the sets 
\eq{\label{eq:gamma}\ald{\Gamma_k(\X) & = \Gamma(\PRE^{k-1}(\X)) \text{ with }\\
\Gamma(\X) &= \Big\{(\x,\v)~|~ \x\in\X, \v\in \U^{T_G},  \bar{\Aa}\x +\bar{\Bb}\v\in \X  \Big\}}}
are nonempty, then 
\eq{\emptyset\neq \PRE^*(\X) = \bigcap_{k=1}^\infty \PRE^k(\X). }
\end{theorem}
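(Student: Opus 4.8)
The plan is to recognize that the sequence $\{\PRE^k(\X)\}_{k\ge 1}$ is a nested (decreasing) family of sets whose intersection $\PRE^*(\X)$ is the largest control-invariant set contained in $\X$ for the periodic LTI system $\x^+=\bar\Aa\x+\bar\Bb\v$, and to show that under the stated nonemptiness hypothesis this intersection is itself nonempty. First I would establish monotonicity: since $\PRE(\X)$ is, by definition, the set of states from which one step of the periodic dynamics can be steered back into $\X$, and since $\X\supseteq\PRE(\X)$ is \emph{not} automatic, the correct starting point is to iterate $\PRE$ on $\X\cap\PRE(\X)$ — or, as is standard, to observe that $\PRE^{k+1}(\X)\subseteq\PRE^k(\X)$ whenever $\PRE^2(\X)\subseteq\PRE(\X)$, which follows once one checks $\PRE(\X)\subseteq\X$ can be arranged (or, more cleanly, replace $\X$ by the standard construction $\Omega_0=\X$, $\Omega_{k+1}=\Omega_k\cap\PRE(\Omega_k)$; the theorem's $\PRE^k(\X)$ should be read in this monotone sense, as is conventional in the invariant-set literature~\cite{Blanchini1999,Blanchini2008,Borrelli2017}). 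With monotonicity in hand, $\{\PRE^k(\X)\}$ is a decreasing sequence of polyhedra (each $\PRE$ step preserves polyhedrality by~\eqref{eq:pre} applied $T_G$ times, composed with projection along $\v\in\U^{T_G}$), so the intersection $\PRE^*(\X)=\bigcap_{k\ge1}\PRE^k(\X)$ is well-defined and closed.

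The substantive step is to prove $\PRE^*(\X)\neq\emptyset$. Here I would use the hypothesis directly: the set $\Gamma(\X)$ in~\eqref{eq:gamma} is the graph of admissible (state, periodic-input) pairs that keep the state in $\X$ after one period, and $\Gamma_k(\X)=\Gamma(\PRE^{k-1}(\X))$ is its analogue one period deeper. The projection of $\Gamma_k(\X)$ onto the $\x$-coordinate is exactly $\PRE^k(\X)$ (intersected appropriately with $\PRE^{k-1}(\X)$). The assumption that $\Gamma_k(\X)\neq\emptyset$ for all $k\ge K$ therefore says $\PRE^k(\X)\neq\emptyset$ for all $k$. To pass from "each $\PRE^k(\X)$ nonempty" to "the intersection nonempty" I would invoke a compactness argument: the target region $\X_T$ and the input sets $\U$ (hence $\U^{T_G}$) are bounded polytopes, so each $\PRE^k(\X)$ is a nonempty compact set, and a decreasing sequence of nonempty compact sets has nonempty intersection (finite-intersection property / Cantor's intersection theorem). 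This yields $\emptyset\neq\PRE^*(\X)=\bigcap_{k=1}^\infty\PRE^k(\X)$.

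The main obstacle I anticipate is \textbf{reconciling the statement's literal $\PRE^k(\X)$ with the monotone construction}: as written, $\PRE$ need not be contractive on an arbitrary $\X$, so $\{\PRE^k(\X)\}$ need not be nested unless one either (i) assumes $\X$ is already a subset of its own one-period predecessor set, or (ii) interprets the iteration as the standard fixed-point recursion $\Omega_{k+1}=\Omega_k\cap\PRE(\Omega_k)$. I would resolve this by adopting interpretation (ii) and noting that it changes nothing in the EMPC-based Algorithm~\ref{alg:cit}, since the feasible-set iterates produced there are automatically nested (adding a time step can only shrink the feasible set). The secondary technical point is verifying that each $\PRE$ step maps polytopes to polytopes — boundedness is needed so the Minkowski sums and linear preimages in~\eqref{eq:pre}, together with the projection eliminating $\v$, stay within the class of compact polyhedra; this follows from the one-step formula~\eqref{eq:pre}, Fourier–Motzkin elimination for the projection, and boundedness of $\X_T$ and $\U$. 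Once these bookkeeping issues are settled, the convergence claim is a direct consequence of nestedness plus compactness.
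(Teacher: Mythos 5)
Your argument is correct and, in substance, identical to the paper's: the paper simply outsources the key step to a lemma of Bertsekas (1972) --- a decreasing family of nonempty compact sets $\Cx_n$ has nonempty intersection equal to $\Rx^*$ --- and then spends its own effort verifying that $\Gamma(\X)$ and $\Gamma_k(\X)$ are compact (closedness from continuity of the linear map $[\bar\Aa\ \ \bar\Bb]$, boundedness from compactness of $\X$ and $\U^{T_G}$). You prove that cited lemma directly via monotonicity plus Cantor's intersection theorem, which makes the proof self-contained but adds no new idea. One point in your favor: the monotonicity issue you flag is real, and your resolution (ii) is exactly what the paper does implicitly, since Bertsekas's operator is $\Rx(X)=\{x:\exists u,\ f(x,u)\in X\}\cap X$, i.e., the iteration the paper actually invokes already intersects with the current set, so the literal $\PRE^k(\X)$ in the theorem statement should indeed be read in that monotone sense. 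Your compactness bookkeeping (polyhedrality preserved by the Minkowski sum, linear preimage, and projection in \eqref{eq:pre}; boundedness inherited from $\X_T$ and $\U$) matches the paper's Lemma on compactness of $\Gamma_k(\X)$, so there is no gap.
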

Proof of this theorem follows simply from the seminal works such as~\cite{Bertsekas1972}, and is provided in Appendix~\ref{app:pre_con}.

\begin{remark}\label{rmk:convergence}
A sufficient condition ensuring the conditions in Theorem~\ref{thm:pre_con} is the state constraint set $\X$ and admissible input set $\U$ are polyhedral, bounded, and the system dynamics are continuous~\cite{Borrelli2017}. For our problem, all these requirements are satisfied. 
\end{remark}
\begin{algorithm}[bp!]
  \caption{ Computation of tube of balanced states}\label{alg:cit}
  \KwIn{$\Aa$, $\Bb$, $ \U $, $\X_T$, $N$;}
  \KwOut{$\Bx$.} 
   $\Omega_0 \leftarrow \X_T$, and $k \leftarrow 0$\;
  \While{ $\Omega_{k}\neq \Omega_{k-1}$ and $k\le N-1$}{
   $\Omega_{k+1} = \PRE(\Omega_k)\bigcap \Omega_k$\;
   $k \leftarrow k+1$\;
  }
   $\B_{T_G} = \Omega_{k}$\;
   $\B_t = \Pre_{t+1}\circ\ldots \circ \Pre_{T_G} (\B_{T_G})$ for all $t = 1,\ldots, T_G-1$.
\end{algorithm}
Similar to the EMPC based approach for computing the tube of capturable states, if the footsteps $\w$ are given, computation of the tube of capturable states can be accomplished with previous set-operation based approach as well. Specifically, this approach gives an inner-approximation of the exact set of capturable states due to the restrictions on the footsteps. Algorithm~\ref{alg:brs} describes the main procedures of generating the set of all feasible initial conditions associated with the EMPC problem~\eqref{eq:empc_dyn}.

\subsection{Results of Quadrupedal Capturability Analysis}\label{sec:analysis_QC_res}
We apply the EMPC-based capturability analysis to a switched-LIP model corresponding to the MIT Mini-Cheetah robot in this subsection. To be compatible with the hardware platform, the switched-LIP model parameters are chosen as follows. The gait cycle is set to be $\SI{0.3}{s}$ with stance/swing duty cycle being $0.5$ and the constant CoM height is set to be $\SI{0.29}{m}$, which is the nominal height of the robot's floating base. Within a complete gait, we consider six discretizations with time-step $\SI{0.05}{s}$. The target region adopted in the capturability analysis $\X_T$ in~\eqref{eq:stateset} is considered to be 
\eqn{\ald{ \X_T & = \Big\{\x ~|~ c_x\in[-\SI{0.19}{m},\SI{0.19}{m}], \ c_y\in [-\SI{0.11}{m},\SI{0.11}{m}],\\ & \qquad\ \  \dot{c}_x \in [-\SI{0.2}{m \per s},\SI{0.2}{m \per s}], \ \dot{c}_y [-\SI{0.2}{m \per s},\SI{0.2}{m \per s}] \Big\}.}} 

Three different gaits (trot, bound, and pace) are adopted to validate the proposed analysis approach, whose gait timing plots are shown in Fig.~\ref{fig:gait}. For all three gaits under test, the underlying switched dynamics involve two different modes according to which pair of legs is in stance mode, i.e., they are all two-beat gaits according to~\cite{Boussema2019}.
\begin{figure}[b!]
	\centering
	{\includegraphics[width=0.5\linewidth]{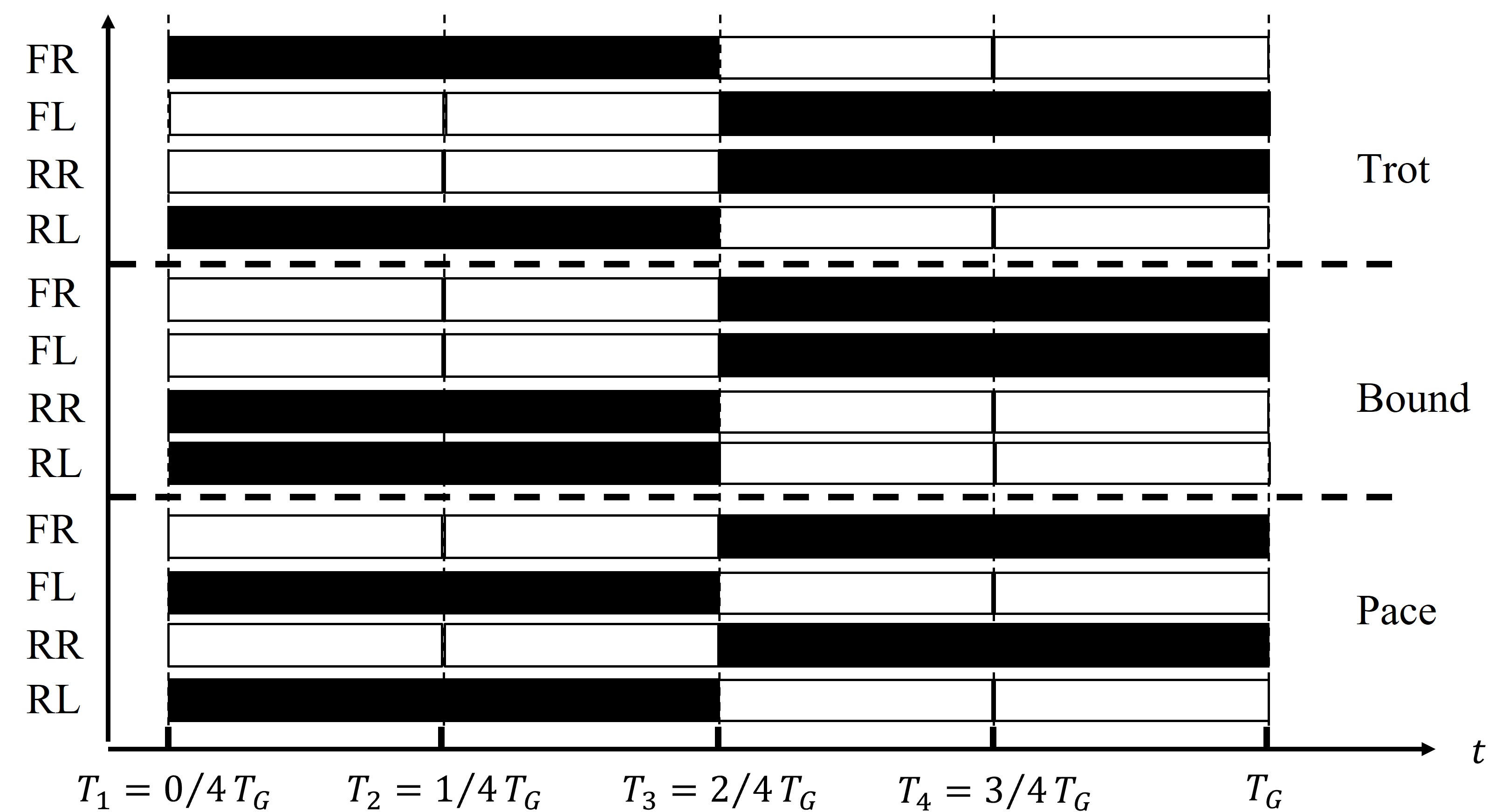}}
	\caption{ \footnotesize   Illustration on gait timings. }
	\label{fig:gait}
\end{figure}
\begin{figure*}[t]
    \centering
    \subfigure[ \footnotesize  $x$-$y$ slices, trot ]{
    \includegraphics[width=0.3\linewidth]{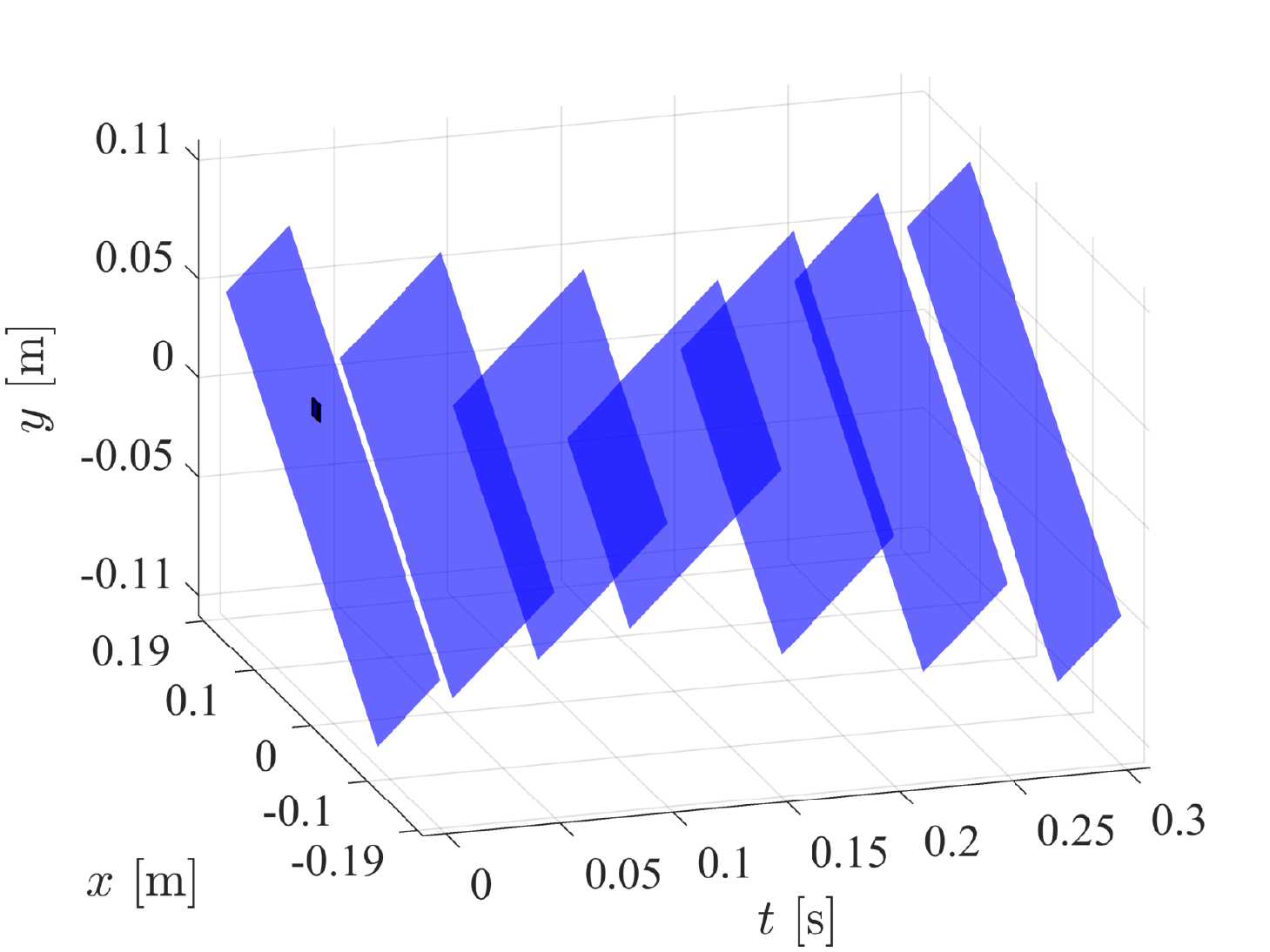}}
    \subfigure[ \footnotesize  $x$-$y$ slices, bound ]{ \label{fig:CITb}
    \includegraphics[width=0.3\linewidth]{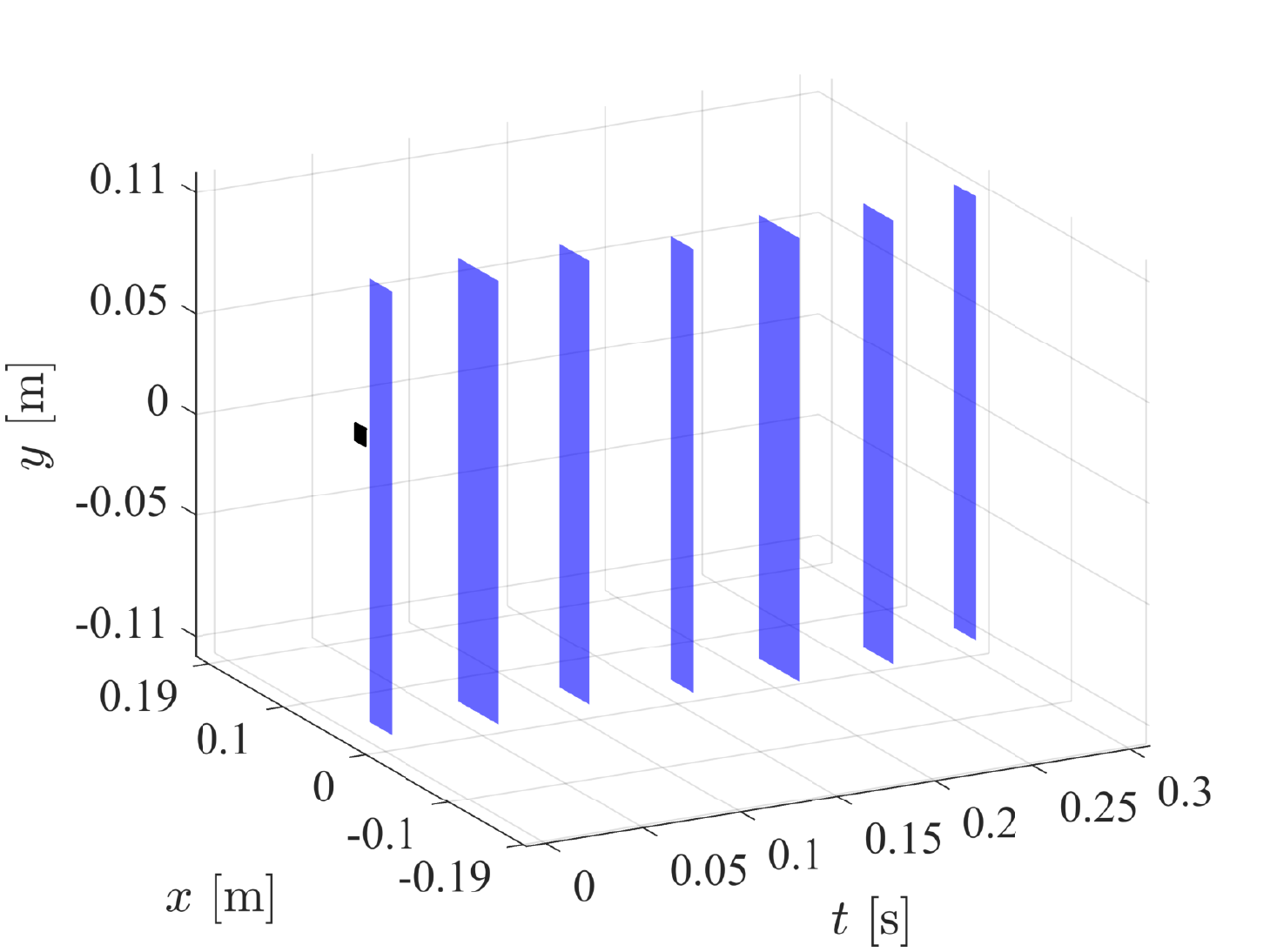}}
    \subfigure[ \footnotesize  $x$-$y$ slices, pace ]{ \label{fig:CITc}
    \includegraphics[width=0.3\linewidth]{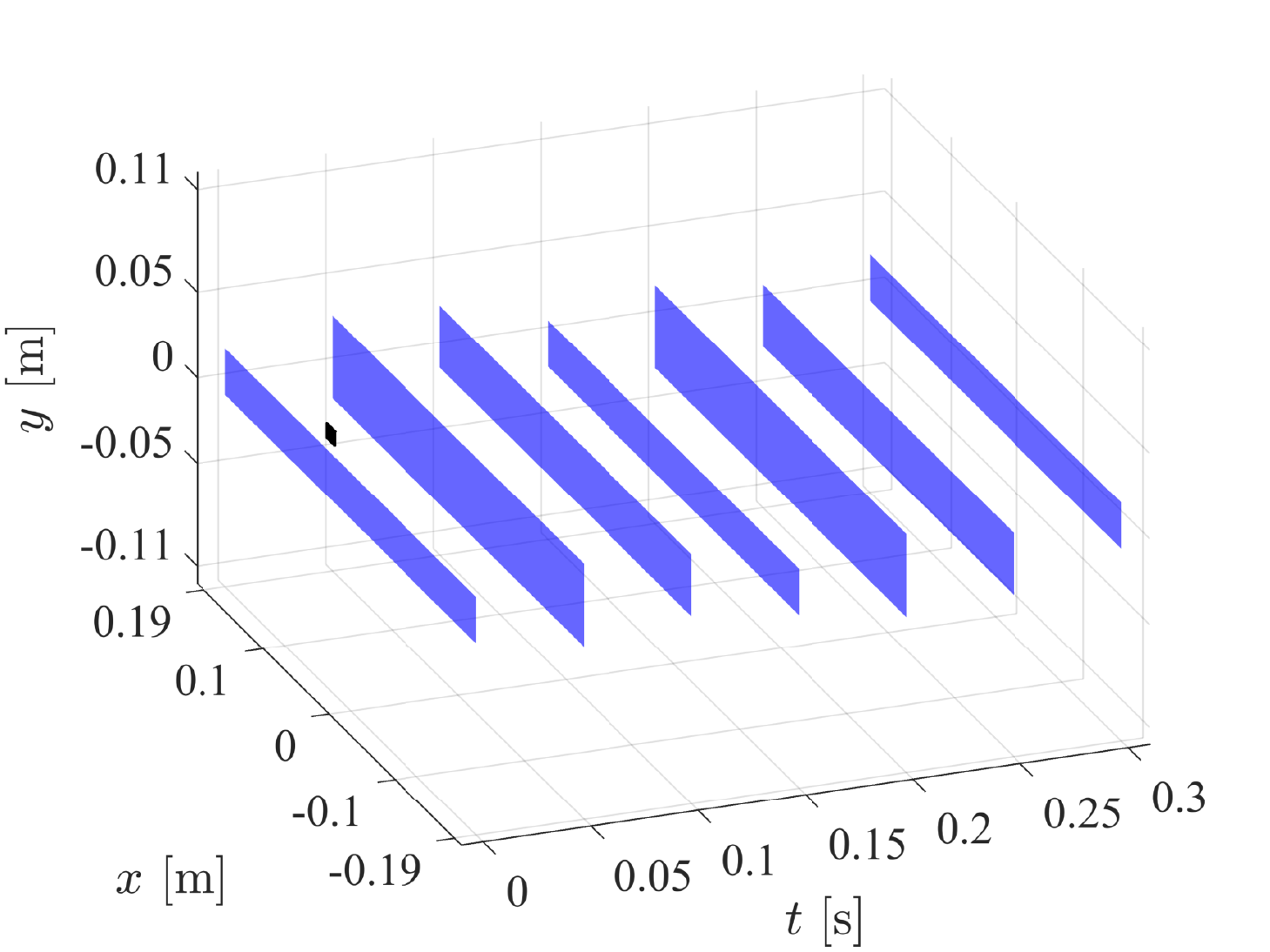}}
    \caption{ \footnotesize   Tube of dynamically balanced states with different gait. }
    \label{fig:CIT}
    \vspace{-10px}
\end{figure*}

For numerical computation, the YALMIP toolbox~\cite{Lofberg2004} and MPT toolbox~\cite{Kvasnica2004} are used for EMPC implementation. For the EMPC problems for different gaits, the terminal constraint sets and the problem horizon are chosen to be the same. With the adopted time discretization and settings, it takes roughly $\SI{10}{min}$ to compute the tube of balanced states and roughly $\SI{5}{min}$ to generate the tube of capturable states for each gait with an Intel (R) Core (TM) i7-9700@ $\SI{3.00}{GHz}$ and $\SI{16}{GB}$ memory. Note that the capturability analysis is done offline and this computation is only required once.

\begin{algorithm}[htbp!]
  \caption{  Computation of tube of capturable states}\label{alg:brs}
  \KwIn{ $\Aa$, $\Bb$, $\X$, $\Bx$, $T$;}
  \KwOut{$\Ca(0;\Bx),\ldots,\Ca(T;\Bx)$.}
  \For{$t=1,\ldots,T_G$}{
   $\Ca(0;\B_t) \leftarrow \B_t$, and $k \leftarrow 0$\;
  \If{$k \le T-1$}{
   $\Ca(k+1;\B_t) = \Pre(\Ca(k;\B_t))\bigcap \X$\;
   $k \leftarrow k+1$\;
  }
  }
  $\Ca(k;\Bx) = \{\Ca(k;\B_t)\}_{t=1}^{T_G} $ for all $k=0,\ldots,T$.
\end{algorithm}

Fig.~\ref{fig:CIT} depicts the x-y slices of the resulting tube of dynamically balanced states over time for different gaits. Each slice in the figure corresponds to a discretization time-step. It can be seen that the feasible set of states that are dynamically balanced changes over time within a single period, which verifies the necessity of considering the tube of dynamic balance in push recovery for quadrupeds. In addition, the difference of shapes of the x-y slices for the tested gaits is obvious from Fig.~\ref{fig:CIT}. Such a difference agrees with the underlying difference between the tested gaits. For example, bound and pace differ only in the size of the constraints, implying the similarity between the slices in Fig.~\ref{fig:CITb} and Fig.~\ref{fig:CITc}. 

Another interesting feature associated with the result is the different shapes of the tubes for different gaits, even though the admissible set of terminal states and the horizon for the EMPC problems are the same. For example, the state $(0.03,0.03,0.4,0.3)$ marked as a black rectangle in Fig.~\ref{fig:CIT} can be kept inside the tube for trot gait, but not with the bound or pace gaits. The periodicity of the evolution of the tubes is also verified by the obtained results. This interesting behavior of dynamic balance for quadrupedal locomotion has not been revealed in traditional capturability analysis for bipeds. Moreover, such a difference would propagate further to the tube of capturable states, which in turn indicates that the underlying gait and disturbance timing both influence the push recovery performance. 

With the above obtained tube of dynamically balanced states, the capturable states can be obtained accordingly. Fig.~\ref{fig:BRS} shows the evolution of the tube of capturable states (backward in time) over time with the trot gait. Two cases with different terminal constraint sets are shown, leading to different sets of capturable states in fact. Taking the volume of computed sets as a quantifier for convergence, it can be seen from Fig.~\ref{fig:BRS} that the volume of the computed sets does not change after about $8$ iterations (i.e., $\SI{1.2}{s}$ and four gait periods).

\begin{figure}[tp!]
    \centering
    \subfigure[ \footnotesize  x-y slice, trot, case 1]{
    \includegraphics[width=0.23\linewidth]{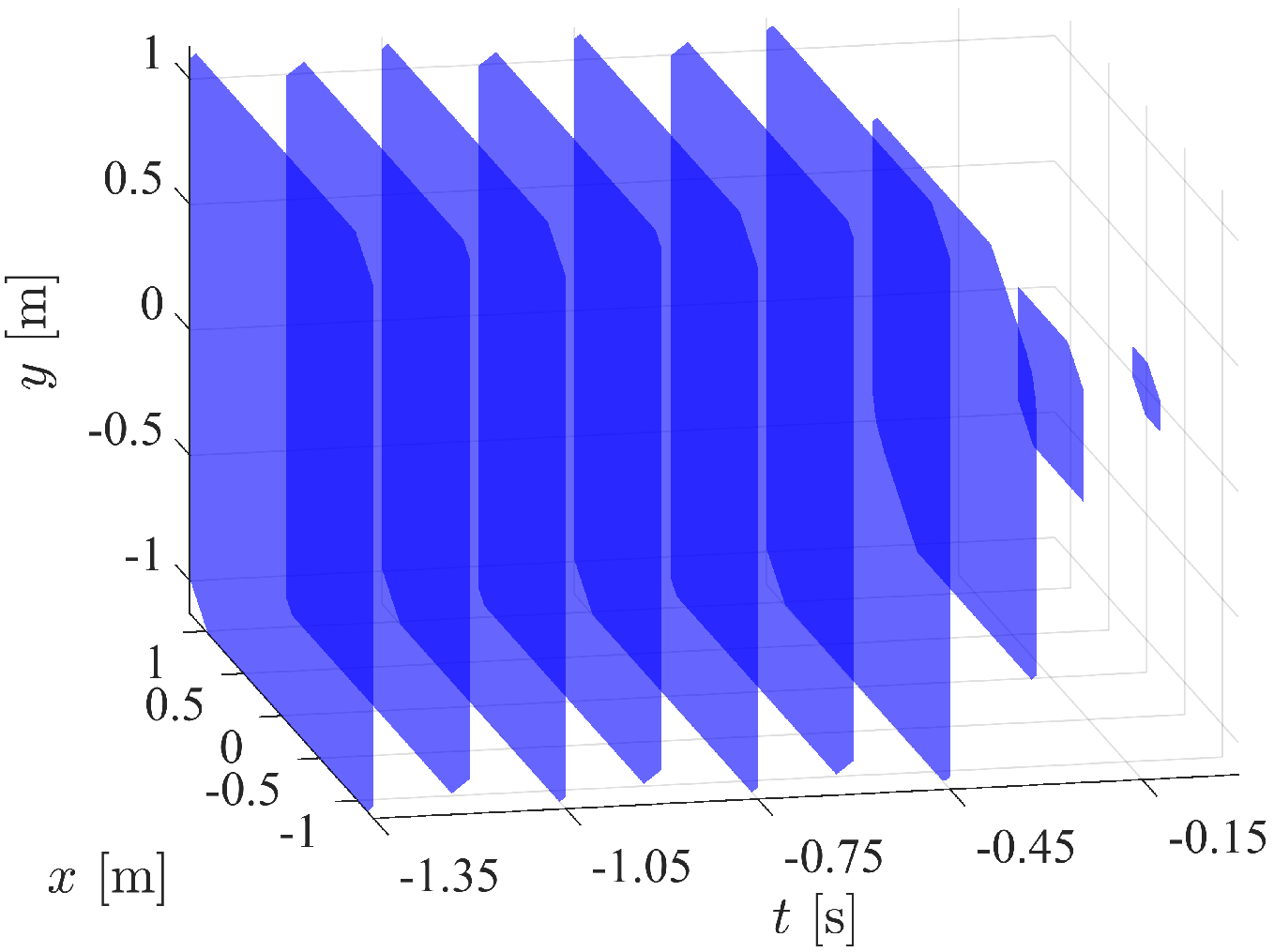}}
    \subfigure[ \footnotesize  Volume over iteration, case1]{
    \includegraphics[width=0.23\linewidth]{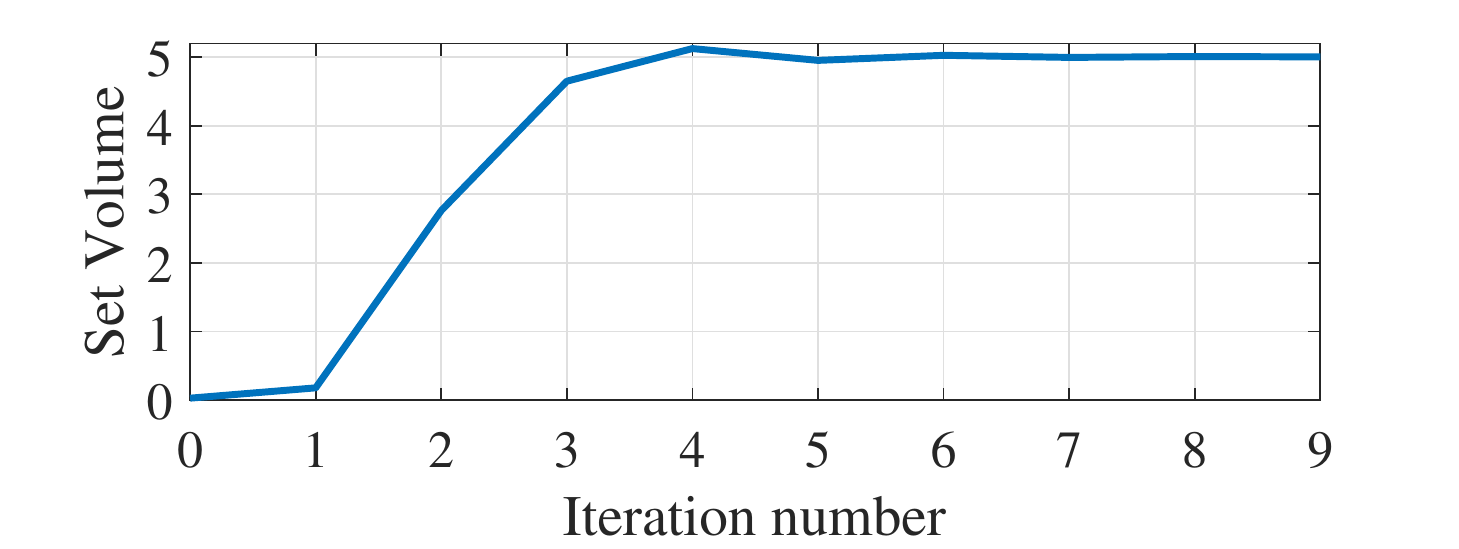}}
    \subfigure[ \footnotesize  x-y slice, trot, case 2]{
    \includegraphics[width=0.23\linewidth]{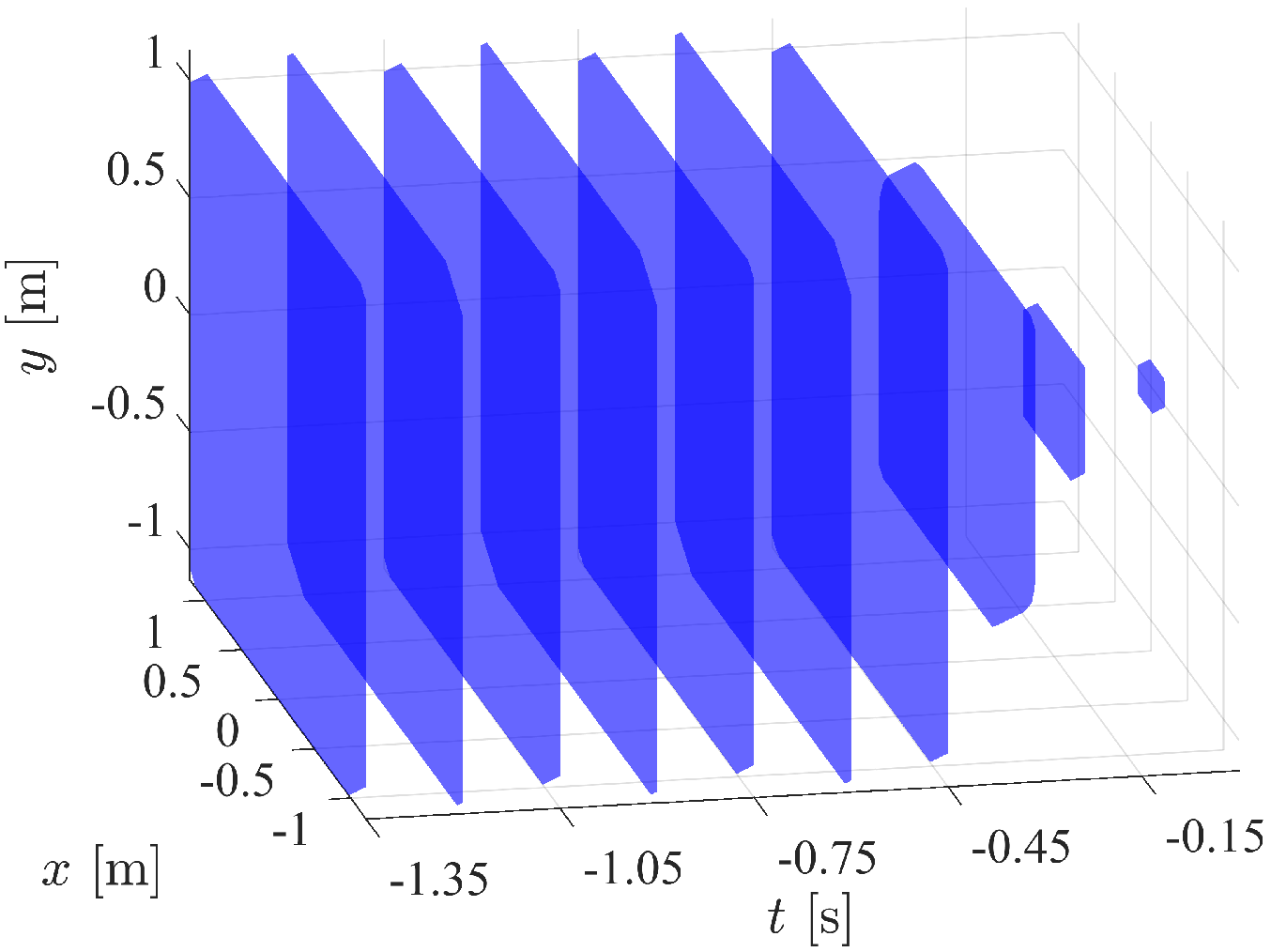}}
    \subfigure[ \footnotesize  Volume over iteration, case 2]{
    \includegraphics[width=0.23\linewidth]{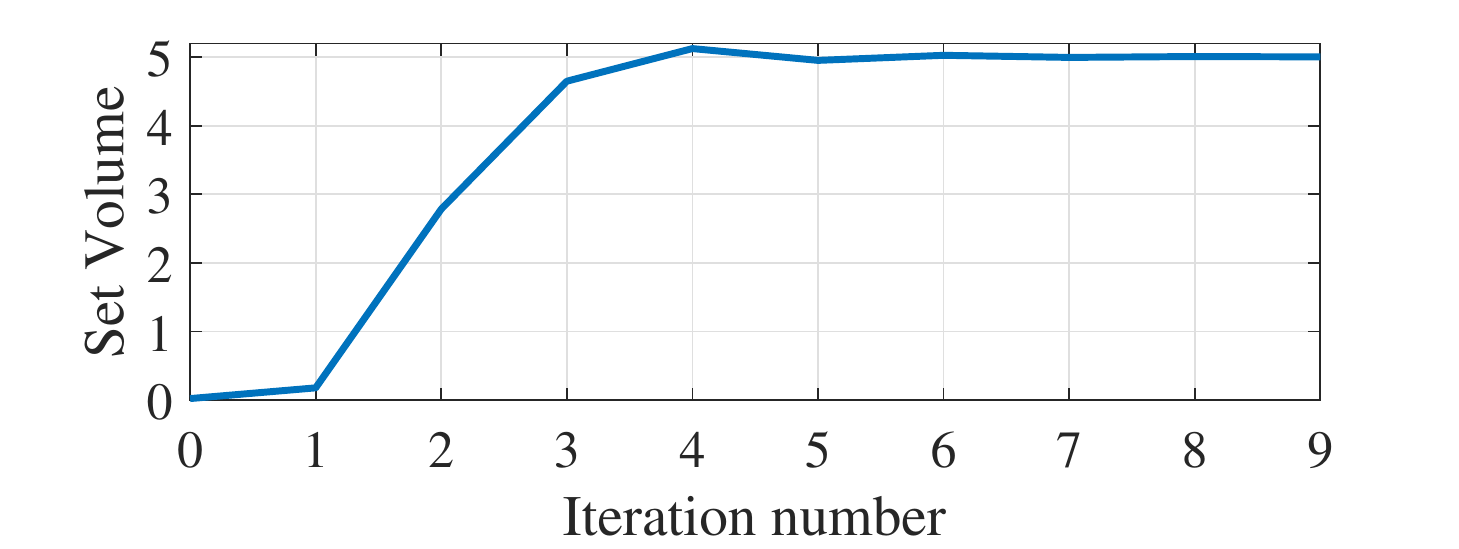}}
    \caption{ \footnotesize Tube of capturable states evolving backward in time. Case 1 corresponds to the evolution of capturable states with terminal set selected to be the first slice of tube of balance states for trot gait in Fig.~\ref{fig:CIT}(a). Case 2 corresponds to the evolution of capturable states with terminal set selected to be the fourth slice of tube of balance states for trot gait in Fig.~\ref{fig:CIT}(a). }
    \label{fig:BRS}
    \vspace{-15px}
\end{figure}

\subsection{Discussions}

In terms of computation of the tube of balanced states and the capturable states, the $\Pre$ operator-based procedure and the EMPC-based approach give the same results if the iteration number in Algorithm~\ref{alg:cit} and the prediction horizon in~\eqref{eq:empc_cit} are compatible. On the other hand, EMPC requires finding the associated value function as well, which introduces additional computational complexities. In fact, the most time-consuming step in solving the EMPC problems is the construction of the critical regions (i.e., $\X_{\B,i}^N\ (\X_{\Ca,i}^N)$ in~\eqref{eq:critical_region}) forming the partition of the overall feasible region. 

The analysis tools and results discussed in this section are tightly connected with the persistent feasibility and stability theories in the model predictive control literature. For classical MPC problems, persistent feasibility is commonly ensured via requiring the terminal constraint set to be control invariant (Theorem 12.1 in~\cite{Borrelli2017}). By constructing the tube of dynamically balanced states, we are trying to establish an admissible terminal constraint encoding the time-varying feature of dynamic balancing for quadrupedal locomotion. Such a control-invariant property of the tube subsequently ensures persistent feasibility of the second EMPC problem~\eqref{eq:empc_brs} for computing the capturable states. In essence, we extend the seminal results in classical MPC to periodically time-varying systems for applications in quadrupedal locomotion.

From a practical application perspective, the above analysis results have strong implications in guiding footstep adaptation according to the underlying gait and different impact timings. In the following section, we develop a CoM-footstep planning scheme for quadrupedal push recovery based on the quadrupedal analysis results.

\section{Quadrupedal Capturability based CoM and Footstep Planning for Push Recovery}\label{sec:foot_com}
In the above two sections, we devoted our main efforts to the formalization and analysis of quadrupedal capturability with dynamic balance, which lays the foundation for synthesizing reliable push recovery controllers. In this section, a comprehensive footstep and CoM planning scheme is developed based upon the capturability analysis. 
 
Throughout the development of push recovery planning scheme, we adopt the discretized LIP dynamics~\eqref{eq:dlip} for planning of the robot's motion. Planning of quadrupedal push recovery requires finding both the footsteps $\w_k$ and the CoP induced by $\u_k$ so that the resulting CoM trajectory restores balance. Mathematically speaking, this motion planning problem can be formulated as the following optimal control problem of the LIP dynamics.
\begin{subequations}\label{eq:plan_ocp}
\al{\min\limits_{\w_k,\u_k,\forall k} & \left( \sum\limits_{k=0}^{N-1} \norm{\x_k}^2_Q +\norm{\u_k}^2_R \right)+ \norm{\x_N}^2_{Q_f} \\ \st & \x_{k+1} = \Aa\x_k+\Bb \w_k \u_k, \quad  \forall k =0:N-1,\\ & \x_k \in \X,\quad \u_k\in \U, \quad  \w_k \in \W_k(\x_k) \\ & \w_N = \w_\B, \quad  \x_N \in \Ca(T;\Bx). \label{eq:plan_ocp_cons_w}  }
\end{subequations}
where the set $\W_k$ takes into account of the time-dependent gait restriction (e.g., the footstep remains fixed if the corresponding leg is in stance) as well as potential kinematic constraints for contact locations. In the above problem, the terminal constraint~\eqref{eq:plan_ocp_cons_w} enforces that the terminal state is capturable and the terminal footsteps match the footsteps for balancing. In addition, the bilinear term $\w_k\u_k$ coupling footsteps with the CoP further complicates the motion planning problem.

To address the above difficulties, a novel approach for finding the terminal constraint based on quadrupedal capturability analysis is developed first. Given the target footsteps, a tractable scheme for jointly determining the coupled CoM and footstep reference trajectories is then devised.

\subsection{Planning for target footsteps}
Finding the target footsteps is commonly ignored in most push recovery control frameworks. Different target footsteps shape the structure of the optimal control problem~\eqref{eq:plan_ocp} through the terminal constraint~\eqref{eq:plan_ocp_cons_w}. In this subsection, we develop a simple yet effective quadratic-programming based approach for target contract reference points planning. 

Our strategy for planning the footsteps relies on an observation that the sets of balanced and capturable states are translationally symmetric, i.e., they are identical modulo a horizontal translation to the footsteps and CoM states. Denoting by $\Bx(\w)$ and $\Ca(\w) = \Ca(T;\Bx(\w))$ the tube of balanced states and the set capturable states to highlight the explicit dependence on footsteps $\w\in \R^{2\times 4}$, the above horizontal translational symmetry property is summarized in the following proposition. 

\begin{proposition}\label{prop:linear_cap_set}
The sets of balanced states and capturable states posses a horizontal translational symmetry, i.e., if $\w_2 = \w_1+\Delta \w$ and $\X_{T,2} = \X_{T,1} + \Delta \Ww$, with $\Delta \w = (\Delta w_x,  \Delta w_y )^\top$ and $\Delta \Ww = (\Delta w_x, 0, \Delta w_y, 0)^\top$, then 
\subeq{\al{ &\Bx(\w_2) = \Bx(\w_1+\Delta \w)  = \Bx(\w_1)+ \Delta \Ww  \text{ and,}\\ &\Ca(\w_2) =\Ca(\w_1+\Delta \w)  = \Ca(\w_1)+ \Delta \Ww. }} 
\end{proposition}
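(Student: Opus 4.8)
\emph{Proof plan.} The plan is to realize the asserted symmetry by a single affine bijection of the state space — translation by $\Delta\Ww$ — and to verify that it carries trajectories of the switched LIP~\eqref{eq:dlip} under footsteps $\w_1$ bijectively onto trajectories under $\w_2 = \w_1 + \Delta\w$, leaves input admissibility untouched, and maps the constraint data of the EMPC problems~\eqref{eq:empc_cit}--\eqref{eq:empc_brs} (equivalently, intertwines the set recursions of Algorithms~\ref{alg:cit}--\ref{alg:brs}) for $\w_1$ onto those for $\w_2$; covariance of the computed feasible sets then follows.

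The computational heart is the identity
\eq{(\Id-\Aa)\,\Delta\Ww \;=\; \Bb\,\Delta\w ,\label{eq:shiftid}}
which I would obtain as follows: from~\eqref{eq:AB_LIP} one checks by inspection that $A_\text{LIP}\,\Delta\Ww = -\,B_\text{LIP}\,\Delta\w$ (i.e.\ $\Delta\Ww$ is exactly the LIP equilibrium produced by shifting the CoP by $\Delta\w$), and then substituting $\Bb = A_\text{LIP}^{-1}(\Aa-\Id)B_\text{LIP}$ and using that $\Aa = e^{A_\text{LIP}\dt}$ commutes with $A_\text{LIP}$ turns this into~\eqref{eq:shiftid} in one line (equivalently, zero-order-hold discretization preserves equilibria). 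Next I would note that, since the footsteps enter~\eqref{eq:dlip} only through the center of pressure $\w_{\sigmaa_k}\u_k$ and $\u_k \in \U = \S_\lambda$ lies on a simplex ($\lambda_i = 0$ whenever $\In_i = 0$, so $\sum_i \lambda_i = \sum_i \In_i\lambda_i = 1$), translating every footstep by $\Delta\w$ shifts the CoP by exactly $\Delta\w$ for any mode and any admissible $\u_k$. Combining this with~\eqref{eq:shiftid} gives
\aln{\Aa(\x_k + \Delta\Ww) + \Bb\,\w_{2,\sigmaa_k}\u_k &= \big(\Aa\x_k + \Bb\,\w_{1,\sigmaa_k}\u_k\big) + \big(\Aa\Delta\Ww + \Bb\Delta\w\big)\\ &= \big(\Aa\x_k + \Bb\,\w_{1,\sigmaa_k}\u_k\big) + \Delta\Ww ,}
so by induction the trajectory under $\w_2$ started at $\x_\text{ini} + \Delta\Ww$ equals the $\Delta\Ww$-translate of the trajectory under $\w_1$ started at $\x_\text{ini}$, for every gait signal and input sequence; and since $\U = \S_\lambda$ depends only on the contact pattern, not on footstep locations, admissibility is preserved. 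Equivalently, the predecessor operator intertwines with the translation, $\Pre_k^{\w_2}(\X + \Delta\Ww) = \Pre_k^{\w_1}(\X) + \Delta\Ww$ for every set $\X$, hence so does $\PRE$, and translation commutes with the intersections appearing in Algorithms~\ref{alg:cit}--\ref{alg:brs}.

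With these intertwining relations in hand, I would push the symmetry through the offline construction. Since $\X_{T,2} = \X_{T,1} + \Delta\Ww$ by hypothesis, initializing Algorithm~\ref{alg:cit} at $\Omega_0 = \X_{T,2}$ and using that $\PRE$ and $\cap$ commute with the translation yields, by induction on the iteration index, $\Omega_k(\w_2) = \Omega_k(\w_1) + \Delta\Ww$; at the fixed point $\B_{T_G}(\w_2) = \B_{T_G}(\w_1) + \Delta\Ww$, and the $\Pre$-composition in the last step of the algorithm then propagates this to $\B_t(\w_2) = \B_t(\w_1) + \Delta\Ww$ for all $t$, i.e.\ $\Bx(\w_2) = \Bx(\w_1) + \Delta\Ww$. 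Feeding these translated terminal sets into Algorithm~\ref{alg:brs} and invoking covariance of $\Pre$ once more gives $\Ca(\w_2) = \Ca(\w_1) + \Delta\Ww$. The same conclusion can be read off the EMPC formulation~\eqref{eq:empc_cit}--\eqref{eq:empc_brs}: the feasible sets are invariant under the simultaneous substitution $\x_k \mapsto \x_k + \Delta\Ww$, $\w \mapsto \w + \Delta\w$, because all constraints transform as above and the quadratic running costs affect only the value functions, not the feasible regions.

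I expect the only non-routine step to be the identity~\eqref{eq:shiftid} — pinning down the exact relation between the position-shift vector $\Delta\Ww$ and the discretized input matrix $\Bb$; everything else is bookkeeping built on the two structural facts that the LIP dynamics see the footsteps only through the CoP and that a common horizontal shift of all footsteps shifts that CoP identically. One caveat I would state explicitly: exact translational symmetry of $\Ca$ also requires the generic running state constraint $\X$ in~\eqref{eq:empc_brs}/Algorithm~\ref{alg:brs} to be invariant under horizontal CoM translations (e.g.\ pure velocity bounds, or $\X = \R^4$); otherwise only $\X_T$ — and hence $\B_t$ — translates, and the symmetry of $\Ca$ is inherited only on the region where $\X$ itself is translation-invariant.
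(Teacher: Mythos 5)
Your proof is correct and follows the same overall strategy as the paper's: establish that the translation $\x\mapsto\x+\Delta\Ww$ intertwines the $\Pre$ recursion for footsteps $\w_1$ with that for $\w_2$, reduce everything to the single algebraic identity $(\Aa-\Id)\Delta\Ww=-\Bb\Delta\w$, and propagate by induction through the constructions of $\Bx$ and $\Ca$. Where you differ is in how the identity is obtained: the paper computes $\Aa=e^{A_\text{LIP}\dt}$ explicitly in terms of $\cosh(\omega\dt)$ and $\sinh(\omega\dt)$ and verifies $(\Aa-\Id)\Delta\Ww=-\Bb\Delta\w$ entrywise, whereas you derive it in one line from the continuous-time equilibrium relation $A_\text{LIP}\Delta\Ww=-B_\text{LIP}\Delta\w$ together with $\Bb=A_\text{LIP}^{-1}(\Aa-\Id)B_\text{LIP}$ and the commutation of $\Aa$ with $A_\text{LIP}$ --- i.e., zero-order-hold discretization preserves equilibria. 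That route is cleaner and makes the mechanism transparent. You also make explicit two points the paper's proof leaves implicit: that a common shift of all footsteps shifts the achievable CoP set by exactly $\Delta\w$ because the weights in $\S_\lambda$ sum to one (this is what justifies the paper's step replacing $\U$ by $\U+\Delta\w$), and that exact symmetry of $\Ca$ additionally needs the running constraint $\X$ in~\eqref{eq:empc_brs} to be invariant under horizontal CoM translations, a hypothesis the paper does not state. Both additions are correct and worth keeping.
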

\begin{proof}
See Appendix~\ref{app:prop1}.
\end{proof}

Leveraging this property, the quadrupedal capturability analysis in the previous section can be considered as generating a nominal set of capturable states with a nominal set of footsteps. During online execution, the target footsteps are computed by searching for the deviation from the nominal one $\Delta \w$. The online adjustment step can be achieved via solving the following optimization problem
\subeq{\al{ \min\limits_{(\Delta w_x,\Delta w_y)}\ &  J_\text{PR}(\Delta w_x,\Delta w_y) \\ \st \quad \  &  (c_x-\Delta w_x,\dot{c}_x,c_y-\Delta w_y,\dot{c}_y)\in \Ca(T;\Bx), \label{eq:cap_com_cons}}} 
where $(c_x,\dot{c}_x,c_y,\dot{c}_y)$ is the real-time measured CoM state. The constraint~\eqref{eq:cap_com_cons} enforces that the target footsteps should capture the quadruped's state if its upcoming footsteps are chosen to be the ones in the optimized reference frame.

\begin{remark}\label{rmk:gene_cp}
While this formulation appears to be different at first glance, the above optimization generalizes the traditional usage of capture point in push recovery. Traditionally, once capture point is computed, it is considered as a reference point on the ground specifying where the CoM will eventually come to a stop. However, this capture point strategy is not generalizable to quadrupeds, due to the previously discussed issues with quadrupeds in Section~\ref{sec:sys_qcap}. 
\end{remark}

\begin{figure}[tbp!]
	\centering
	{\includegraphics[width=0.7\linewidth]{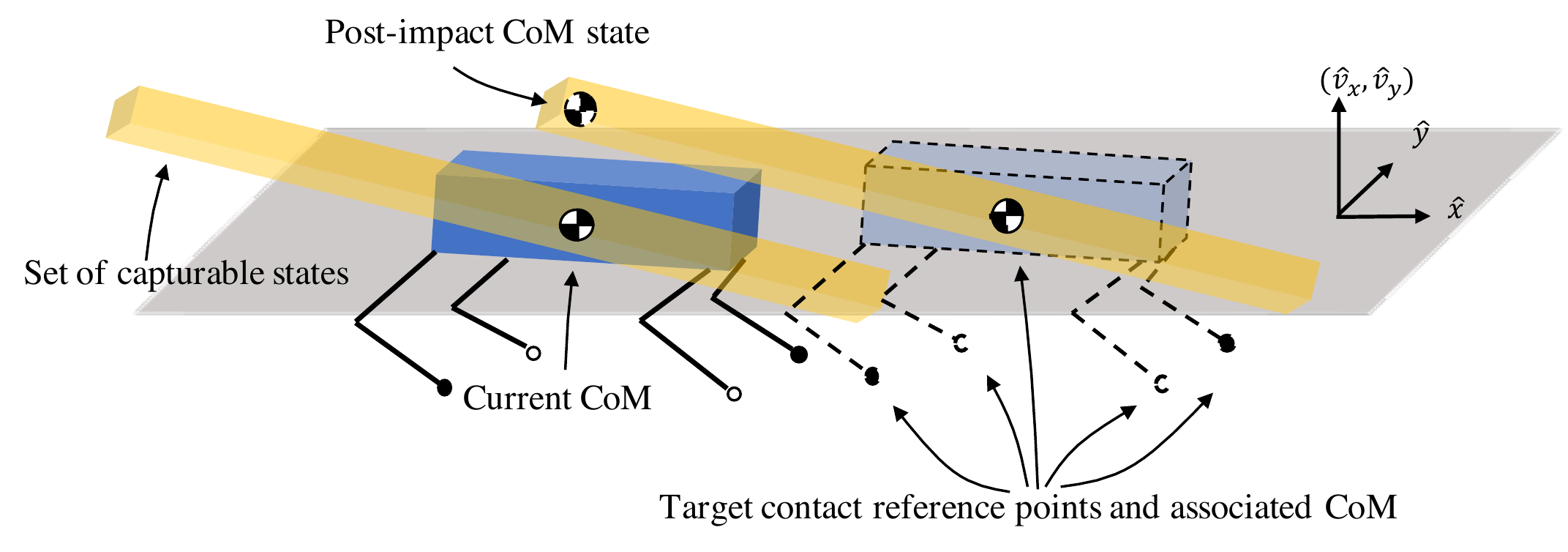}}
	\caption{ \footnotesize   Illustration on target footsteps planning. Once the robot is disturbed, it gains nontrivial velocity. The CoM state must reside inside the green region to ensure capturability. Target footsteps are found via moving the set of capturable states (green polytope) in space such that the post-impact CoM state is embraced.  }
	\label{fig:cap_com}
	\vspace{-10px}
\end{figure}

Fig.~\ref{fig:cap_com} gives a graphical explanation on how the above optimization determines the target footsteps to further illustrate how the change of footsteps helps balancing. 

Note that, the cost function $J_\text{PR}$ is not yet specified. Remember that the EMPC based capturability analysis approach gives a convex and piecewise quadratic value function over the set of capturable states (Theorem~\ref{thm:empc_pro}), which can be directly applied. However, solving a optimization with piecewise quadratic cost function remains numerically challenging. To further simplify this optimization problem, we approximate the piecewise quadratic value function with a single quadratic approximation by the following optimization problem
\subeq{\al{\min\limits_{\Pf,\epsilon } & \quad \epsilon  \\ \st & \quad\Pf \succ \epsilon \Id, \\ & \quad   \Pf\succ \Pf_i, \forall \Pf_i \in \eqref{eq:pwq2} .}}
The above optimization finds an upper bound on the value function obtained from the EMPC capturability analysis and is a semi-definite programming that can be solved offline. 

Combined with the polytopic feature of the set of capturable states, the above simplification reduces the nonlinear optimization~\eqref{eq:cap_com_cons} to a quadratic programming problem given by:
\subeq{\label{eq:cap_com}\al{  \min\limits_{\Delta \w = (\Delta w_x,\Delta w_y)}\ & \ar{{c} c_x-\Delta w_x \\ \dot{c}_x \\ c_y-\Delta w_y\\ \dot{c}_y\\ 1} ^\top \Pf  \ar{{c} c_x-\Delta w_x \\ \dot{c}_x \\ c_y-\Delta w_y\\ \dot{c}_y\\ 1} \\ \st \quad \quad  &  H^{\cal C} \ar{{c} c_x-\Delta w_x \\ \dot{c}_x \\ c_y-\Delta w_y\\ \dot{c}_y} \le h^{\cal C}. }} 

The above two-dimensional QP can be solved very efficiently online, actively adjusting the target configuration with the most up-to-date information about the robot's state. Such a strategy partially addresses the deficiency of only considering frame-fixed contact points in the capturability analysis without resorting to complicated multi-step capturability analysis. 

Once the target final footsteps (i.e., stopping location) are determined through solving~\eqref{eq:cap_com}, the follow-up question is how to coordinate the intermediate footsteps and CoM trajectory to maneuver to the planned target configuration. An iterative MPC-based approach is devised to address this problem below.

\subsection{Motion Planning for CoM and Footstep Trajectories}

Due to the complex coupling between CoM and footstep locations, directly solving the planning problem for both CoM and footsteps~\eqref{eq:plan_ocp} remains difficult even with the simple 3D-LIP dynamics. A simple strategy to address this challenge is to alternatively solve the CoM planning problem with a given footstep sequence and update footstep sequence based on the optimized CoM trajectory in an iterative manner.

\subsubsection{CoM planning}
Given an initial sequence of upcoming footsteps $\{\w_k\}_{k\in \Na}$, the LIP dynamics become a simple linear time varying system. Planning of the CoM trajectory with this LIP dynamics over horizon $N_{P-1}$ becomes a standard optimal control problem with linear time-varying dynamics given below
\subeq{\label{eq:com_plan} \al{
\min\limits_{\u_k} &  \sum\limits_{k=0}^{N_P-1} (\norm{\x_k}^2_Q + \norm{\u_k}^2_R ) + \norm{\x_{N_P} - \x_D^*}^2_{Q_f} \\  \st  & \x_{k+1} = \Aa\x_k +\Bb \w_k \u_k, \qquad \forall k\in \Na_P, \\ & \u_k \in \S_{\lambda},  \qquad \qquad \qquad  \qquad \  \forall k \in \Na_P,}}
where $\x_D^*$ is the CoM state corresponding to the target footsteps obtained from the previous subsection and $\S_{\lambda}$ is the input constraint set defined in~\eqref{eq:sigma} associated with the underlying gait information $\sigmaa$. 

As compared with the original planning problem~\eqref{eq:plan_ocp}, the complex coupling is removed thanks to the knowledge about the footstep sequence, making the above problem a quadratic program in nature and hence tractable to solve. 

\subsubsection{Footstep planning}
The footstep planning (see Fig.~\ref{fig:over_arch}) aims to update the sequence of footsteps according to the planned CoM trajectory. Denoting by $\{\w^m_k\}_{k\in \Na}$ the footstep sequence at the $m$-th iteration and by $\{\boldsymbol{\delta} \w^m_k\}_{k\in \Na}$ the adjustment of the overall sequence, the footstep update problem is formulated as below.
\begin{figure}[tp!]
	\centering
	{\includegraphics[width=0.7\linewidth]{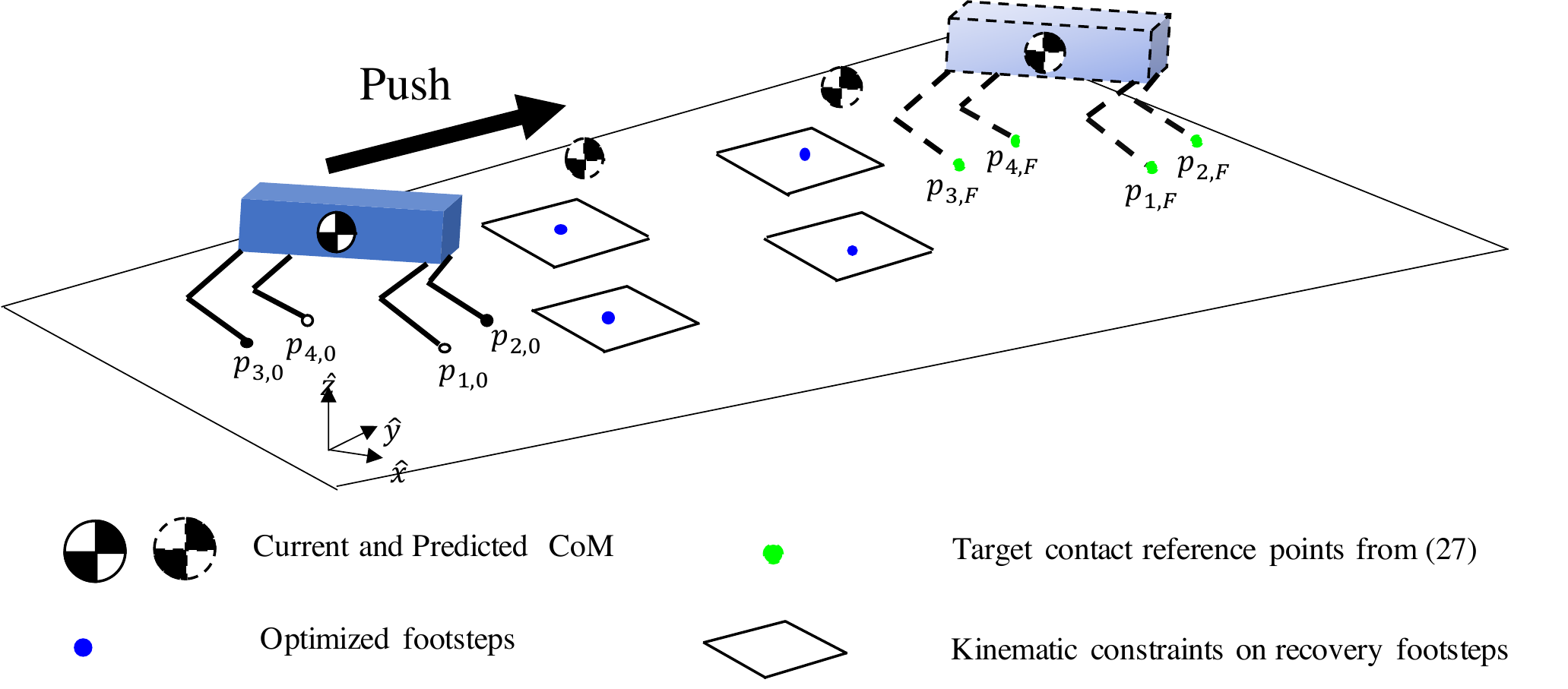}}
	\caption{ \footnotesize   Illustration on CoM and footstep planning. }
	\label{fig:plan_com-fh}
\end{figure}
\subeq{\label{eq:updateFH}\al{ \min\limits_{\boldsymbol{\delta} \w^m_k, k\in \Na_S}  &    J_\text{FH}(\boldsymbol{\delta} \w^m_k ) \\ \st \quad &  \boldsymbol{\delta} \w^m_N = \0,  \label{eq:updateFH_term} \\ &  \w^m_k\! +\! \boldsymbol{\delta} \w^m_k \in \c_{xy,t_k}  \!+\! [\underline{\Delta h},\overline{\Delta h}],  \forall k\in \Na_S.\label{eq:updateFH_kine} }}
where $\c_{xy,t_k}$ is the planar CoM position at each contact switching time $t_k$, and $\Na_S = 1,\ldots,N_S$ denotes the indices of expected number of upcoming steps.

Initialization of the above problem $\{\w^0_k\}_{k=1}^{N_S}$ is set via first predicting the CoM trajectory with the currently measured CoM position and velocity and then generating the sequence of $\{\w^0_k\}_{k=1}^{N_S}$ with nominal relative positions between all feet and the CoM.

In principle, the cost function $ J_\text{FH}(\boldsymbol{\delta} \w^m_k ) $ should characterize how the adjustment on the footstep would impact the performance of the resulting CoM trajectory. However, the bilinear coupling between CoM and footsteps makes the exact characterization impractical. To alleviate this issue, we adopt a simple quadratic approximation to the cost function to ensure efficient online implementation.
\eq{J_\text{FH}(\boldsymbol{\delta} \w^m_k) = \| \boldsymbol{\delta} (\w^m_k)^\top  \boldsymbol{\delta}\w^m_k \|^2_{Q_\text{FH}}. }

\subsubsection{Number of steps planning}
The number of steps needed to recover from external disturbance is another critical quantity ensuring successful recovery. Following the simple approach used in~\cite{Park2021}, we propose to determine the number of steps by solving the above iterative QPs for a range of $N$ and pick the one with smallest cost. In practice, we can simply set the number of steps $N$ to be fixed and relatively large, and solve the corresponding planning problem. Fig.~\ref{fig:plan_com-fh} graphically illustrates how the above developed MPC problem determines footstep profiles. 

\subsection{Implementation Details and Discussions}
To sum up this section, we provide some discussions on implementation details. First, the pseudo-code of an implementable algorithm for the planning of CoM and footstep reference trajectories is given.

\begin{algorithm}[htbp!]
  \caption{  CoM and Footstep Planning}\label{alg:com_foot_plan}
  \KwIn{ $\Reach$, $c_{x}$, $c_{y}$, $\dot{c}_{x}$, $\dot{c}_{y}$, $\w_0$, MaxIter}
  \KwOut{ $\w^*, \X^* ,\U^*$}
  \If{($c_{x}, \dot{c}_{x}, c_{y}, \dot{c}_{y})\notin\Reach$}{
   $ (\Delta w_{x}^{*}, \Delta w_{y}^{*})\leftarrow$ Solve~\eqref{eq:cap_com}\; 
   $\c_F \leftarrow (c_x,c_y,c_z) + (\Delta w_{x}^{*}, \Delta w_{y}^{*}, 0)$\;
   $\w_{F} \leftarrow \w_0 + \Delta \w$\;
  \While{ Iter $<$ MaxIter}{z
   $\boldsymbol{\delta} \w_k^\text{Iter} \leftarrow$ Solve~\eqref{eq:updateFH}\;
   $\w_k^\text{Iter} = \w_k^\text{Iter-1}+\boldsymbol{\delta} \w_k^\text{Iter}$\;
   $(\X^\text{Iter} ,\U^\text{Iter}) \leftarrow $Solve~\eqref{eq:com_plan}.
  }
  }
\end{algorithm}

The proposed planning scheme requires solving only quadratic programs. The QP for planning of target footsteps is only two-dimensional, which is solved at the same frequency as the low-level controller, updating the reference for the subsequent CoM and footstep trajectories planning problem in real time. In the iterative planning of the CoM and footstep trajectories, a number of moderate-scale QPs need to be solved. For the footstep planning problem, each QP involves $2N+2$ decision variables and roughly $4N+4$ inequality constraints, with $N$ being the number of predicted upcoming footsteps that is typically no larger than $5$. On the other hand, each CoM planning QP involves $(n_c-1)N_p$ variables, with $n_c$ being the number of contact points at each time depending on the underlying gait and $N_p$ being the number of discretized time-steps. Generally speaking, these two QPs are the most complex part in the proposed framework and need to be iteratively solved until convergence, which would presumably be the bottleneck of the planning scheme in practice. In practical implementation, solutions to the CoM-footstep trajectory planning problem are updated at the same frequency as the planner, giving the algorithm enough time to iterate without compromising the real-time feature of the overall planning scheme.

\section{Simulation and Hardware Experiment Results}\label{sec:validation}
\begin{table}[h!]\footnotesize 
    \centering
     \begin{tabular}{l|c|c|c|c|c|c}
     \toprule
     \multirow{2}{*}{} &
      \multicolumn{2}{c}{Trot } &
      \multicolumn{2}{c}{Bound} &
      \multicolumn{2}{c}{Pace} \\
      & Proposed$|$Baseline & Baseline$|$Proposed & Proposed$|$Baseline & Baseline$|$Proposed  & Proposed$|$Baseline & Baseline$|$Proposed  \\
      \midrule
    Timing 1 & $234/234 (100\%)$ &  $234/866 (27.02\%)$ & $ 198/198 (100\%)$ & $198/981 (20.18\%)$ & $ 219/219 (100\%)$ & $219/887 (24.69\%)$ \\
    Timing 2 & $219/219 (100\%)$ & $219/1160 (18.88\%)$ & $211/211 (100\%)$ &  $ 211/919 (22.96\%)$ & $ 219/219 (100\%)$ & $219/1061 (20.64\%)$\\
    Timing 3 & $ 229/229 (100\%)$ & $ 229/830 (27.59\%)$ & $208/208 (100\%)$ & $208/653 (31.85\%)$ & $229/229 (100\%)$ &$229/959 (23.88\%)$ \\
    Timing 4 & $219/219 (100\%)$ & $219/1195 (18.33\%)$ & $207/207 ( 100\%)$ & $207/1295 (15.98\%)$ & $206/206 (100\%)$ &$206/1105 (18.64\%)$ \\
    \bottomrule
    \end{tabular}
    \caption{Comparison between the proposed and baseline approaches.}
    \label{tab:stat_compare}
\end{table}
To demonstrate the performance of the proposed capturability based push recovery framework, we conduct various simulation and hardware experiments. In general, we exploit the flexibility of conducting simulations to verify the key contributions of the proposed framework, including the influences of different gaits, different impact timings and comparison with state-of-the-art controllers. Hardware experiments are conducted to highlight practical applicability of the proposed controller and its advantage over existing alternatives. \begin{figure}[tbp!]
	\centering \footnotesize
\begin{minipage}[h]{0.27\linewidth}
\includegraphics[width=\linewidth]{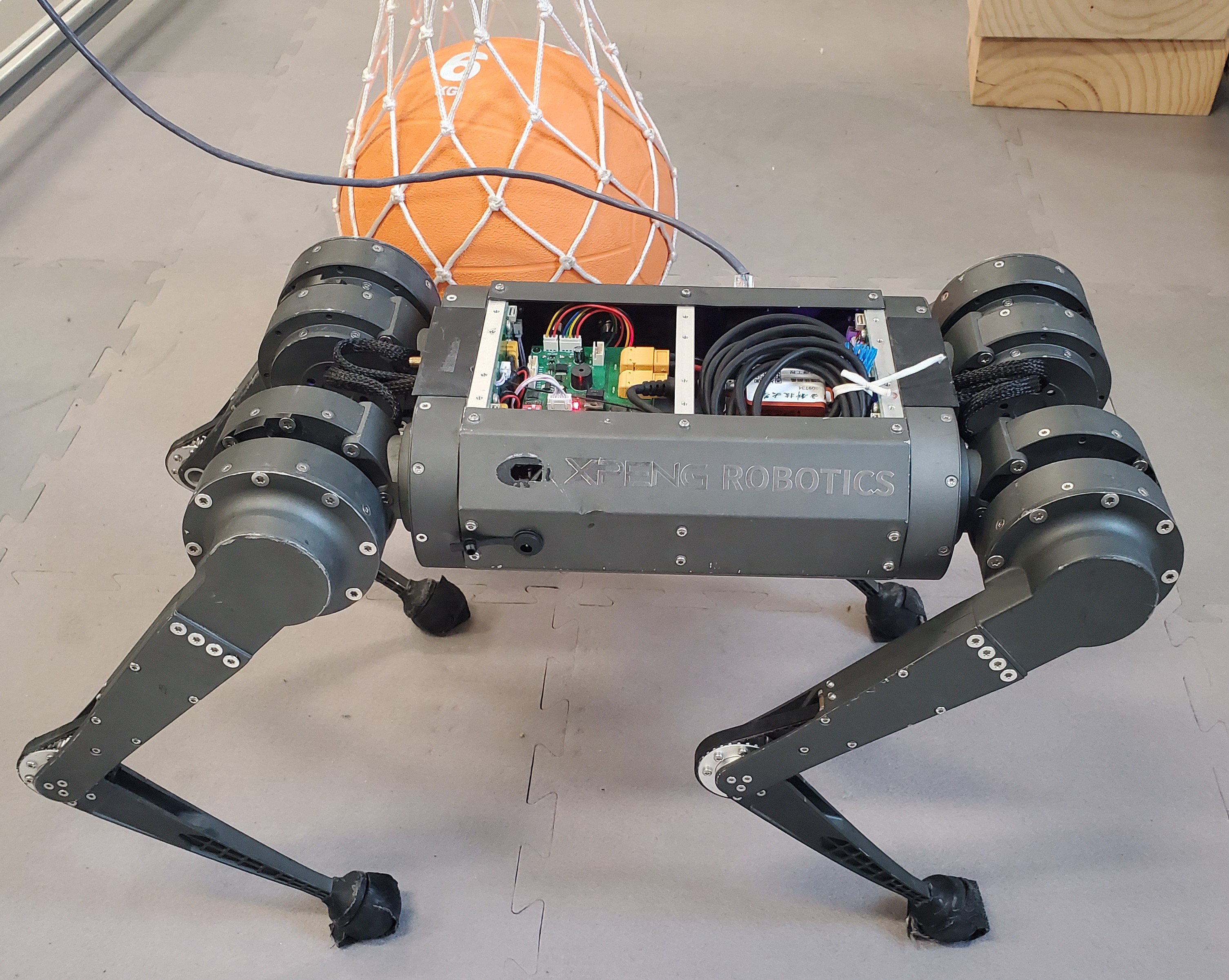}
\end{minipage}\hspace{0.15\linewidth}
\begin{minipage}[h]{0.48\linewidth}
\begin{tabular}{cc}
\hline Parameter & Value \\ \hline
$m$ &  9\SI{}{kg}(10.5  \SI{}{kg}$^\dagger$) \\
$\I_{xx}$ & 0.07  \SI{}{kgm^{2}}\\
$\I_{yy}$ & 0.26  \SI{}{kgm^{2}}\\
$\I_{zz}$ & 0.242  \SI{}{kgm^{2}}\\
$\mu$ & 0.5  \\
$f_{\textrm{min}}$ & 5  \SI{}{N}\\
$f_{\textrm{max}}$ & 150  \SI{}{N}\\
\hline
\end{tabular}
\end{minipage}
\caption{ \footnotesize Replicated Mini Cheetah robot and parameters used in simulator. $^\dagger$: Mass of the replica is $10.5  \SI{}{kg}$, which is different from the Mini Cheetah robot used in simulation. }
\label{fig:mc}
\vspace{-10px}
\end{figure}

In all simulation and hardware experiments, we adopt a convex MPC and whole body impulsive control (MPC + WBIC) based controller proposed by~\cite{Kim2019} as the baseline controller for comparison. This approach augments the convex MPC based quadrupedal locomotion controller with a whole-body control step that achieves highly dynamic locomotion (up to $\SI{3.7}{m\per s}$ and push recovery with hardware).

\subsection{Experiment Platform and Specifications}
The MIT Mini Cheetah robot~\cite{Katz2019} is used in the simulation validations with the open-source simulation platform that is available online~\cite{MC}. A replica of the Mini Cheetah robot (see Fig.~\ref{fig:mc}) is used in hardware experiments. Parameters of the robot and simulation settings are provided in Fig.~\ref{fig:mc}. Throughout the hardware tests, the proposed planning scheme is integrated with the state estimator used in~\cite{Bledt2018} and tracking controller~\cite{DiCarlo2018} that are open-source available~\cite{MC} with no additional parameter tuning.

For the experiment tests, all online computations are conducted with an on-board Intel Atom x5-Z8350 processor @ $\SI{1.44}{GHz}$. State estimator and low-level motion controller run at $\SI{500}{Hz}$, the motion planner for the CoM and footsteps runs at $\SI{33}{Hz}$ (compatible with the baseline convex MPC+WBIC approach). Three different gaits (i.e., trot, pace and bound) are employed to test the performance of the overall framework. The capturability results obtained offline in Section~\ref{sec:analysis_QC_res} are used in the following experiments.

\subsection{Simulation Validations}

We first test the quadruped's push recovery ability under disturbances with different directions and magnitudes at different timings. To highlight the influence of impact timings, we pick four timings (two contact switching time instances and two mid-stance time instances) as depicted in Fig.~\ref{fig:gait}. External disturbances are modeled as instantaneous velocity changes at the floating-base (CoM) in all simulations. To clearly show the performance of the proposed planning framework and rule out other possible factors affecting the results, an ideal full state feedback is assumed. 

For each case, instantaneous longitudinal and lateral CoM velocity changes ranging in $[-\SI{6}{m\per s}, \SI{6}{m\per s}] \times$  $[-\SI{5}{m\per s}, \SI{5}{m\per s}]$ with resolution $\SI{0.2}{m\per s}$, i.e., $3000$ different instances, are tested. In total, $36000$ different scenarios are tested in simulations. A grid point is marked a success if all contact-checking points on the body do not collide with the ground and the quadruped's floating base state recovers from the impact back to balance after $\SI{5}{s}$.

Fig.~\ref{fig:timing_1}-\ref{fig:timing_3} and Table~\ref{tab:stat_compare} jointly show the performance comparison between the proposed approach and the baseline method. In the figures, green markers denote the impacts for which the proposed framework can successfully recover from the push and pink markers correspond to the successful recoveries with the baseline convex MPC+WBIC based approach. It can be seen that the proposed approach is able to reject \textbf{twice} the impact as compared with the baseline approach on average. In certain directions, the disturbance rejection ability of the proposed approach boosts up to \textbf{three times}. Table~\ref{tab:stat_compare} summarizes the comparison between the proposed and baseline approaches. To quantify the difference between performance of these two approaches, we use the following two conditional ratios as the metrics:
\eqn{\ald{\eta_{p|b} &= \frac{ \# \text{ of successes with \textbf{proposed} and \textbf{baseline} }}{\#  \text{ of successes with \textbf{baseline} }} \\ \eta_{b|p} & = \frac{\# \text{ of successes with \textbf{proposed} and \textbf{baseline} }}{\# \text{ of successes with \textbf{proposed} }}}}
From Table~\ref{tab:stat_compare}, it can be seen that the metric $\eta_{p|b}$ are $100\%$ for all tested cases, meaning that for all cases that the baseline controller succeeds in rejecting the disturbance, the proposed controller can accomplish the task as well. On the other hand, $\eta_{p|b}$ ranges from $15.98\%$ to $31.85\%$ across the tested scenarios, meaning that the proposed controller can handle a lot more cases than the baseline controller. In fact, the smaller $\eta_{p|b}$ is, the better the proposed controller is in the corresponding scenario. Overall, it is clear that the proposed approach performs much better than the baseline approach from the simulation test.

\begin{figure*}[tp!]
    \centering
    \subfigure[ \footnotesize  Timing $T_1$]{
    \includegraphics[width=0.235\linewidth]{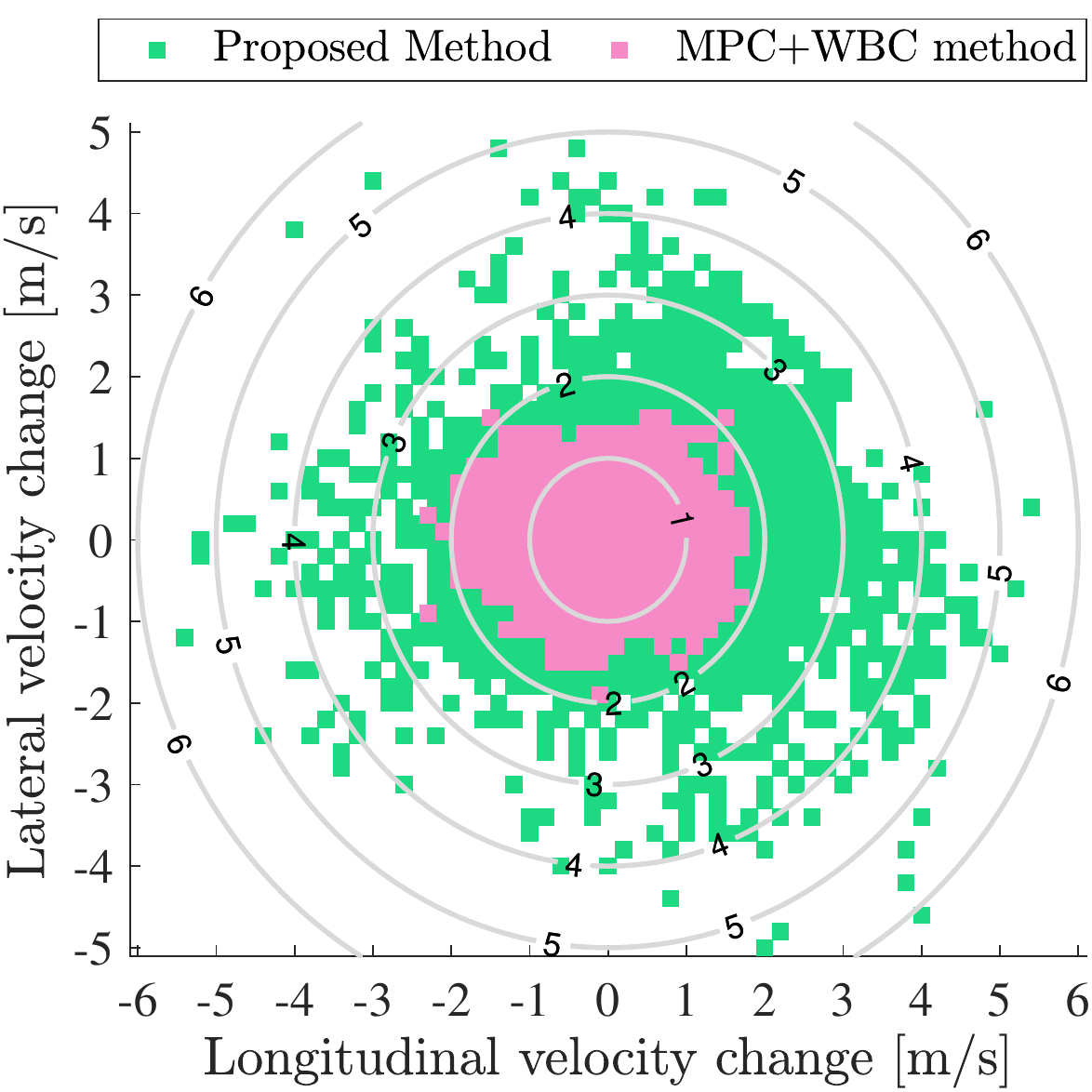}}
    \subfigure[ \footnotesize Timing $T_2$]{
    \includegraphics[width=0.235\linewidth]{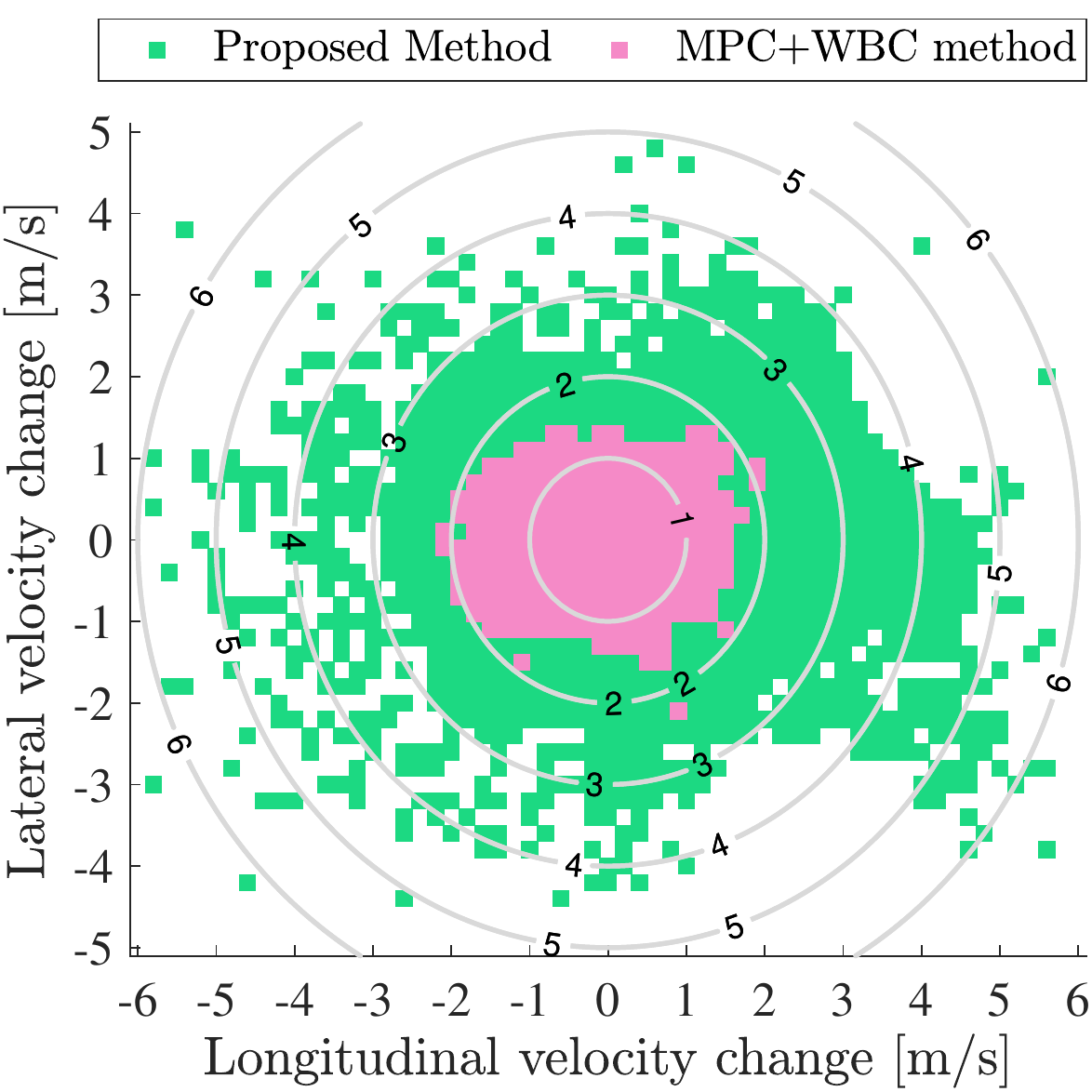}}
    \subfigure[ \footnotesize Timing $T_3$]{
    \includegraphics[width=0.235\linewidth]{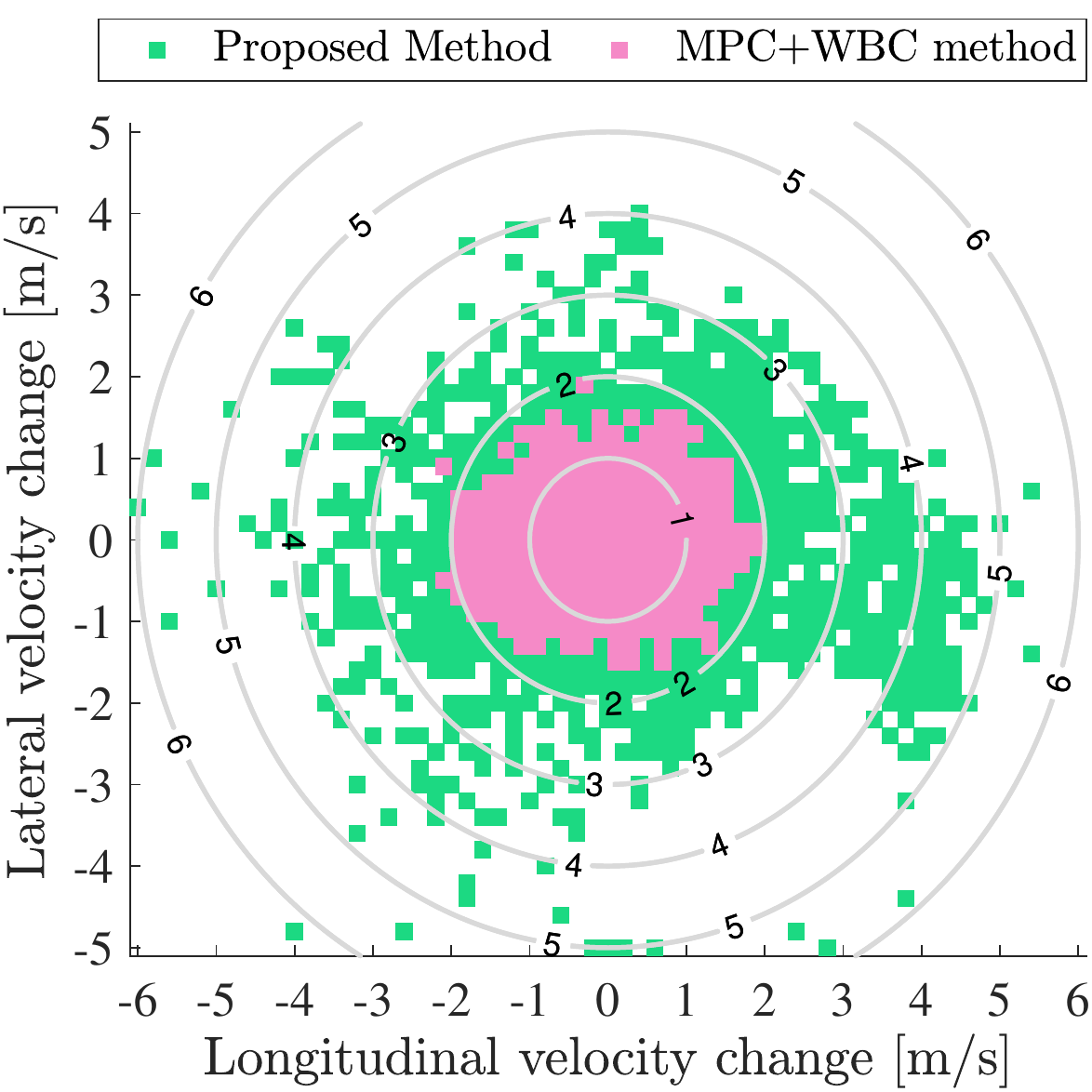}}
    \subfigure[ \footnotesize  Timing $T_4$]{
    \includegraphics[width=0.235\linewidth]{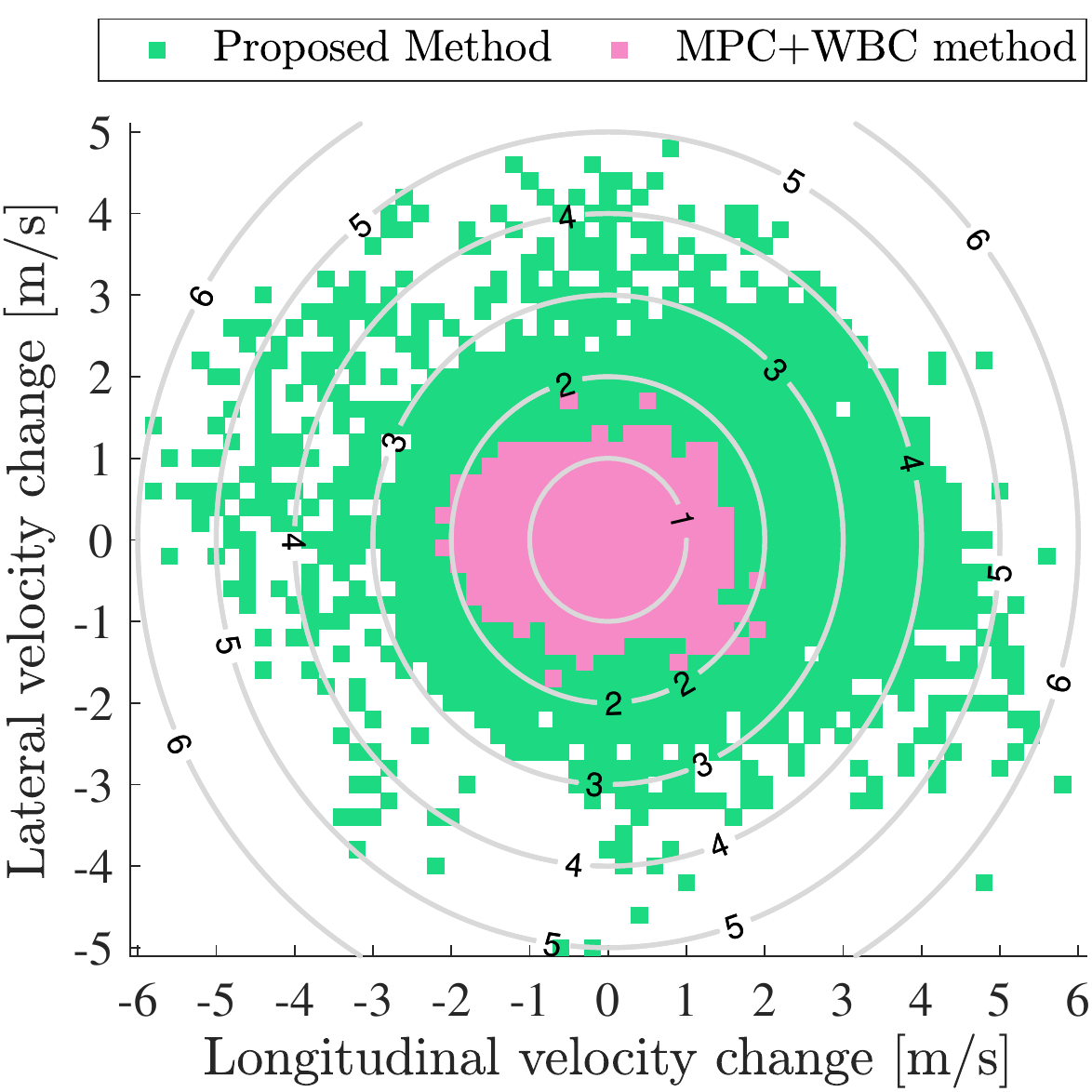}}
    \caption{ \footnotesize   Simulation results for {\bf trot} gait with different {\bf timings}.  }
\label{fig:timing_1}
\end{figure*}

\begin{figure*}[tp!]
    \centering
    \subfigure[ \footnotesize  Timing $T_1$]{\label{fig:timing_e}
    \includegraphics[width=0.235\linewidth]{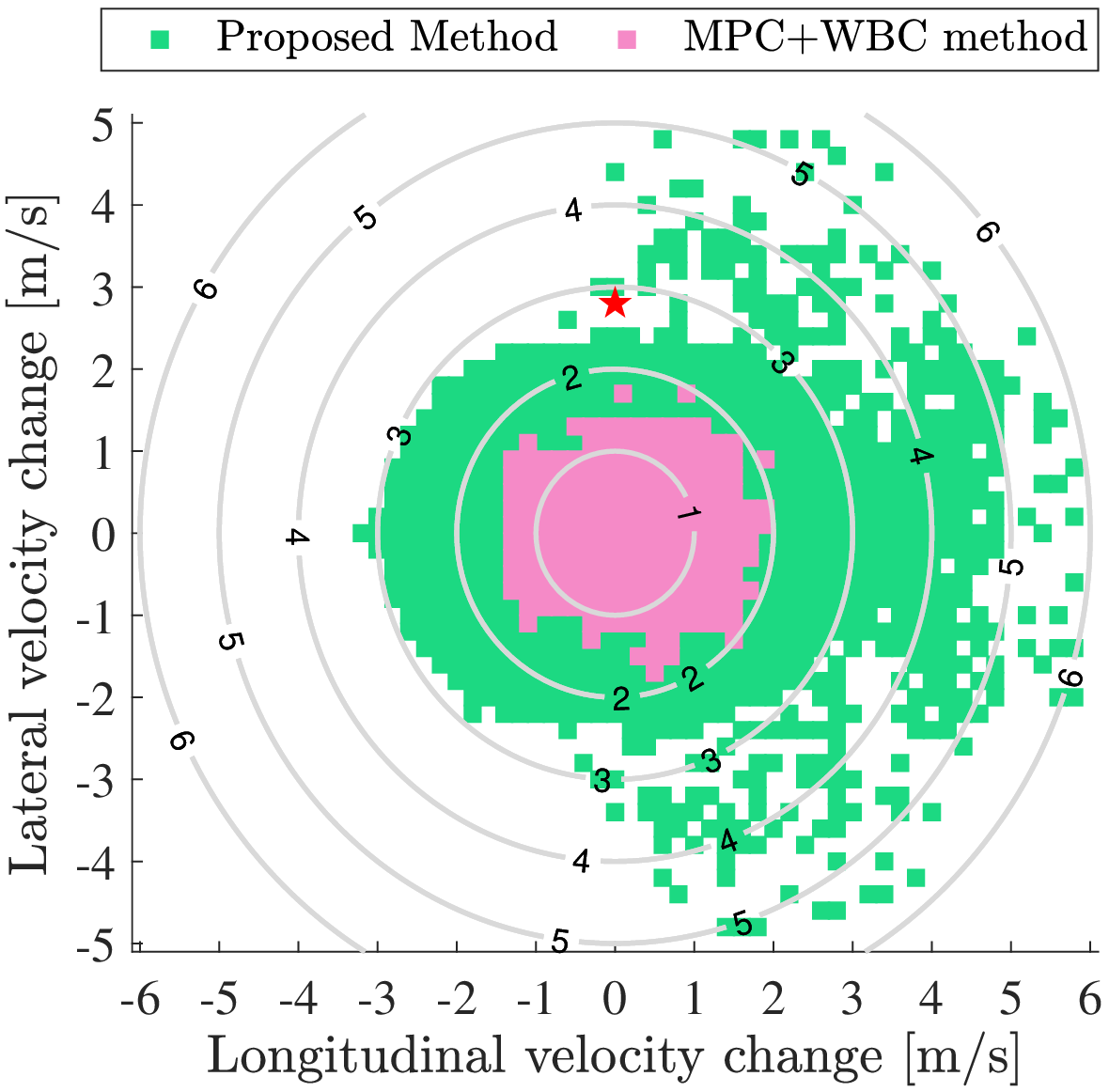}}
    \subfigure[ \footnotesize  Timing $T_2$]{
    \includegraphics[width=0.235\linewidth]{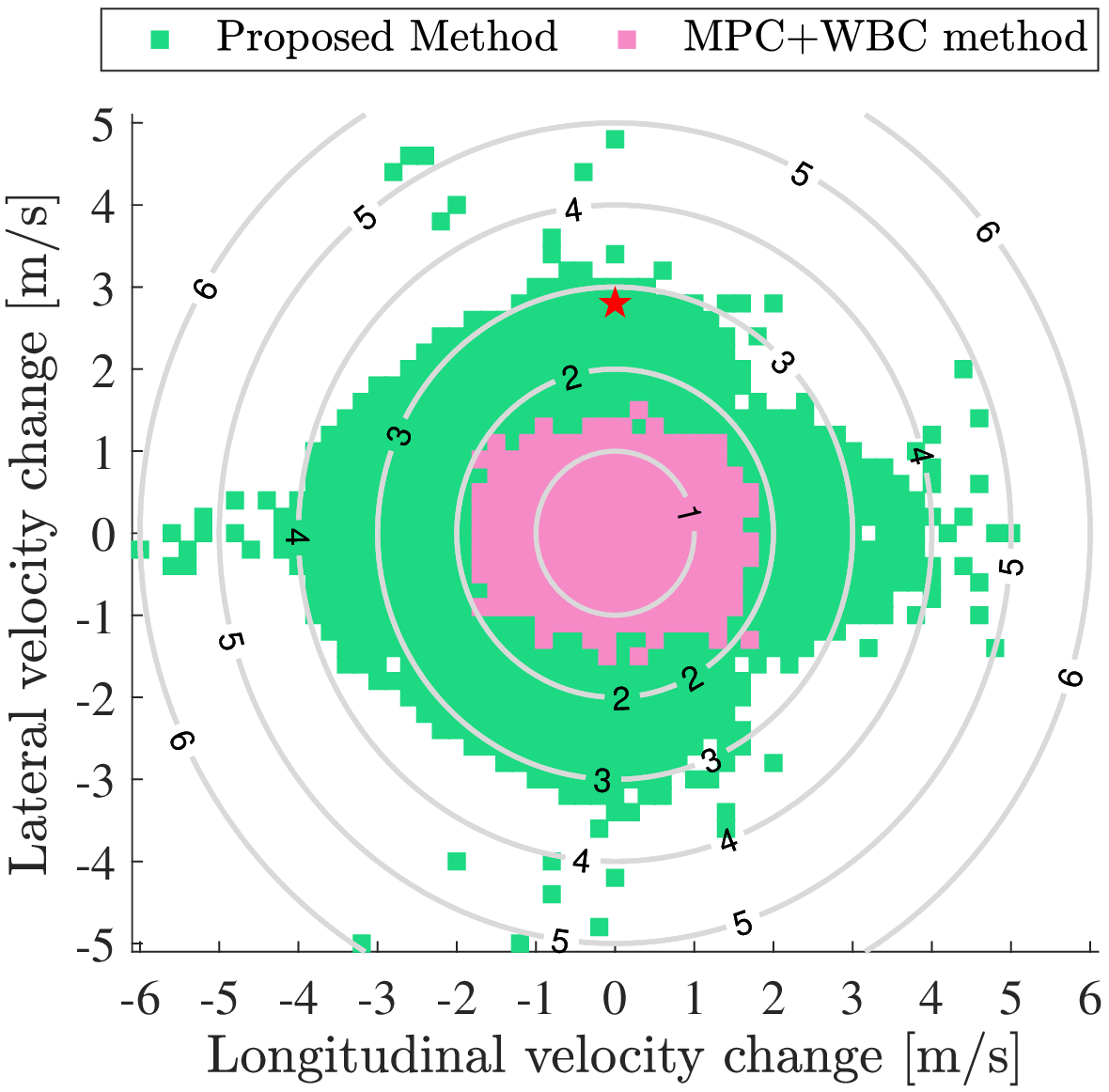}}
    \subfigure[ \footnotesize  Timing $T_3$]{\label{fig:timing_c1}
    \includegraphics[width=0.235\linewidth]{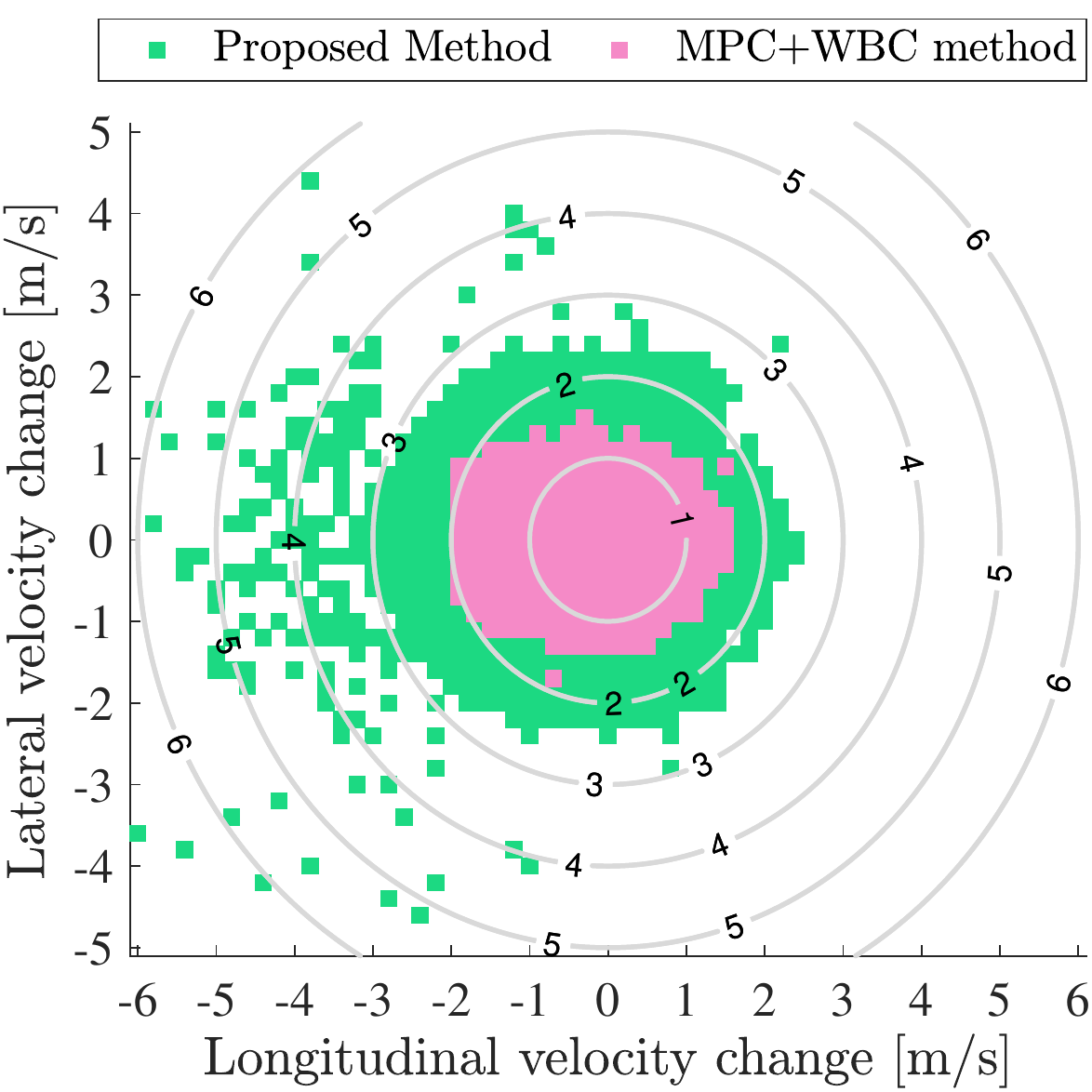}}
    \subfigure[ \footnotesize  Timing $T_4$]{\label{fig:timing_h}
    \includegraphics[width=0.235\linewidth]{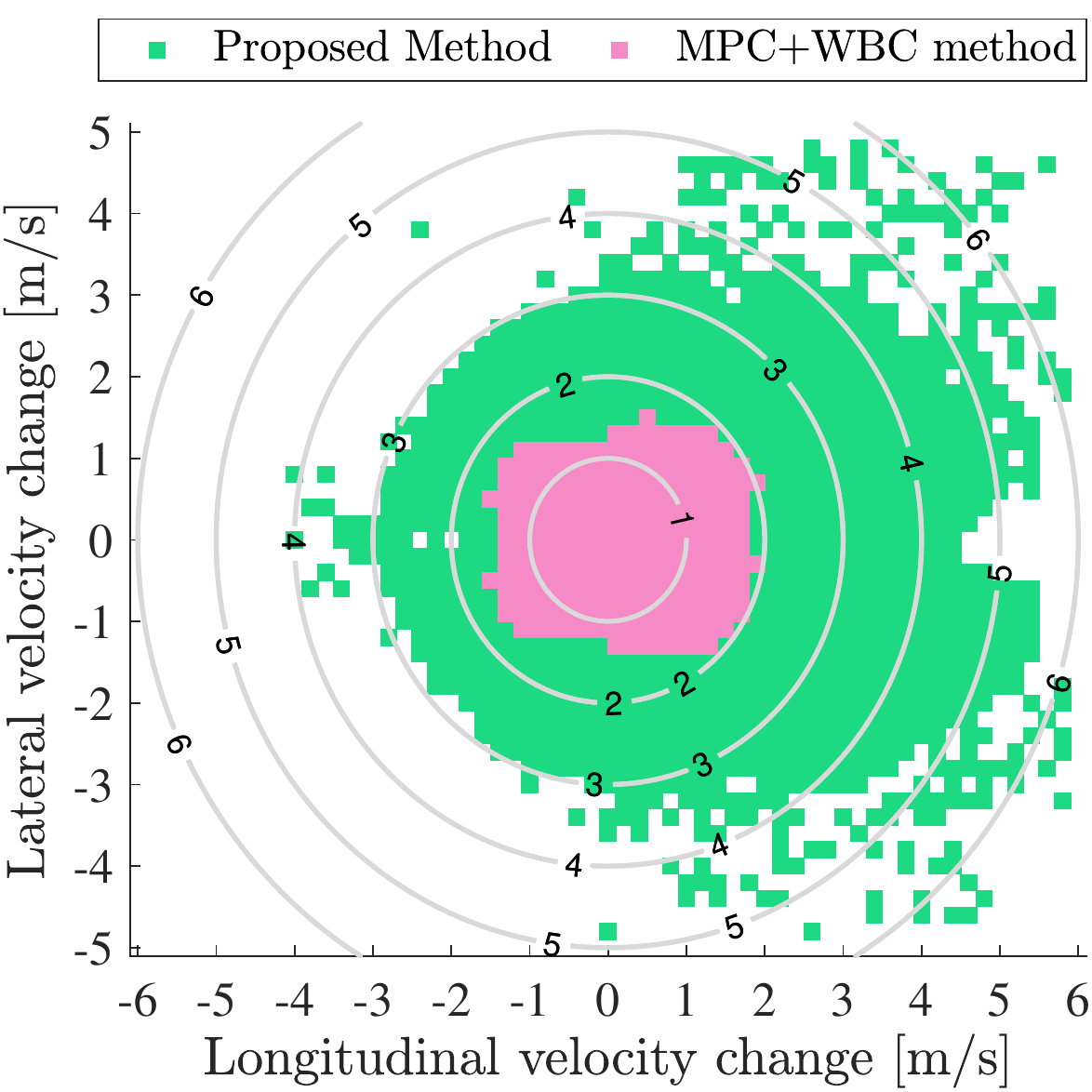}}
    \caption{ \footnotesize   Simulation results for {\bf bound} gait with different {\bf timings}. Black stars indicate the particular test scenario in Fig.~\ref{fig:simu_state_traj_2} and Fig.~\ref{fig:simu_state_traj_4}. }
    \label{fig:timing_2}
\end{figure*}

\begin{figure*}[tp!]
    \centering
    \subfigure[ \footnotesize  Timing $T_1$]{
    \includegraphics[width=0.235\linewidth]{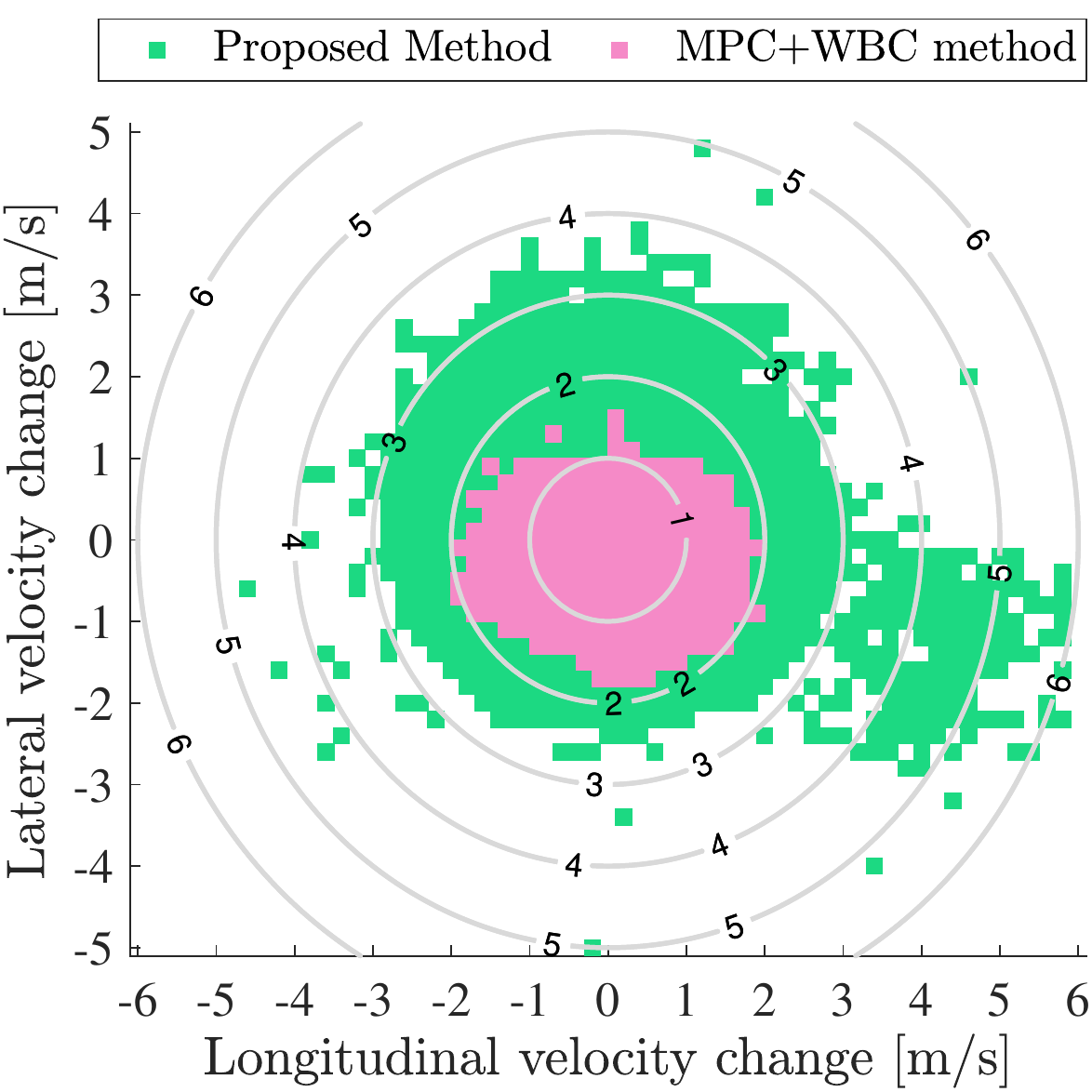}}
    \subfigure[ \footnotesize  Timing $T_2$]{
    \includegraphics[width=0.235\linewidth]{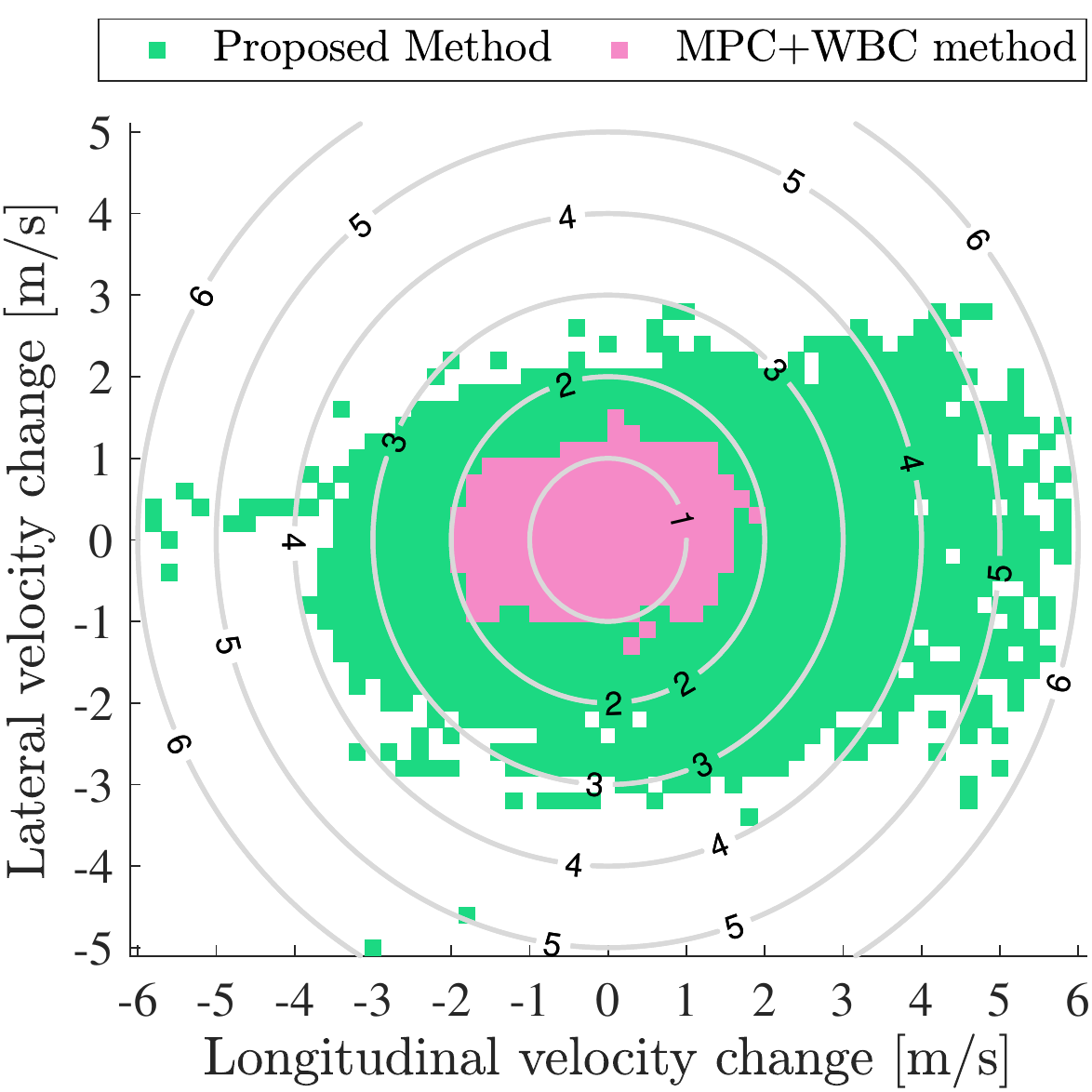}}
    \subfigure[ \footnotesize  Timing $T_3$]{
    \includegraphics[width=0.235\linewidth]{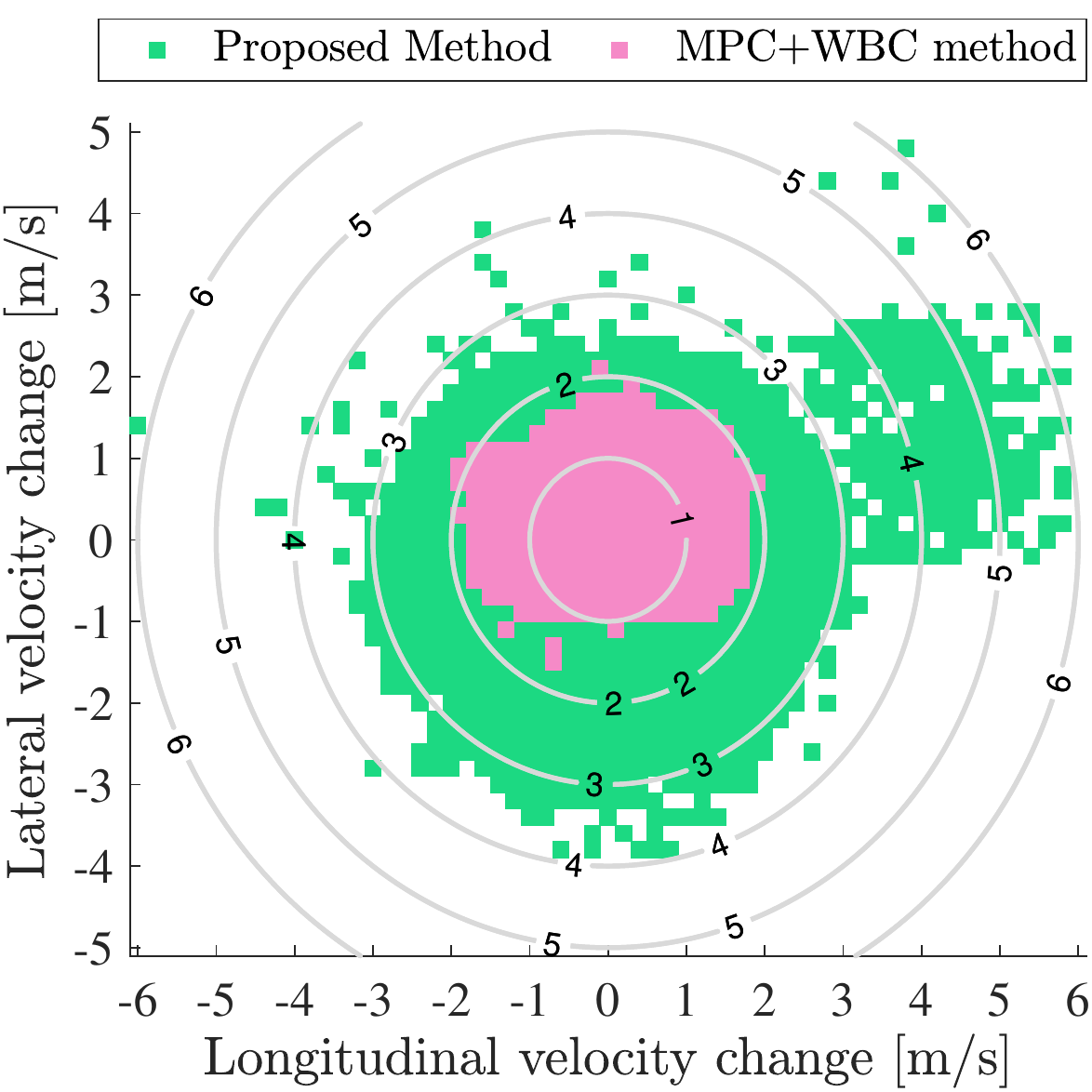}}
    \subfigure[ \footnotesize  Timing $T_4$]{
    \includegraphics[width=0.235\linewidth]{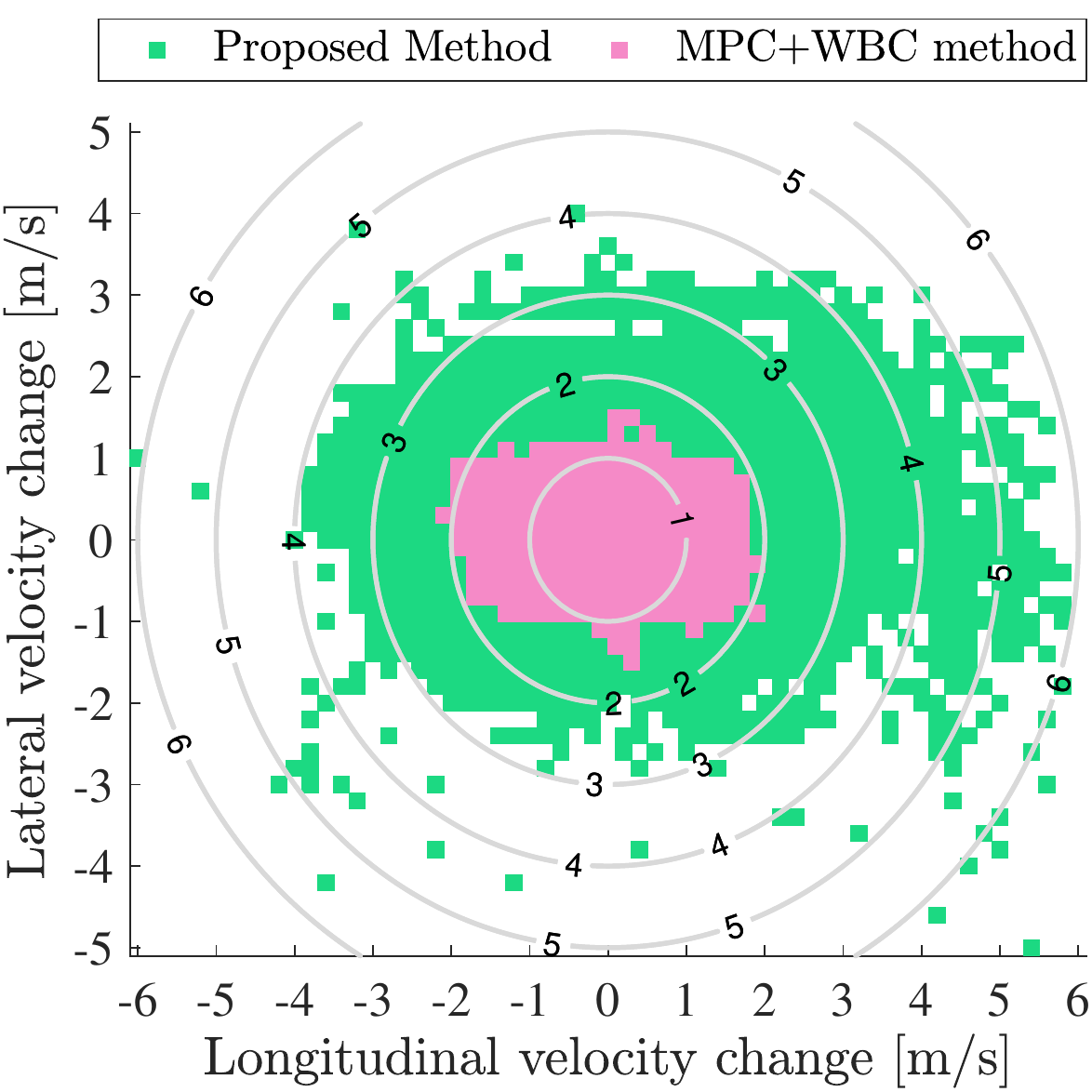}}
    \caption{ \footnotesize   Simulation results for {\bf pace} gait with different {\bf timings}.  }
    \label{fig:timing_3}
\end{figure*}

In addition to the big-picture comparison, a more interesting observation from the results is that the shapes of successful recoverable regions for the baseline approach look similar across all test cases, while they vary quite significantly with the proposed approach. The main underlying reason would probably be the explicit consideration of gait and timing information in the capturability analysis. For example, if the front legs are in mid-stance and rear legs are swinging (Fig.~\ref{fig:timing_c1}), the robot is incapable of rejecting large forward pushes (with a max of about $\SI{2.5}{m\per s}$). While if the disturbance is applied a bit later when the rear legs are transitioning from swing to stance and the front legs are about to swing, the robot is capable of rejecting a much larger forward push (up to about $\SI{6}{m\per s}$). With the proposed approach, the influences of gaits and impact timings are more explicitly characterized, making future investigations on gait timing adaptation more promising. Such an explicit characterization of the gait and timing influences and the accompany improvements in push recovery ability for quadrupedal locomotion is among the major contributions of this work.

\begin{figure}[bp!]
    \centering
    \subfigure[ \footnotesize  CoM velocity]{
    \includegraphics[width=0.3\linewidth]{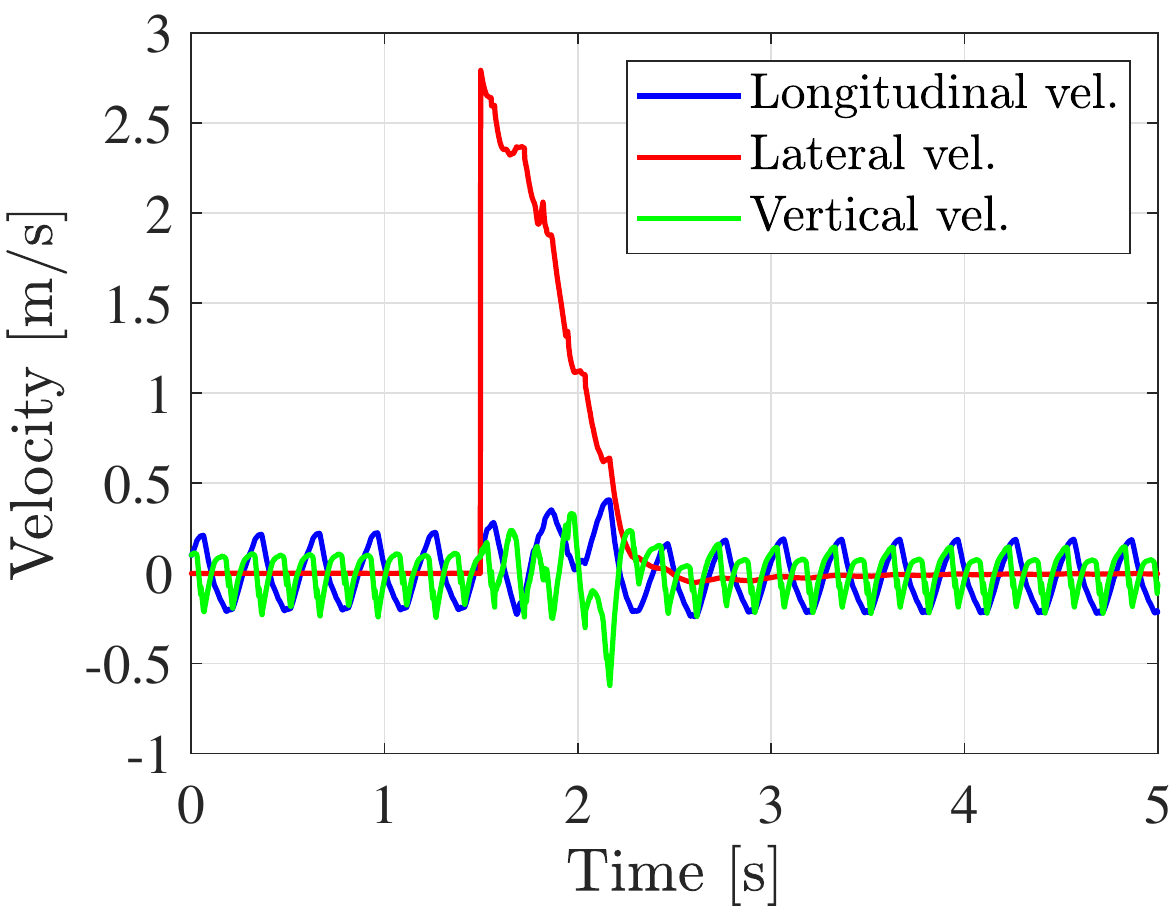}\label{fig:simu_state_traj_2_b}}
    \subfigure[ \footnotesize  Floating base orientation]{
    \includegraphics[width=0.3\linewidth]{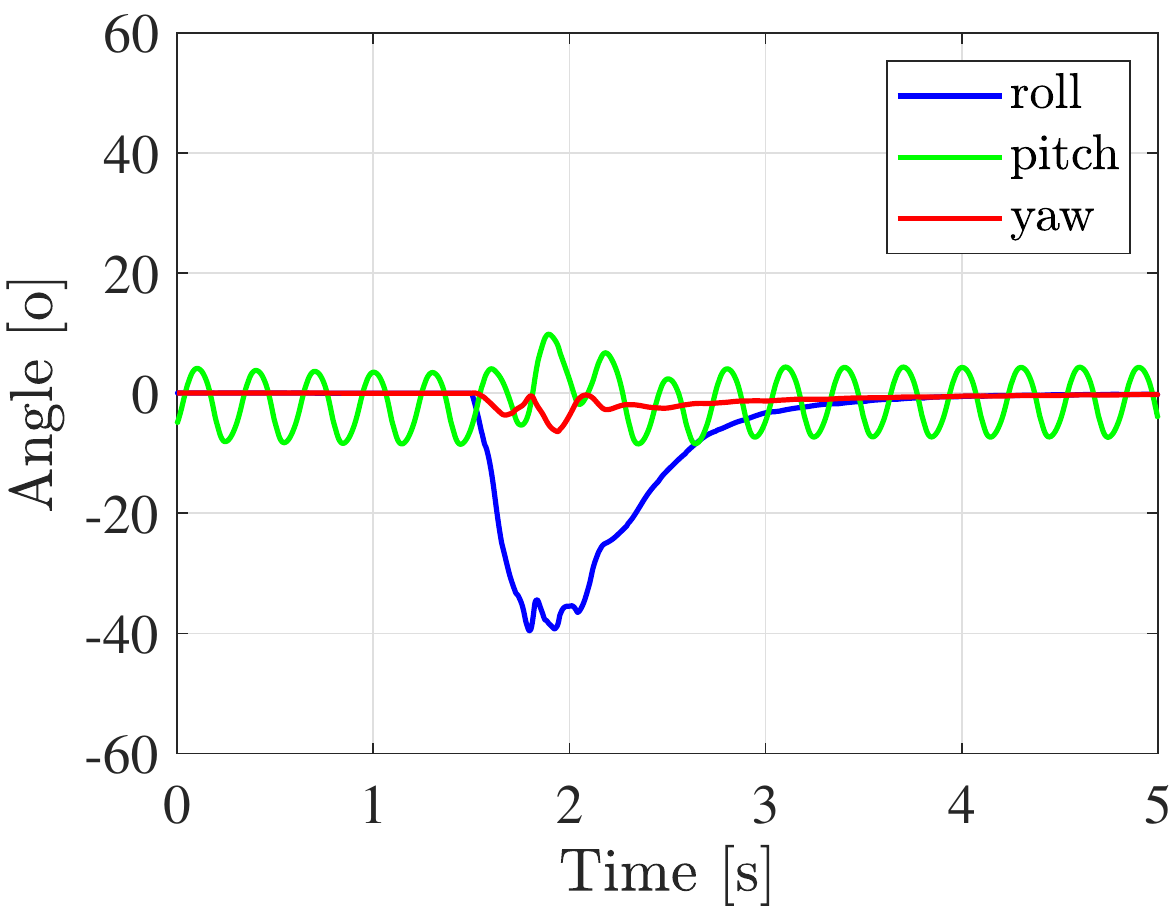}\label{fig:simu_state_traj_2_c}}
    \caption{ \footnotesize Simulation result with the \textbf{proposed} controller. State trajectories with \textbf{bound} gait under impact $\delta v_x =\SI{ 0.0 }{m \per s} , \delta v_y = \SI{ 2.8 }{m \per s}$ at timing \textbf{$T_2$}.}
    \label{fig:simu_state_traj_2}
\end{figure}

\begin{figure}[bp!]
    \centering
    \subfigure[ \footnotesize  CoM velocity]{
    \includegraphics[width=0.3\linewidth]{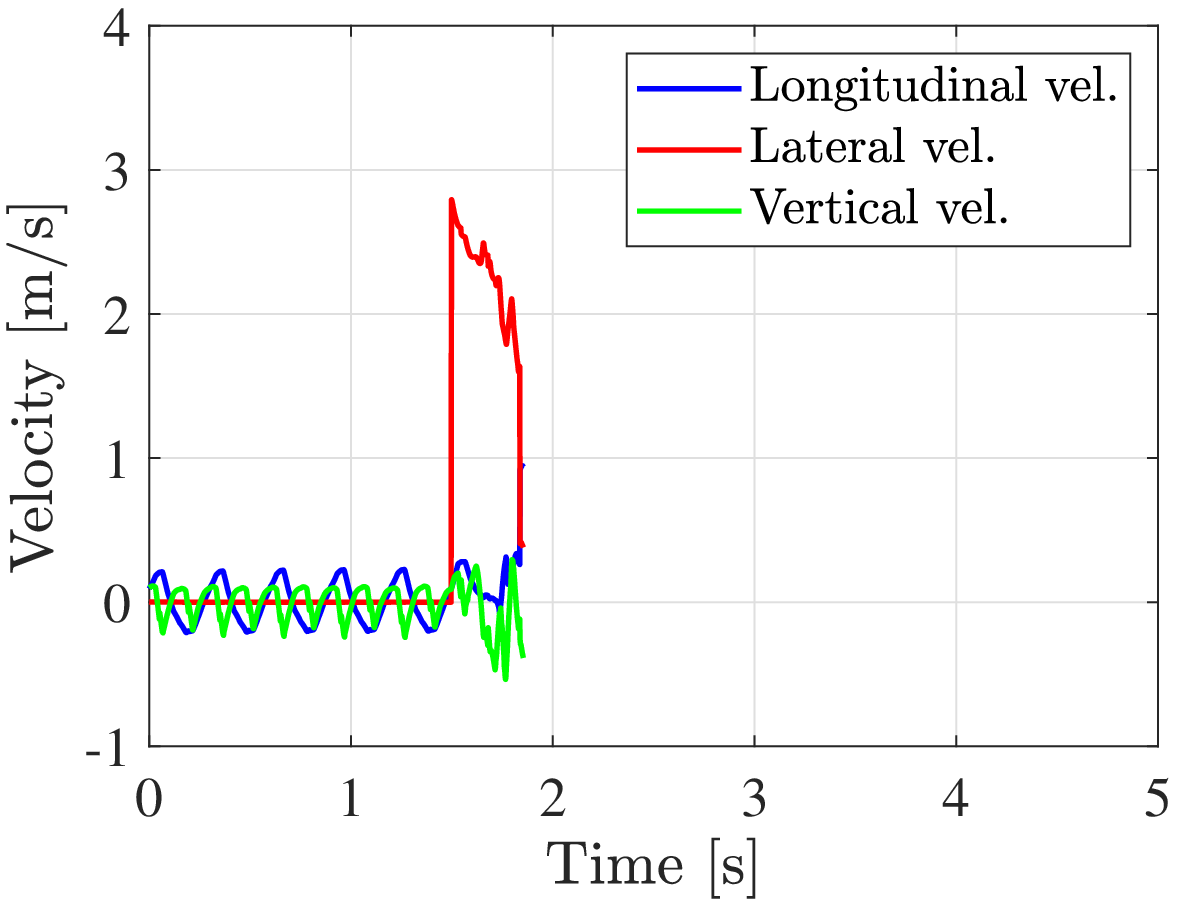}}
    \subfigure[ \footnotesize  Floating base orientation]{
    \includegraphics[width=0.3\linewidth]{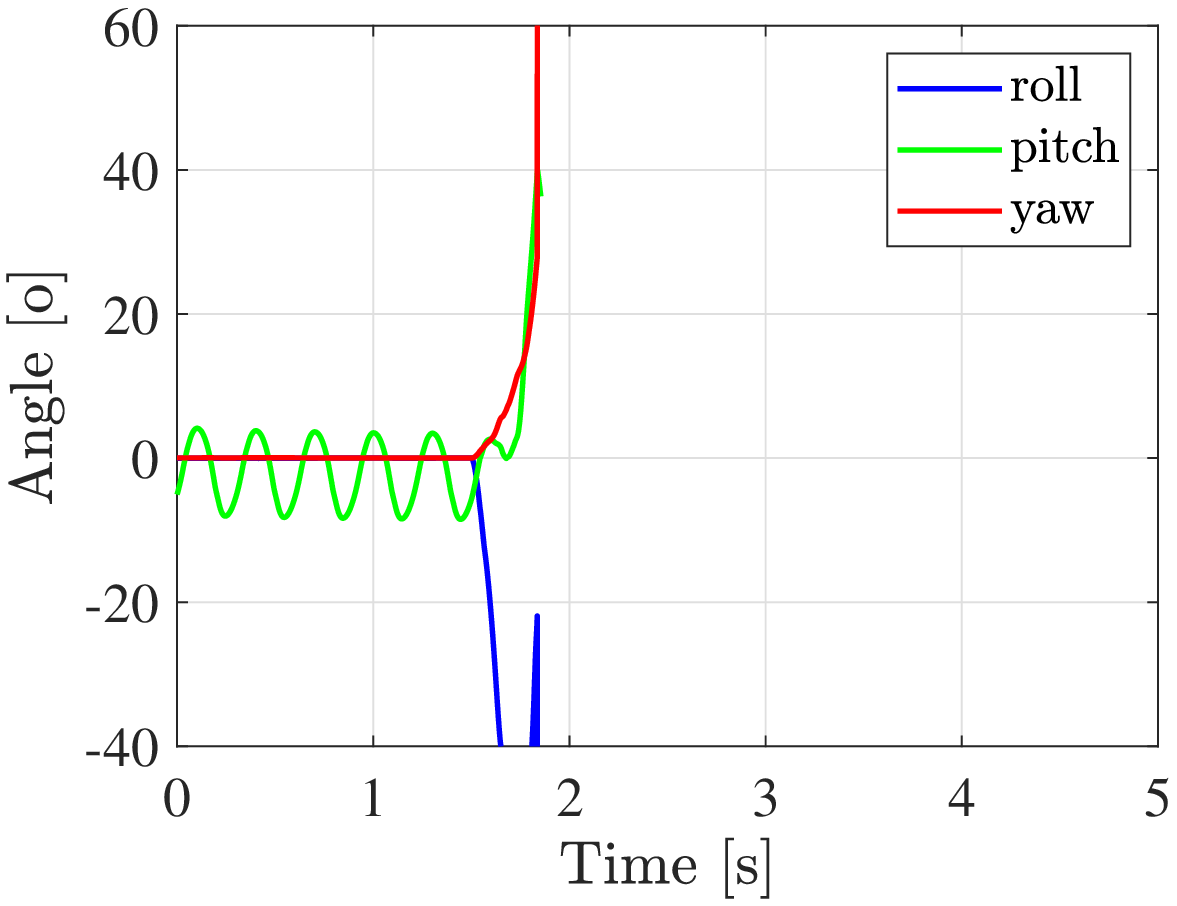}}
    \caption{ \footnotesize Simulation result with the \textbf{baseline} controller. State trajectories with \textbf{bound} gait under impact $\delta v_x =\SI{ 0.0 }{m \per s} , \delta v_y = \SI{ 2.8 }{m \per s}$ at timing \textbf{$T_2$}.}
    \label{fig:simu_state_traj_2}
\end{figure}

\begin{figure}[bp!]
    \centering
    \subfigure[ \footnotesize  CoM velocity    ]{
    \includegraphics[width=0.3\linewidth]{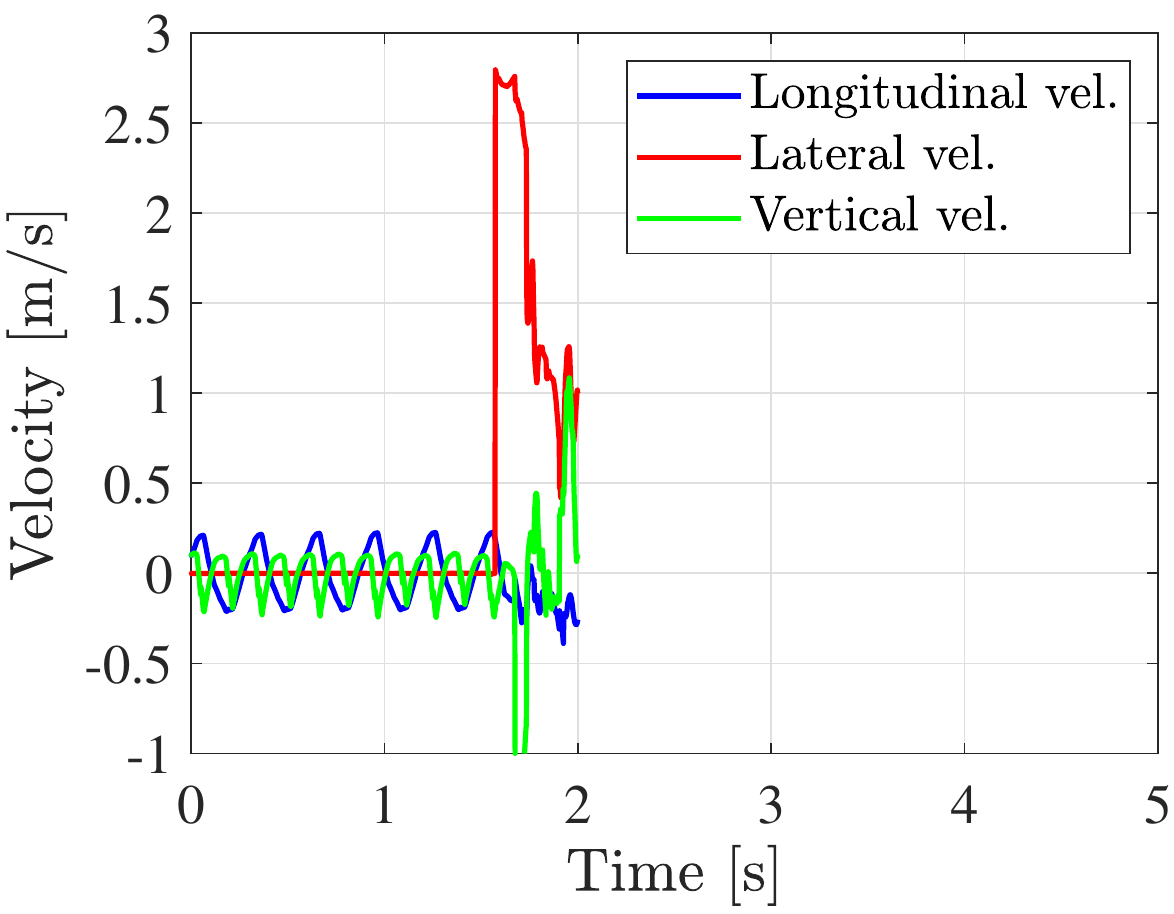}\label{fig:simu_state_traj_4_b}}
    \subfigure[ \footnotesize  Floating base orientation]{
    \includegraphics[width=0.3\linewidth]{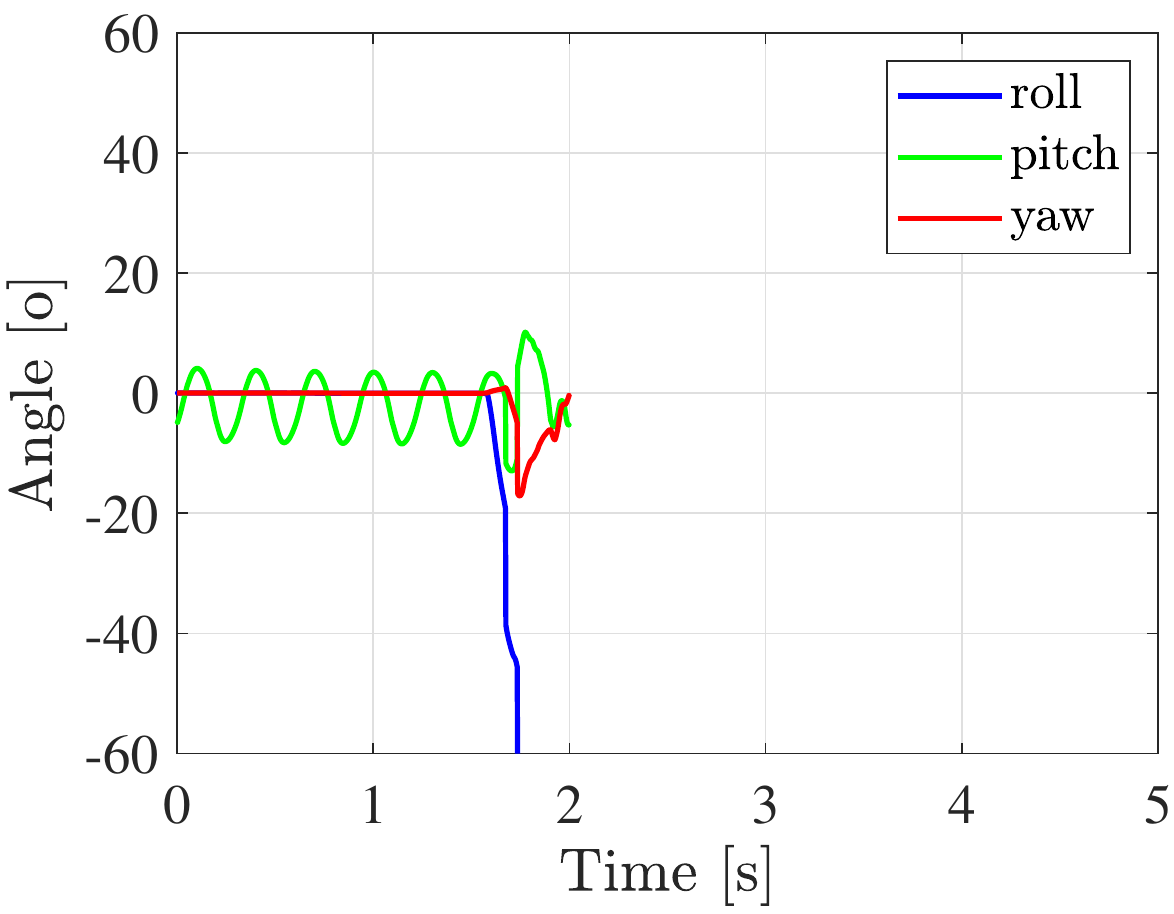}}
    \caption{ \footnotesize Simulation result with the \textbf{proposed} controller. State trajectories with \textbf{bound} gait under impact $\delta v_x =\SI{ 0.0 }{m \per s} , \delta v_y = \SI{ 2.8 }{m \per s}$ at timing \textbf{$T_1$}.}
    \label{fig:simu_state_traj_4}
\end{figure}

To scope into the detailed behavior of the quadrupeds during push recovery with the proposed framework, we investigate the simulated state trajectories of the quadruped with the proposed controller under the same impact $(\delta v_x,\delta v_y) = (\SI{0}{m \per s},\SI{2.8}{m \per s})$ (i.e., a lateral impact of magnitude $\SI{25.2}{Ns}$) with different disturbance timings. Fig.~\ref{fig:simu_state_traj_2} and~\ref{fig:simu_state_traj_4} show the trajectories of the robot state in reaction to the disturbances under bound gait. From the figures, it can be first observed that the robot's state is periodically changing when bounding in place, which can be characterized by the proposed dynamic balance concept. Second, it is clear that the impact timing has a significant influence on the push recovery capability. Due to the specific contact configuration at the moment of impact, the robot's ability to modify its state before contact switching is different, which is clear by comparing Fig.~\ref{fig:simu_state_traj_2_b} and Fig.~\ref{fig:simu_state_traj_4_b}. Furthermore, it can be observed from Fig.~\ref{fig:simu_state_traj_2_c} that the robot's floating base tilts up to $\ang{40}$ in the roll direction during the recovery phase. Such a rotation is clearly beyond the range in which the LIPM simplification holds, while the proposed controller remains successful in rejecting the external disturbance. 

By investigating the simulation results presented in this subsection, the advantage of the proposed push recovery controller over existing methods has been clearly demonstrated via extensive tests with different impacts at different timings. In terms of the area of manageable external impacts, the proposed approach is at least {\bf  300\%} that of the baseline approach. The need of considering dynamic balance is verified through looking at the detailed response to particular impacts. Detailed comparison between the robot's reaction to the same impacts with different controllers further strengthens the contributions of the proposed capturability-based push recovery controller. In the following subsection, we implement the proposed controller on a real quadruped robot to further test its effectiveness.

\begin{figure}[tbp!]
	\centering
	{\includegraphics[width=\linewidth]{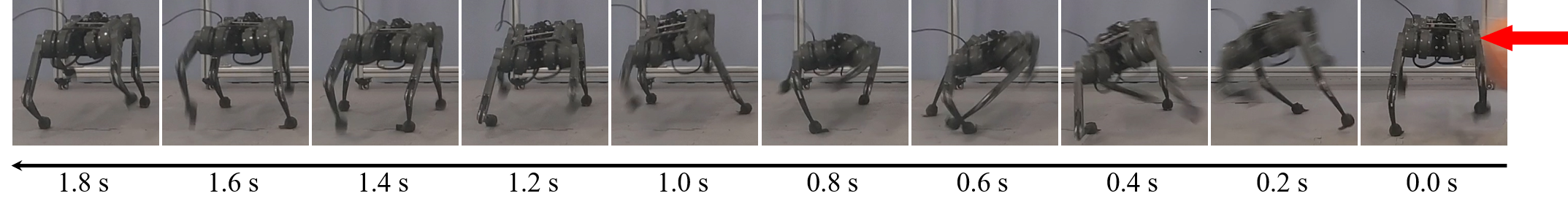}}
	\caption{ \footnotesize   Snapshots of hardware experiment with {\bf trot} gait and the {\bf proposed} controller. Corresponding data given in Fig.~\ref{fig:exp1}.}
	\label{fig:exp1_snap}
	\vspace{-10px}
\end{figure}
\begin{figure}[tbp!]
	\centering
	{\includegraphics[width=\linewidth]{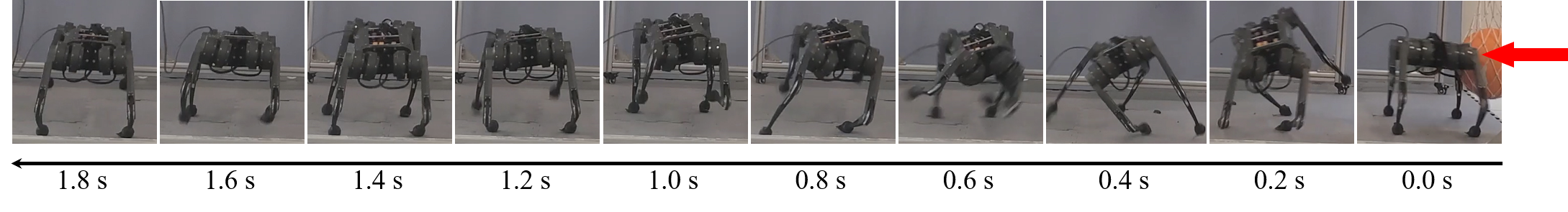}}
	\caption{ \footnotesize   Snapshots of hardware experiment with \textbf{bound} gait and the \textbf{proposed} controller. Corresponding data given in Fig.~\ref{fig:exp2}.}
	\label{fig:exp2_snap}
	\vspace{-10px}
\end{figure}

\begin{figure}[tbp!]
	\centering
	{\includegraphics[width=\linewidth]{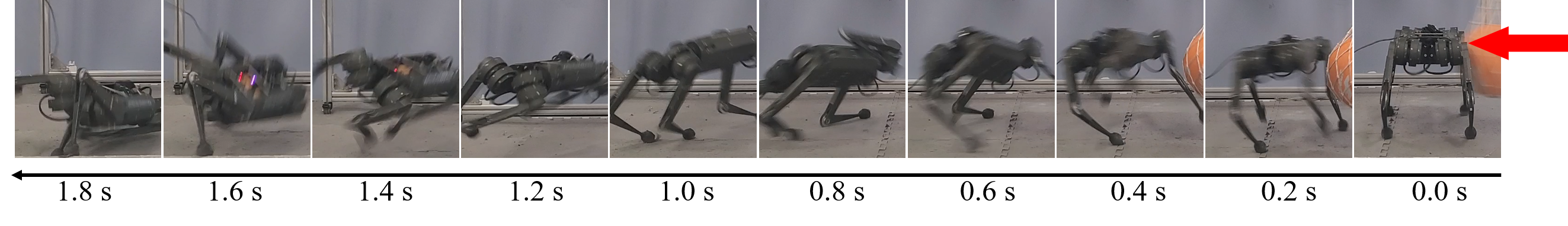}}
	\caption{ \footnotesize   Snapshots of hardware experiment with \textbf{trot} gait and the \textbf{baseline} controller. Corresponding data given in Fig.~\ref{fig:exp1_mpc}.}
	\label{fig:exp3_snap_mpc_trotting}
	\vspace{-10px}
\end{figure}

\begin{figure}[tbp!]
	\centering
	{\includegraphics[width=\linewidth]{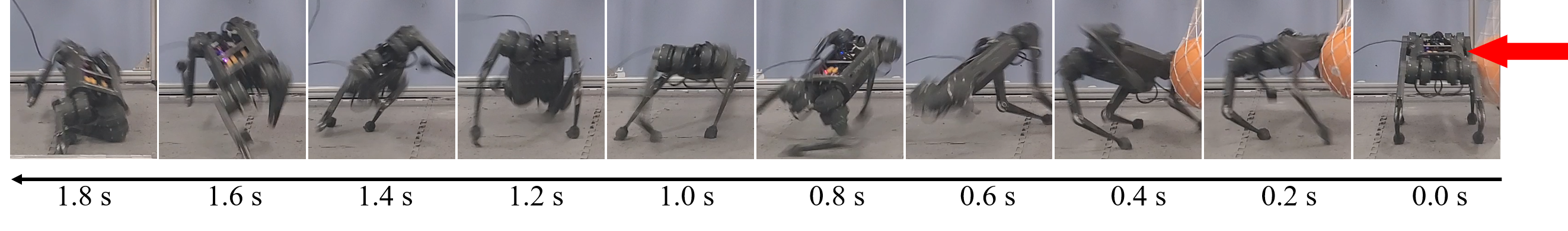}}
	\caption{ \footnotesize   Snapshots of hardware experiment with \textbf{bound} gait and the \textbf{baseline} controller. Corresponding data given in Fig.~\ref{fig:exp2_mpc}.}
	\label{fig:exp3_snap_mpc_bounding}
	\vspace{-10px}
\end{figure}

\begin{figure}[tbp!]
\centering
\subfigure[ \footnotesize  CoM velocity ]{
\includegraphics[width=0.3\linewidth]{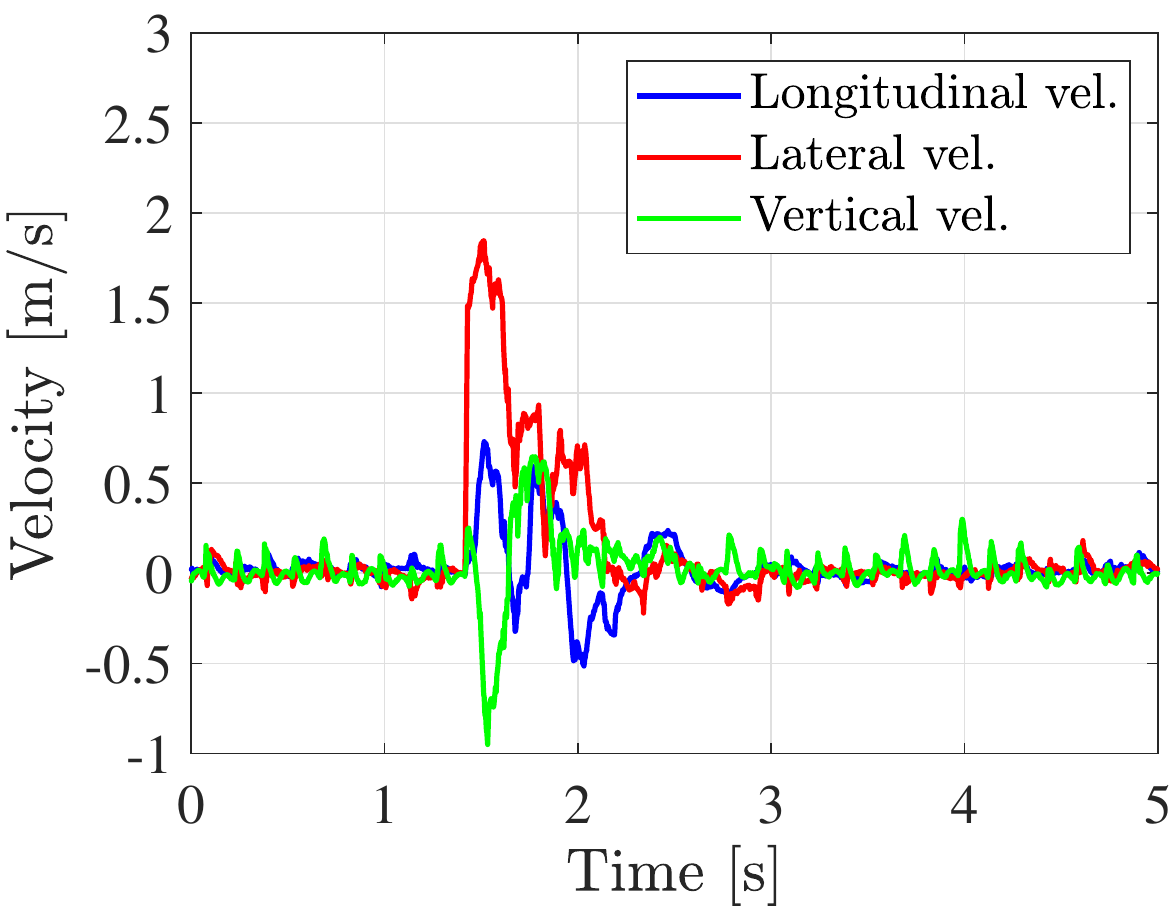}}
\subfigure[ \footnotesize  Floating base orientation ]{
\includegraphics[width=0.3\linewidth]{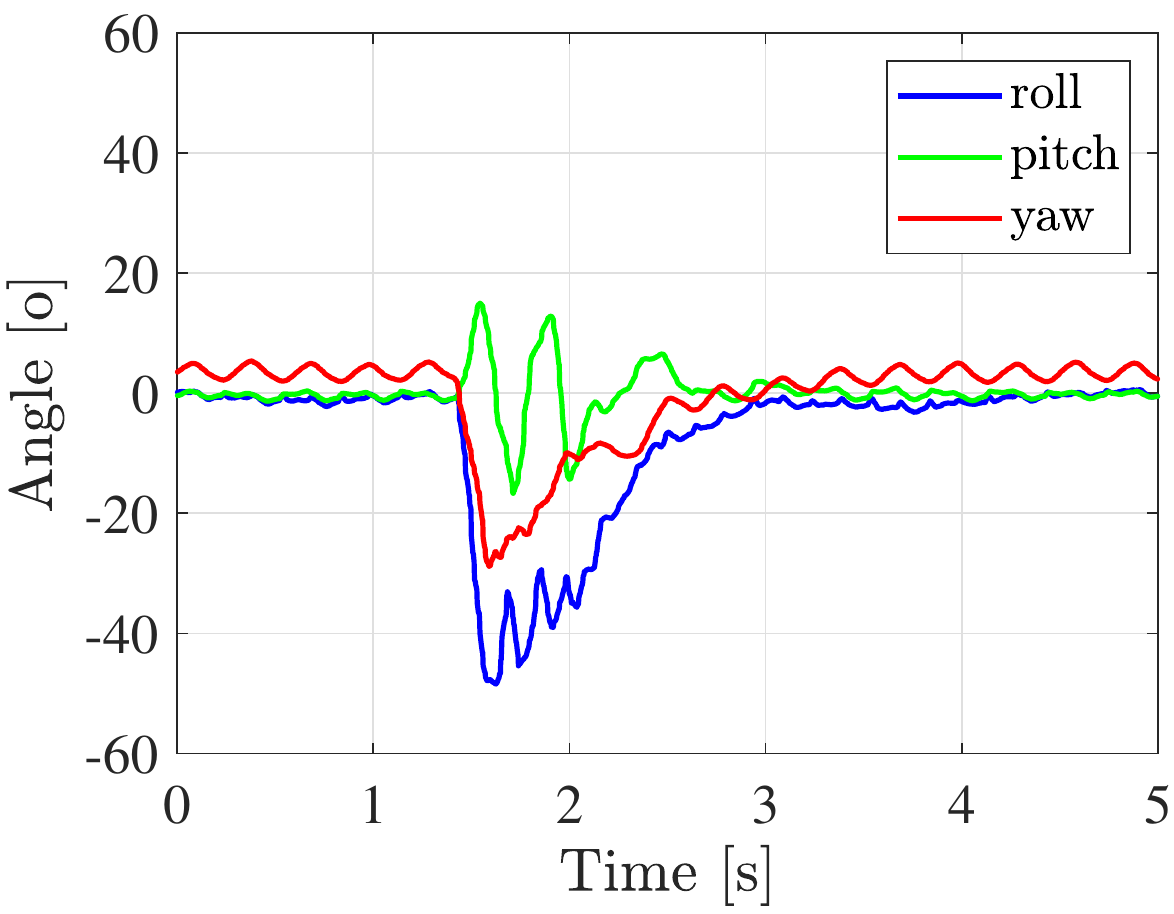}}
\caption{ \footnotesize   Experimental data with proposed approach and \textbf{trot} gait. Impact is about $\SI{18.9}{Ns}$ ($\SI{1.8}{m\per s}\times \SI{10.5}{kg}$).}
\label{fig:exp1}
\end{figure}

\begin{figure}[tbp!]
\centering
\subfigure[ \footnotesize  CoM velocity ]{
\includegraphics[width=0.3\linewidth]{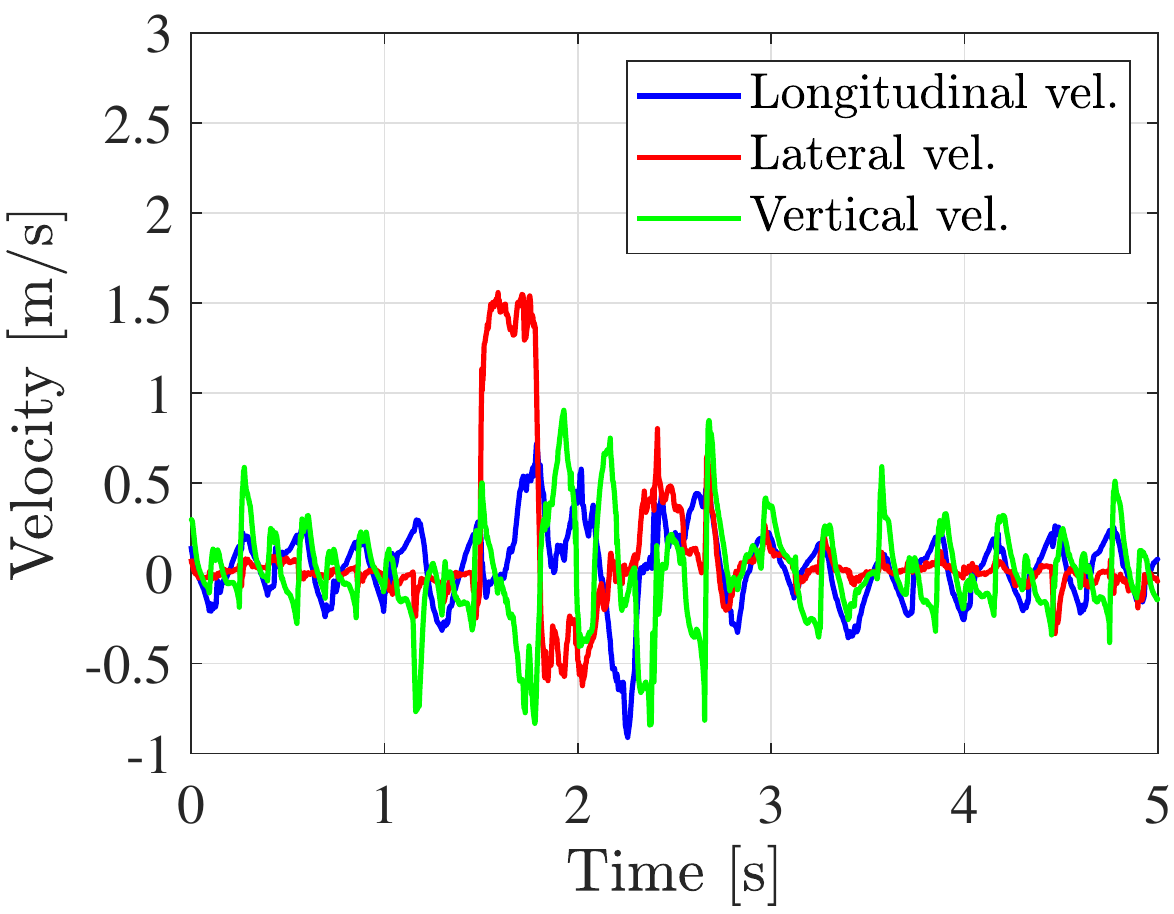}}
\subfigure[ \footnotesize  Floating base orientation ]{
\includegraphics[width=0.3\linewidth]{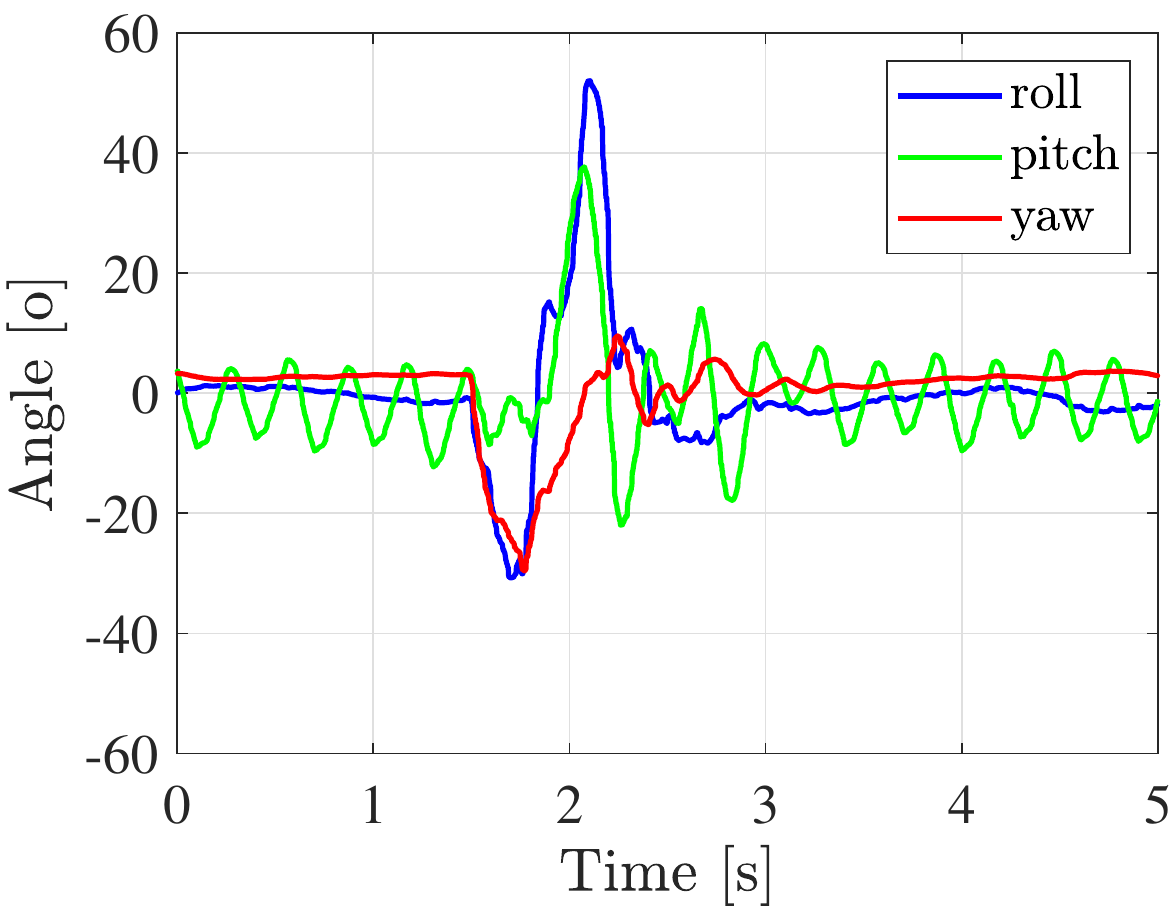}}
\caption{ \footnotesize   Experimental data with \textbf{proposed} controller and \textbf{bound} gait. Impact is about $\SI{16.8}{Ns}$ ($\SI{1.6}{m\per s}\times \SI{10.5}{kg}$).}
\label{fig:exp2}
\end{figure}

\subsection{Experiment Validations}
Extensive hardware experiments have also been performed to validate the proposed approach. For hardware experiments, we test the proposed approach with lateral disturbances that are simple to generate and analyze quantitatively. To quantify the performance of the proposed framework precisely, the external disturbances are applied via a pendulum ball with mass of $\SI{6}{kg}$. During experiments, the ball is pulled to a angle and released so that it moves under gravity and hit the robot at the lowest point (Fig.~\ref{fig:mc}). 

Similar to the previous simulation validations, both the proposed and the baseline MPC+WBIC based approaches are tested and compared. Overall, the experimental tests show that the proposed approach performs up to {\bf 200\%} as good as the baseline controller. The reader is kindly referred to the supplementary video for additional hardware and simulation experiments. 

\begin{figure}[tbp!]
\centering
\subfigure[ \footnotesize  CoM velocity]{
\includegraphics[width=0.3\linewidth]{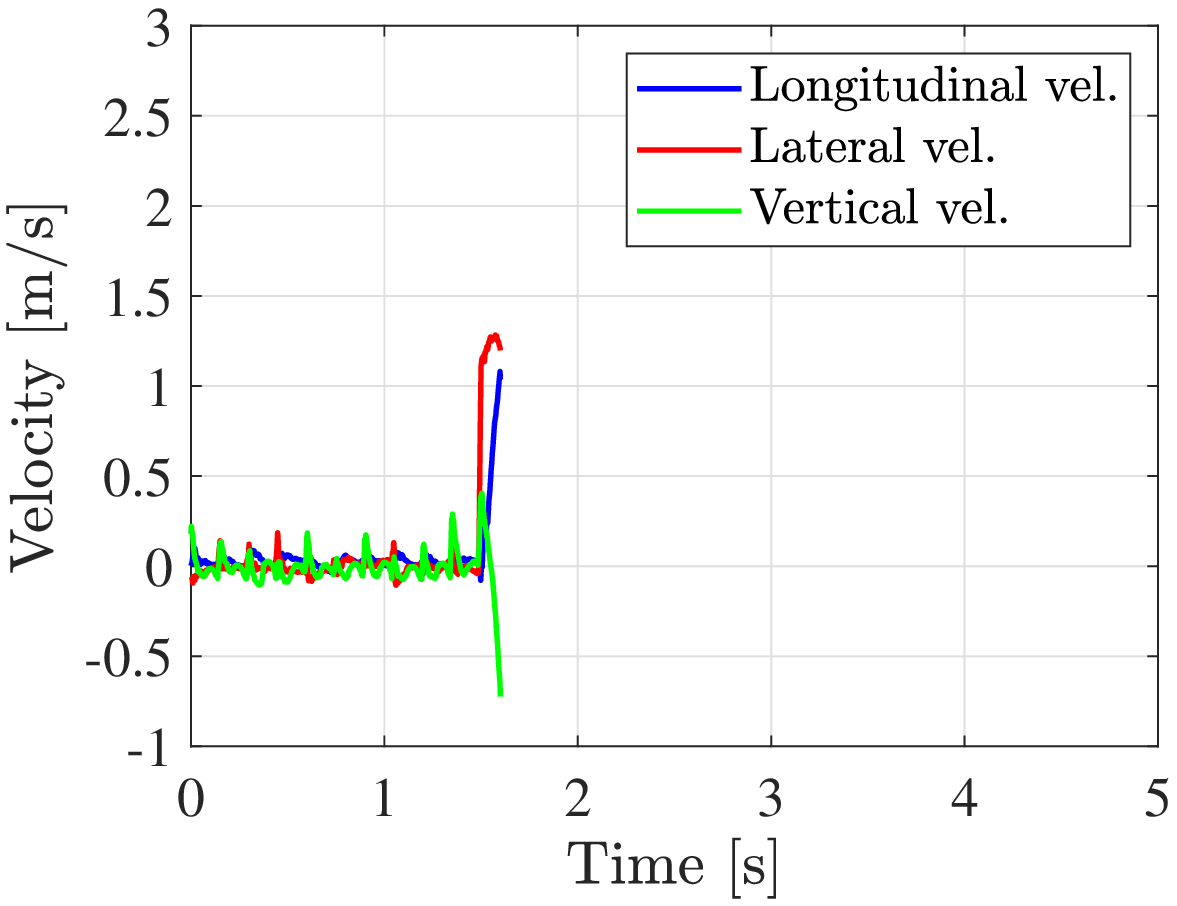}}
\subfigure[ \footnotesize  Floating base orientation]{
\includegraphics[width=0.3\linewidth]{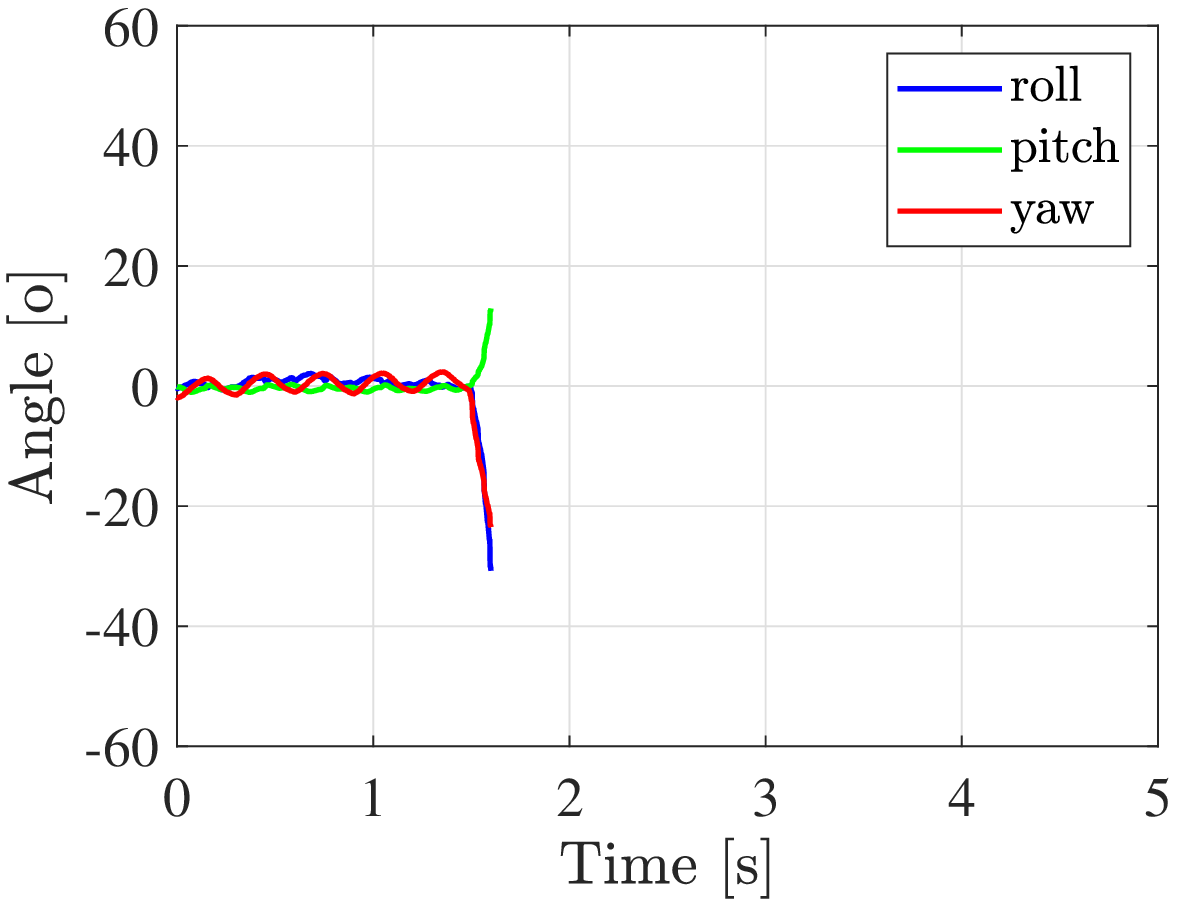}}
\caption{ \footnotesize   Experimental data with \textbf{baseline} controller and \textbf{trot} gait. }
\label{fig:exp1_mpc}
\end{figure}

\begin{figure}[tbp!]
\centering
\subfigure[ \footnotesize  CoM velocity]{
\includegraphics[width=0.3\linewidth]{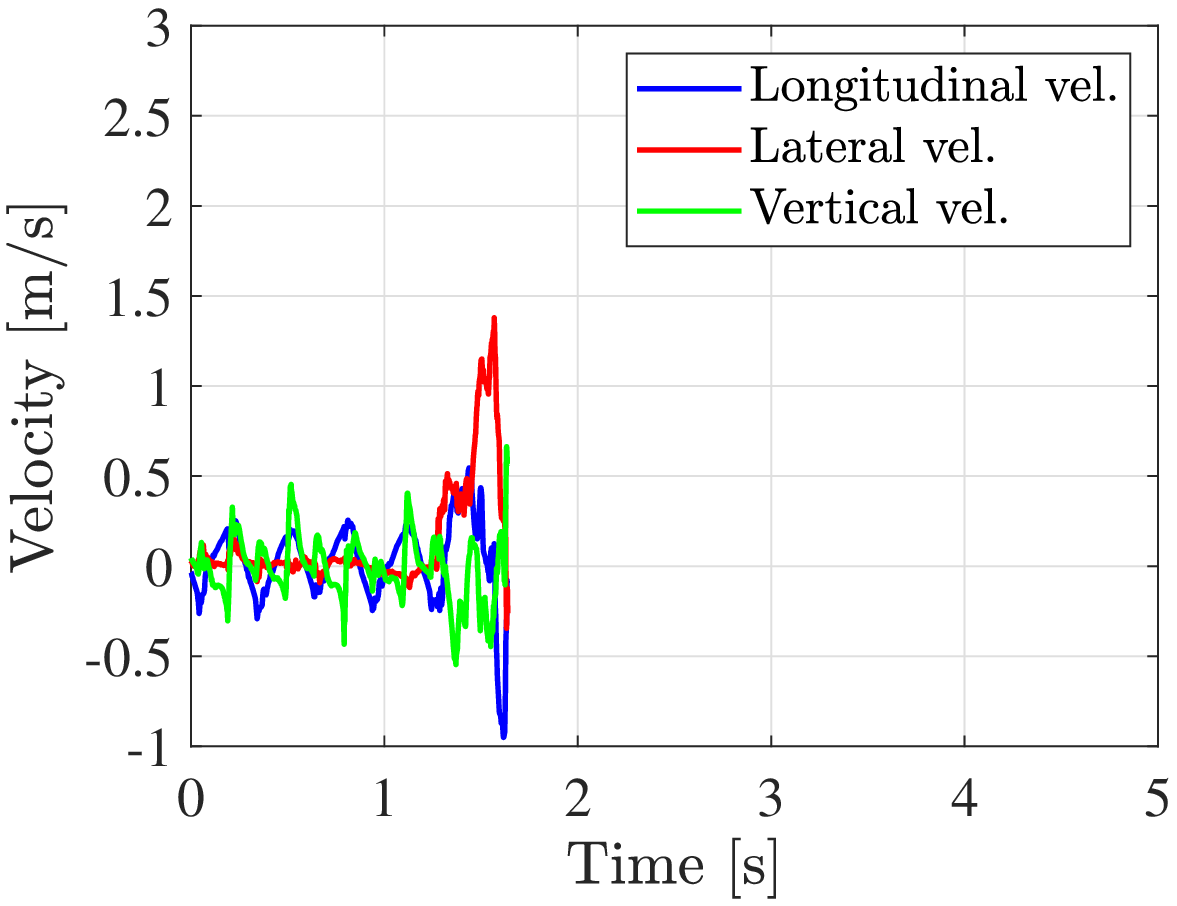}}
\subfigure[ \footnotesize  Floating base orientation]{
\includegraphics[width=0.3\linewidth]{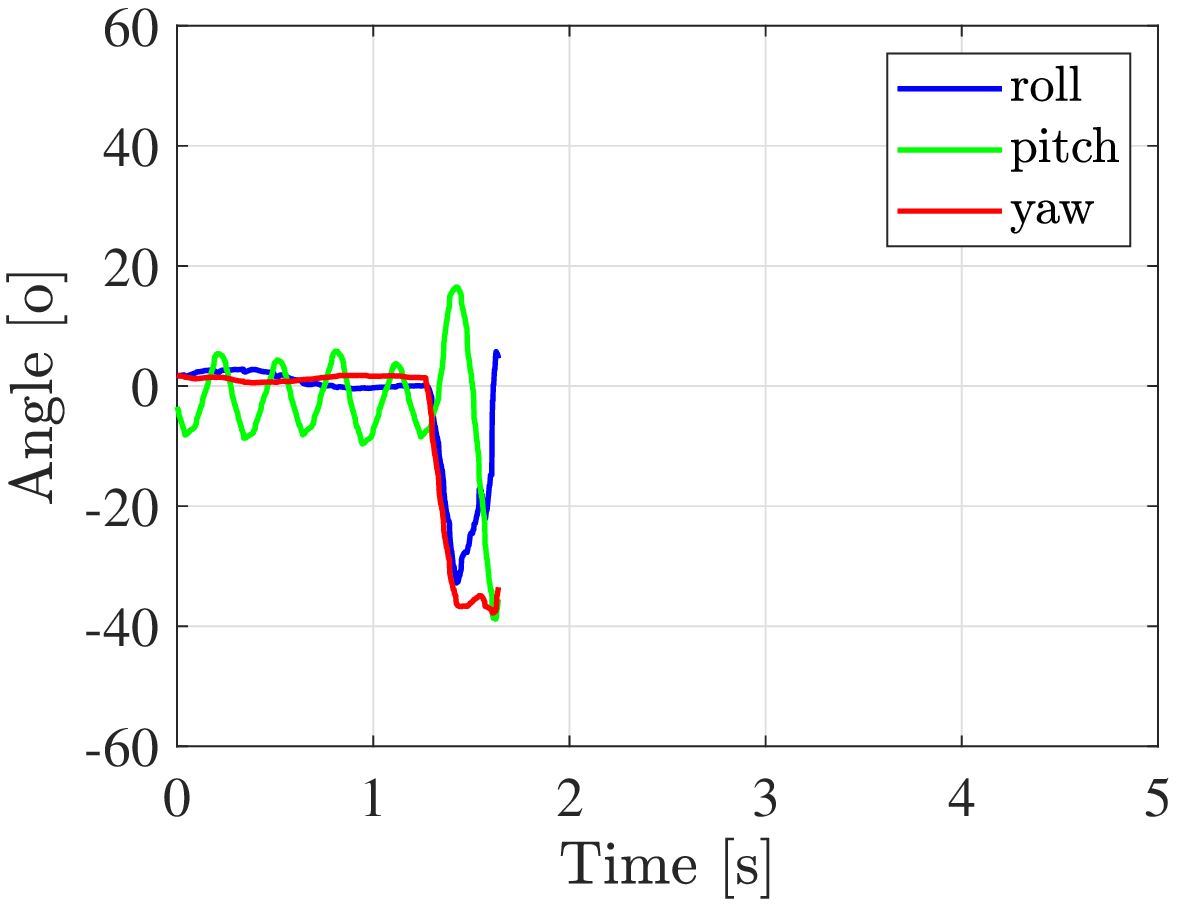}}
\caption{ \footnotesize   Experimental data with \textbf{baseline} controller and \textbf{bound} gait. }
\label{fig:exp2_mpc}
\end{figure}

Snapshots of two pushes with the proposed approach and two pushes with the baseline approach are presented in Figs.~\ref{fig:exp1_snap}-\ref{fig:exp2_snap} and Figs.~\ref{fig:exp3_snap_mpc_trotting}-\ref{fig:exp3_snap_mpc_bounding}, respectively. By comparing Fig.~\ref{fig:exp1} and Fig.~\ref{fig:exp1_mpc}, it can be seen that for the impact with magnitude roughly $\SI{10.5}{Nm}$ ($\SI{1}{m\per s}\times \SI{10.5}{kg}$), the proposed controller succeeds in rejecting the disturbance with both trot and bound gaits. However, the baseline controller fails at this level of impact with both gaits. For trot gait, the proposed controller is able to reject external disturbance up to roughly $\SI{18.9}{Nm}$ ($\SI{1.8}{m\per s}\times \SI{10.5}{kg}$) according to Fig.~\ref{fig:exp1}. For bound gait, the maximum successful disturbance rejection tested in experiment is roughly $\SI{16.8}{Nm}$ ($\SI{1.6}{m\per s}\times \SI{10.5}{kg}$). From these presented results, the advantage of the proposed controller over the baseline controller is further justified.  

By comparing the hardware experiment result (Fig.~\ref{fig:exp1}) and the simulation result (Fig.~\ref{fig:simu_state_traj_2}), it is clear that the robot's floating base experiences a larger fluctuation during balance. In addition, due to various uncertainties in hardware experiments such as state estimation noise, unmodeled motor dynamics, etc., the dynamic balance motion in hardware experiment is not perfectly periodic, as can be seen from Fig.~\ref{fig:exp1} for example. 

By comparing the trajectories plotted in Figs.~\ref{fig:exp1}-\ref{fig:exp2_mpc}, it can be observed that the baseline controller fails under smaller disturbances as compared with the proposed controller. In particular, the failure commonly occurs during the first step of the recovery phase. Without considering a dedicated footstep planning scheme associated with capturability, the footstep locations selected by the baseline controller are not good enough to support a recovery from hard pushes. On the contrary, the proposed controller is not only able to resist larger impact, but also behaves differently. To be more specific, the proposed controller typically fails during the end of the recovery phase after taking one or two successful steps. These observations suggests that the proposed controller is principally different from and has significant advantage over the baseline controller. 

\subsection{Discussions}

We summarize this section with additional discussions on the experimental results. 

The first thing worth attention is the discrepancy between the simulation and hardware experiments. For example, the robot is able to resist lateral impact of magnitude $\SI{2}{m\per s}$ regardless of the underlying gait and impact timing, according to Figs.~\ref{fig:timing_1}-~\ref{fig:timing_3}. However, in hardware experiments, we never achieve this number. Among various contributing factors, the state estimator is an important one. Recall that we assumed perfect state estimation in all simulations and used the non-further-tuned Kalman filter based state estimation provided in~\cite{DiCarlo2018} in hardware tests. With properly tuned parameters or better state estimators that return more accurate estimated state, the quadruped is expected to resist harder pushes in practice. In addition, as can be observed from videos of the hardware tests, self collision between legs during recovery and imperfect swing leg trajectory planning and tracking cause failure of push recovery as well. These issues are not reflected in simulation, partially leading to the discrepancy. Despite the performance discrepancy, the advantage of the proposed controller can still be validated by the simulation and hardware experiments. 

Regarding the hardware experiments, our testing scenario is setup in a way to push the limit of the robot in response to external pushes and the experiments are so expensive to do. During the experiments, the robot's shank broke once and the robot's front right leg broke off at the attachment to the body (Ab-Ad joint) twice when implementing our algorithm.

With the collected data from both simulation and hardware experiments, the importance of considering capturability in footstep location determination and the influence of gait timings on the push recovery ability are demonstrated. In addition, the presented capturability analysis for quadrupeds that considers dynamical balance actually offers a strategy of adjusting gait timing in reaction to large external disturbance. In short, if the robot's state is capturable according to the analysis but the current contact configuration does not agree with the one corresponding to capturability analysis, it naturally suggests that the robot should change contact configuration (i.e., adjust gait) to handle the external disturbance. Investigations on these gait adjustment are important directions of our future researches.

\section{Concluding Remarks}\label{sec:concl}
In this paper, we study capturability analysis and push recovery for quadrupedal robots, which is a critical problem for safe and reliable operation of quadrupeds. Due to the distinct multi-contact feature and the resulting flexibility in gait variability for quadrupeds, well established capturability concept and subsequent push recovery strategies for bipeds cannot be directly applied. To address this issue, we first adopt a switched system to model dynamic quadruped locomotion and then instantiate the fundamental capturability concept with the switched system model to obtain the quadrupedal capturability notion. An EMPC-based approach is then utilized to formulate and solve the corresponding capturability analysis problem. Relying on the analysis result, a novel planning scheme for finding a target footsteps and the transitioning footsteps is devised, which involves solving only quadratic programs during online implementation. Extensive simulations and experiments are conducted to validate the performance of the proposed framework. The validations suggest an up to $100\SI{}{\%}$ improvements in rejecting external pushes as compared with state-of-the-art model predictive control and whole-body impulse control (MPC+WBIC) based control strategy.

\label{sec:concl}
\bibliographystyle{unsrt}
\bibliography{Quad_PR}

\begin{thebibliography}{10}

\bibitem{BD}
\url{https://www.bostondynamics.com/spot}.

\bibitem{Semini2011}
Claudio Semini, Nikos~G Tsagarakis, Emanuele Guglielmino, Michele Focchi,
  Ferdinando Cannella, and Darwin~G Caldwell.
\newblock Design of hyq--a hydraulically and electrically actuated quadruped
  robot.
\newblock {\em Proceedings of the Institution of Mechanical Engineers, Part I:
  Journal of Systems and Control Engineering}, 225(6):831--849, 2011.

\bibitem{Seok2013}
S.~{Seok}, A.~{Wang}, Meng~Yee {Chuah}, D.~{Otten}, J.~{Lang}, and S.~{Kim}.
\newblock Design principles for highly efficient quadrupeds and implementation
  on the mit cheetah robot.
\newblock In {\em IEEE International Conference on Robotics and Automation},
  pages 3307--3312, 2013.

\bibitem{Bledt2018}
G.~{Bledt}, M.~J. {Powell}, B.~{Katz}, J.~{Di Carlo}, P.~M. {Wensing}, and
  S.~{Kim}.
\newblock Mit cheetah 3: Design and control of a robust, dynamic quadruped
  robot.
\newblock In {\em IEEE/RSJ International Conference on Intelligent Robots and
  Systems}, pages 2245--2252, 2018.

\bibitem{Katz2019}
B.~{Katz}, J.~D. {Carlo}, and S.~{Kim}.
\newblock Mini cheetah: A platform for pushing the limits of dynamic quadruped
  control.
\newblock In {\em IEEE International Conference on Robotics and Automation},
  pages 6295--6301, 2019.

\bibitem{Hutter2016}
M.~{Hutter}, C.~{Gehring}, D.~{Jud}, A.~{Lauber}, C.~D. {Bellicoso},
  V.~{Tsounis}, J.~{Hwangbo}, K.~{Bodie}, P.~{Fankhauser}, M.~{Bloesch},
  R.~{Diethelm}, S.~{Bachmann}, A.~{Melzer}, and M.~{Hoepflinger}.
\newblock Anymal - a highly mobile and dynamic quadrupedal robot.
\newblock In {\em IEEE/RSJ International Conference on Intelligent Robots and
  Systems}, pages 38--44, 2016.

\bibitem{Kim2019}
Donghyun Kim, Jared Di~Carlo, Benjamin Katz, Gerardo Bledt, and Sangbae Kim.
\newblock Highly dynamic quadruped locomotion via whole-body impulse control
  and model predictive control.
\newblock {\em arXiv preprint arXiv:1909.06586}, 2019.

\bibitem{Aubin1990}
Jean-Pierre Aubin.
\newblock A survey of viability theory.
\newblock {\em SIAM Journal on Control and Optimization}, 28(4):749--788, 1990.

\bibitem{Aubin2011}
Jean-Pierre Aubin, Alexandre~M Bayen, and Patrick Saint-Pierre.
\newblock {\em Viability theory: new directions}.
\newblock Springer Science \& Business Media, 2011.

\bibitem{Wieber2002}
Pierre-Brice Wieber.
\newblock {On the stability of walking systems}.
\newblock In {\em {Proceedings of the International Workshop on Humanoid and
  Human Friendly Robotics}}, Tsukuba, Japan, 2002.

\bibitem{Wieber2008}
P.~{Wieber}.
\newblock Viability and predictive control for safe locomotion.
\newblock In {\em IEEE/RSJ International Conference on Intelligent Robots and
  Systems}, pages 1103--1108, 2008.

\bibitem{Pratt2006}
J.~{Pratt}, J.~{Carff}, S.~{Drakunov}, and A.~{Goswami}.
\newblock Capture point: A step toward humanoid push recovery.
\newblock In {\em IEEE-RAS International Conference on Humanoid}, pages
  200--207, 2006.

\bibitem{Koolen2012}
Twan Koolen, Tomas de~Boer, John Rebula, Ambarish Goswami, and Jerry Pratt.
\newblock Capturability-based analysis and control of legged locomotion, part
  1: Theory and application to three simple gait models.
\newblock {\em The International Journal of Robotics Research},
  31(9):1094--1113, 2012.

\bibitem{Pratt2012}
Jerry Pratt, Twan Koolen, Tomas de~Boer, John Rebula, Sebastien Cotton, John
  Carff, Matthew Johnson, and Peter Neuhaus.
\newblock Capturability-based analysis and control of legged locomotion, part
  2: Application to m2v2, a lower-body humanoid.
\newblock {\em The International Journal of Robotics Research},
  31(10):1117--1133, 2012.

\bibitem{Koolen2016}
T.~{Koolen}, M.~{Posa}, and R.~{Tedrake}.
\newblock Balance control using center of mass height variation: Limitations
  imposed by unilateral contact.
\newblock In {\em IEEE-RAS International Conference on Humanoid Robots}, pages
  8--15, 2016.

\bibitem{Caron2018}
S.~{Caron} and B.~{Mallein}.
\newblock Balance control using both zmp and com height variations: A convex
  boundedness approach.
\newblock In {\em IEEE International Conference on Robotics and Automation},
  pages 1779--1784, 2018.

\bibitem{Liu2021}
Junwei Liu, Hua Chen, Patrick~M. Wensing, and Wei Zhang.
\newblock Instantaneous capture input for balancing the variable height
  inverted pendulum.
\newblock {\em IEEE Robotics and Automation Letters}, 6(4):7421--7428, 2021.

\bibitem{Stephens2007}
B.~{Stephens}.
\newblock Humanoid push recovery.
\newblock In {\em IEEE-RAS International Conference on Humanoid Robots}, pages
  589--595, 2007.

\bibitem{Stephens2010}
B.~J. {Stephens} and C.~G. {Atkeson}.
\newblock Push recovery by stepping for humanoid robots with force controlled
  joints.
\newblock In {\em IEEE-RAS International Conference on Humanoid Robots}, pages
  52--59, 2010.

\bibitem{Stephens2011}
Benjamin Stephens.
\newblock {\em Push Recovery Control for Force-Controlled Humanoid Robots}.
\newblock PhD thesis, Carnegie Mellon University Pittsburgh, Pennsylvania USA,
  2011.

\bibitem{Diedam2008}
H.~{Diedam}, D.~{Dimitrov}, P.~{Wieber}, K.~{Mombaur}, and M.~{Diehl}.
\newblock Online walking gait generation with adaptive foot positioning through
  linear model predictive control.
\newblock In {\em IEEE/RSJ International Conference on Intelligent Robots and
  Systems}, pages 1121--1126, 2008.

\bibitem{Herdt2010}
Andrei Herdt, Holger Diedam, Pierre-Brice Wieber, Dimitar Dimitrov, Katja
  Mombaur, and Moritz Diehl.
\newblock Online walking motion generation with automatic footstep placement.
\newblock {\em Advanced Robotics}, 24(5-6):719--737, 2010.

\bibitem{Scianca2020}
N.~{Scianca}, D.~{De Simone}, L.~{Lanari}, and G.~{Oriolo}.
\newblock Mpc for humanoid gait generation: Stability and feasibility.
\newblock {\em IEEE Transactions on Robotics}, 36(4):1171--1188, 2020.

\bibitem{Boussema2019}
C.~{Boussema}, M.~J. {Powell}, G.~{Bledt}, A.~J. {Ijspeert}, P.~M. {Wensing},
  and S.~{Kim}.
\newblock Online gait transitions and disturbance recovery for legged robots
  via the feasible impulse set.
\newblock {\em IEEE Robotics and Automation Letters}, 4(2):1611--1618, 2019.

\bibitem{DiCarlo2018}
J.~{Di Carlo}, P.~M. {Wensing}, B.~{Katz}, G.~{Bledt}, and S.~{Kim}.
\newblock Dynamic locomotion in the mit cheetah 3 through convex
  model-predictive control.
\newblock In {\em IEEE/RSJ International Conference on Intelligent Robots and
  Systems (IROS)}, pages 1--9, 2018.

\bibitem{Bledt2017}
Gerardo Bledt, Patrick~M Wensing, and Sangbae Kim.
\newblock Policy-regularized model predictive control to stabilize diverse
  quadrupedal gaits for the mit cheetah.
\newblock In {\em IEEE/RSJ International Conference on Intelligent Robots and
  Systems (IROS)}, pages 4102--4109. IEEE, 2017.

\bibitem{Bledt2019}
Gerardo Bledt and Sangbae Kim.
\newblock Implementing regularized predictive control for simultaneous
  real-time footstep and ground reaction force optimization.
\newblock In {\em IEEE/RSJ International Conference on Intelligent Robots and
  Systems (IROS)}, pages 6316--6323. IEEE, 2019.

\bibitem{Bledt2020}
Gerardo Bledt and Sangbae Kim.
\newblock Extracting legged locomotion heuristics with regularized predictive
  control.
\newblock In {\em IEEE International Conference on Robotics and Automation
  (ICRA)}, pages 406--412. IEEE, 2020.

\bibitem{Chignoli2020}
M.~{Chignoli} and P.~M. {Wensing}.
\newblock Variational-based optimal control of underactuated balancing for
  dynamic quadrupeds.
\newblock {\em IEEE Access}, 8:49785--49797, 2020.

\bibitem{Ding2021}
Y.~{Ding}, A.~{Pandala}, C.~{Li}, Y.~H. {Shin}, and H.~W. {Park}.
\newblock Representation-free model predictive control for dynamic motions in
  quadrupeds.
\newblock {\em IEEE Transactions on Robotics}, pages 1--18, 2021.

\bibitem{Winkler2017}
A.~W. {Winkler}, F.~{Farshidian}, M.~{Neunert}, D.~{Pardo}, and J.~{Buchli}.
\newblock Online walking motion and foothold optimization for quadruped
  locomotion.
\newblock In {\em IEEE International Conference on Robotics and Automation
  (ICRA)}, pages 5308--5313, 2017.

\bibitem{Neunert2018}
M.~{Neunert}, M.~{Stäuble}, M.~{Giftthaler}, C.~D. {Bellicoso}, J.~{Carius},
  C.~{Gehring}, M.~{Hutter}, and J.~{Buchli}.
\newblock Whole-body nonlinear model predictive control through contacts for
  quadrupeds.
\newblock {\em IEEE Robotics and Automation Letters}, 3(3):1458--1465, 2018.

\bibitem{Lee2020}
Joonho Lee, Jemin Hwangbo, Lorenz Wellhausen, Vladlen Koltun, and Marco Hutter.
\newblock Learning quadrupedal locomotion over challenging terrain.
\newblock {\em Science robotics}, 5(47), 2020.

\bibitem{Tsounis2020}
Vassilios Tsounis, Mitja Alge, Joonho Lee, Farbod Farshidian, and Marco Hutter.
\newblock Deepgait: Planning and control of quadrupedal gaits using deep
  reinforcement learning.
\newblock {\em IEEE Robotics and Automation Letters}, 5(2):3699--3706, 2020.

\bibitem{Kajita2001}
Shuuji Kajita, Fumio Kanehiro, Kenji Kaneko, Kazuhito Yokoi, and Hirohisa
  Hirukawa.
\newblock The 3d linear inverted pendulum mode: A simple modeling for a biped
  walking pattern generation.
\newblock In {\em IEEE/RSJ International Conference on Intelligent Robots and
  Systems}, volume~1, pages 239--246. IEEE, 2001.

\bibitem{Pratt2006b}
Jerry~E Pratt and Russ Tedrake.
\newblock Velocity-based stability margins for fast bipedal walking.
\newblock In {\em Fast Motions in Biomechanics and Robotics}, pages 299--324.
  Springer, 2006.

\bibitem{Blanchini1999}
F.~Blanchini.
\newblock Set invariance in control.
\newblock {\em Automatica}, 35(11):1747 -- 1767, 1999.

\bibitem{Blanchini2008}
Franco Blanchini and Stefano Miani.
\newblock {\em Set-theoretic methods in control}.
\newblock Springer.

\bibitem{Bemporad2002}
Alberto Bemporad, Manfred Morari, Vivek Dua, and Efstratios~N. Pistikopoulos.
\newblock The explicit linear quadratic regulator for constrained systems.
\newblock {\em Automatica}, 38(1):3 -- 20, 2002.

\bibitem{Baotic2005}
Mato Baotic.
\newblock {\em Optimal control of piecewise affine systems: A multi-parametric
  approach}.
\newblock PhD thesis, ETH Zurich, 2005.

\bibitem{Borrelli2005}
Francesco Borrelli, Mato Baoti{\'c}, Alberto Bemporad, and Manfred Morari.
\newblock Dynamic programming for constrained optimal control of discrete-time
  linear hybrid systems.
\newblock {\em Automatica}, 41(10):1709--1721, 2005.

\bibitem{Borrelli2017}
Francesco Borrelli, Alberto Bemporad, and Manfred Morari.
\newblock {\em Predictive control for linear and hybrid systems}.
\newblock Cambridge University Press, 2017.

\bibitem{Bertsekas1972}
Dimitri Bertsekas.
\newblock Infinite time reachability of state-space regions by using feedback
  control.
\newblock {\em IEEE Transactions on Automatic Control}, 17(5):604--613, 1972.

\bibitem{Lofberg2004}
Johan Lofberg.
\newblock Yalmip: A toolbox for modeling and optimization in matlab.
\newblock In {\em 2004 IEEE international conference on robotics and automation
  (IEEE Cat. No. 04CH37508)}, pages 284--289. IEEE, 2004.

\bibitem{Kvasnica2004}
Michal Kvasnica, Pascal Grieder, Mato Baoti{\'{c}}, and Manfred Morari.
\newblock Multi-parametric toolbox (mpt).
\newblock In Rajeev Alur and George~J. Pappas, editors, {\em Hybrid Systems:
  Computation and Control}, pages 448--462, Berlin, Heidelberg, 2004. Springer
  Berlin Heidelberg.

\bibitem{Park2021}
Hae-Won Park, Patrick~M Wensing, and Sangbae Kim.
\newblock Jumping over obstacles with mit cheetah 2.
\newblock {\em Robotics and Autonomous Systems}, 136:103703, 2021.

\bibitem{MC}
\url{https://github.com/mit-biomimetics/Cheetah-Software}.

\end{thebibliography}

\begin{appendices}

\section{Proof of Theorem~\ref{thm:pre_con}}\label{app:pre_con}

To prove the desired result, we first recall the main results in~\cite{Bertsekas1972}. Consider a general time-invariant nonlinear system $x^+ = f(x,u)$ and define the operators $\Rx$ and $\Cx$ as follows 
\eq{\ald{ &\Rx(X) = \{x : \exists u, \text{ s.t. } x^+ = f(x,u)\in X \}\bigcap X \\ &\Cx_n(X)  = \Cx(\Rx^{n-1}(X)) \\  &\Cx(Z) = \{(x,u) : x\in Z, u\in U, f(x,u)\in Z \}}} for sets $X$, $U$ and $Z$. Then we have the following result. 

\begin{lemma}\label{lem:bertres}
Assume there exists a positive integer $n_0$ such that $\Cx_n(X)$ are nonempty and compact for all $n\ge n_0$, then 
\eq{\emptyset \neq \Rx^*(X)  = \bigcap\limits_{n=1}^\infty \Rx^n(X)}
\end{lemma}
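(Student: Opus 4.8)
The plan is to read $\Rx^*(X)$ as the maximal controlled-invariant subset of $X$ (the set of states admitting a control sequence that keeps the trajectory in $X$ for all time), and to establish both that it is nonempty and that it coincides with $\bigcap_{n=1}^\infty \Rx^n(X)$. The starting observation is that $\Rx$ is monotone (if $Y\subseteq Y'$ then $\Rx(Y)\subseteq\Rx(Y')$) and satisfies $\Rx(Y)\subseteq Y$ for every $Y$, since $\Rx$ intersects with its argument. Combining these two facts gives, by induction, the nesting $\Rx^{n+1}(X)\subseteq\Rx^n(X)$, so $\{\Rx^n(X)\}_n$ is a decreasing family. I would also record that $\Rx^n(X)$ is precisely the set of states from which $X$ can be kept invariant for $n$ consecutive steps, which makes the inclusion $\Rx^*(X)\subseteq\bigcap_n\Rx^n(X)$ immediate (infinite-horizon feasibility implies finite-horizon feasibility).

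Next I would transfer the hypotheses on $\Cx_n(X)$ to $\Rx^n(X)$ through the projection onto the state coordinate. Since $\proj_x\Cx(Z)=\Rx(Z)$ and $\Cx_n(X)=\Cx(\Rx^{n-1}(X))$, one has $\proj_x\Cx_n(X)=\Rx^n(X)$; because projection is continuous and a continuous image of a compact set is compact, nonemptiness and compactness of $\Cx_n(X)$ for $n\ge n_0$ yield the same for $\Rx^n(X)$. With the tail $\{\Rx^n(X)\}_{n\ge n_0}$ now a nested family of nonempty compact sets, Cantor's intersection theorem gives $\bigcap_{n\ge n_0}\Rx^n(X)\neq\emptyset$, and by nesting this equals $\bigcap_{n=1}^\infty\Rx^n(X)$. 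This settles nonemptiness.

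The substantive step is the reverse inclusion $R_\infty:=\bigcap_{n=1}^\infty\Rx^n(X)\subseteq\Rx^*(X)$, that is, upgrading feasibility over every finite horizon to feasibility over the infinite horizon. I would prove that $R_\infty$ is controlled invariant. Given $x\in R_\infty$, for each $n$ the relation $x\in\Rx^{n+1}(X)=\Rx(\Rx^n(X))$ furnishes a control $u_n$ with $f(x,u_n)\in\Rx^n(X)$, and the pair $(x,u_n)$ lies in $\Cx_{n+1}(X)\subseteq\Cx_{n_0}(X)$ once $n\ge n_0-1$ (the $\Cx_n$ being nested by monotonicity of $\Cx$). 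Since $\Cx_{n_0}(X)$ is compact, the controls admit a subsequence $u_{n_k}\to u^*$ with $(x,u^*)\in\Cx_{n_0}(X)$. For any fixed $m\ge n_0$ and $k$ large enough that $n_k\ge m$, one has $f(x,u_{n_k})\in\Rx^{n_k}(X)\subseteq\Rx^m(X)$, so continuity of $f$ together with closedness of $\Rx^m(X)$ gives $f(x,u^*)\in\Rx^m(X)$ for every $m\ge n_0$, hence $f(x,u^*)\in R_\infty$. Thus every point of $R_\infty$ can be steered back into $R_\infty$ in one step; iterating this selection produces an infinite admissible trajectory remaining in $R_\infty\subseteq X$, so $R_\infty\subseteq\Rx^*(X)$. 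Together with $\Rx^*(X)\subseteq R_\infty$ this gives the claimed equality.

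I expect the main obstacle to be precisely this last limiting argument: finite-horizon viability does not in general imply infinite-horizon viability without a compactness mechanism to select a single control from the horizon-indexed family $\{u_n\}$. The compactness and closedness of the $\Cx_n(X)$ (equivalently of the $\Rx^n(X)$) are exactly what make the subsequence extraction legitimate and what allow the passage to the limit; I would be careful to verify that the limiting pair $(x,u^*)$ remains admissible and that the limit state $f(x,u^*)$ lands in every tail set $\Rx^m(X)$, which is where continuity of $f$ and the nested compactness are used essentially. Everything else — monotonicity, the nesting of both families, the projection identities, and Cantor's theorem — is routine.
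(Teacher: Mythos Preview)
Your argument is correct and is essentially the classical Bertsekas proof, but note that the paper does not actually prove this lemma: it is stated as a recalled result from~\cite{Bertsekas1972} and used as a black box, with the paper's own effort going only into Lemma~\ref{lem:compact} (compactness of $\Gamma_k(\X)$) and then invoking this lemma. So there is nothing to compare against beyond the citation.

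Two minor remarks on your write-up. First, your limiting step tacitly uses continuity of $f$ when passing from $f(x,u_{n_k})\in\Rx^m(X)$ to $f(x,u^*)\in\Rx^m(X)$; this is not listed among the hypotheses of the lemma as stated, but it is part of the standing assumptions in Bertsekas' setting and holds in the paper's linear case, so just make it explicit. Second, your identification $\proj_x\Cx(Z)=\Rx(Z)$ presumes the existential quantifier in the definition of $\Rx$ ranges over $u\in U$; the paper's displayed definition omits this constraint, but it is clearly intended and your reading is the right one.
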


The time-invariance property of the periodic state evolution between two consecutive steps~\eqref{eq:peri_evo} allows us to leverage the above lemma to prove the desired results. In essence, it remains to show the compactness of $\Gamma(\X)$ and $\Gamma_k(\X)$ defined in~\eqref{eq:gamma} provided a compact and nonempty set $\X$ and compact and nonempty set $\U$.

\begin{lemma}\label{lem:compact}
Given compact and nonempty sets $\X$ and $\U$, then  $\Gamma(\X)$ and $\Gamma_k(\X)$ defined in~\eqref{eq:gamma} are compact for all $k\ge 0$.
\end{lemma}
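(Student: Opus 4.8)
The plan is to reduce everything to two elementary topological facts: (i) if $Y$ is compact then both $\Gamma(Y)$ and $\PRE(Y)$ are compact; and (ii) $\PRE^{j}(\X)$ is compact for every $j\ge 0$, which follows from (i) by induction on $j$ with base case $\PRE^{0}(\X)=\X$. Granting these, $\Gamma_k(\X)=\Gamma(\PRE^{k-1}(\X))$ is $\Gamma$ applied to a compact set and is therefore compact; in particular the case $k=1$ also settles compactness of $\Gamma(\X)$.

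First I would prove that $\Gamma(Y)$ is compact whenever $Y$ is compact. By definition $\Gamma(Y)=\{(\x,\v): \x\in Y,\ \v\in\U^{T_G},\ \bar{\Aa}\x+\bar{\Bb}\v\in Y\}$, which is the intersection of $Y\times\U^{T_G}$ with the preimage of $Y$ under the continuous (linear) map $(\x,\v)\mapsto\bar{\Aa}\x+\bar{\Bb}\v$. Since $Y$ and $\U$ are closed, $Y\times\U^{T_G}$ and the preimage are both closed, so $\Gamma(Y)$ is closed; and $\Gamma(Y)\subseteq Y\times\U^{T_G}$, which is bounded because $\U$ (hence $\U^{T_G}$) and $Y$ are bounded. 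Heine--Borel then gives compactness.

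Next I would handle $\PRE(Y)$, using the representation $\PRE(Y)=\proj_{\x}\,\{(\x,\v):\v\in\U^{T_G},\ \bar{\Aa}\x+\bar{\Bb}\v\in Y\}$. The key structural point is that $\bar{\Aa}=\Aa^{T_G}=(e^{A_\text{LIP}\dt})^{T_G}$ is nonsingular, since a matrix exponential is always invertible. Hence on the set above one can write $\x=\bar{\Aa}^{-1}(\y-\bar{\Bb}\v)$ with $\y\in Y$ and $\v\in\U^{T_G}$, which shows this set is bounded in $(\x,\v)$; it is also closed by the preimage argument from the previous paragraph, hence compact, and $\PRE(Y)$ is its image under the (continuous) coordinate projection, hence compact. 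With (i) in hand, the induction is immediate: $\PRE^{0}(\X)=\X$ is compact by hypothesis, and if $\PRE^{j}(\X)$ is compact then $\PRE^{j+1}(\X)=\PRE(\PRE^{j}(\X))$ is compact. Therefore $\PRE^{k-1}(\X)$ is compact and $\Gamma_k(\X)=\Gamma(\PRE^{k-1}(\X))$ is compact.

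The main obstacle is the boundedness of $\PRE(Y)$: unlike the definition of $\Gamma(Y)$, the condition defining $\PRE(Y)$ does not directly confine $\x$ to a bounded set, so one must exploit invertibility of $\bar{\Aa}$ to transfer the bound from $\bar{\Aa}\x+\bar{\Bb}\v$ back to $\x$. Accordingly, I would state explicitly that $\Aa=e^{A_\text{LIP}\dt}$ (hence $\bar{\Aa}$) is invertible; this is exactly where the linear structure of the switched LIP model is used, and everything else (closedness of preimages, products, and intersections; compactness of continuous images of compact sets) is routine.
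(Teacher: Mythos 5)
Your proof is correct, and it rests on the same Heine--Borel skeleton as the paper's (closed plus bounded in Euclidean space), but you go a genuine step further. The paper proves compactness of $\Gamma(\X)$ exactly as you do---$\Gamma(\X)$ sits inside the compact set $\X\times\U^{T_G}$ and is closed as the preimage of a closed set under the continuous map $(\x,\v)\mapsto\bar{\Aa}\x+\bar{\Bb}\v$---and then dismisses $\Gamma_k(\X)$ with ``following similar arguments.'' But $\Gamma_k(\X)=\Gamma(\PRE^{k-1}(\X))$, so those ``similar arguments'' silently require $\PRE^{k-1}(\X)$ to be compact, which the paper never establishes. You correctly isolate this as the real content of the lemma: $\PRE(Y)$ is a projection of $\{(\x,\v):\v\in\U^{T_G},\ \bar{\Aa}\x+\bar{\Bb}\v\in Y\}$, and boundedness of that set in $\x$ is \emph{not} automatic---it hinges on $\bar{\Aa}=\Aa^{T_G}=(e^{A_\text{LIP}\dt})^{T_G}$ being invertible (a power of a matrix exponential), so that $\x=\bar{\Aa}^{-1}(\y-\bar{\Bb}\v)$ with $\y$ and $\v$ ranging over bounded sets. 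Your induction on $\PRE^{j}(\X)$, combined with projecting a compact lifted set (rather than a merely closed one, whose projection could fail to be closed), then delivers the result cleanly. In short: same strategy, but your version actually closes the argument for $\Gamma_k(\X)$ and makes explicit where the linear-LIP structure is used; a hypothetical singular $\bar{\Aa}$ would break boundedness of $\PRE(Y)$, so this is not a cosmetic addition. (One cosmetic remark: as stated the lemma claims the result for all $k\ge 0$, but $\Gamma_0(\X)=\Gamma(\PRE^{-1}(\X))$ is not defined, so $k\ge 1$ is the meaningful range---your proof covers exactly that.)
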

\begin{proof}
Since all sets considered in this paper are Euclidean, they are compact if closed and bounded. Consider $\Gamma(\X)$ first. Due to the compactness of $\U$, the Cartesian product $\U^{T_G}$ serving as the input constraint set for~\eqref{eq:peri_evo} is compact as well. In addition, the linear mapping $[\bar{\Aa}\ \ \bar{\Bb}]$ is apparently continuous and hence implies compactness of the set $\Gamma(\X)$. Following similar arguments, the compactness of $\Gamma_k(\X)$ can be easily established, which proves this lemma.
\end{proof}

Theorem~\ref{thm:pre_con} then follows from combining Lemma~\ref{lem:bertres} and Lemma~\ref{lem:compact}.

\section{Proof of Proposition~\ref{prop:linear_cap_set}}\label{app:prop1}

The following lemma regarding the Minkowski addition $\oplus$ and the composition of polyhedron and affine mapping $\circ$ is needed in our subsequent proof of Proposition~\ref{prop:linear_cap_set}.

\begin{lemma}\label{lem:setop}
Suppose $w$ is a vector, $\Q$ is a generic set, $\P$ is a polyhedral sets with $H$-representation given by $\P = \{x: Hx\le h\}$, and $A(x)$ is a linear mapping, then the following relationships hold
\subeq{\al{ & (\P+w)\oplus \Q =  \P\oplus \Q +w, \\ & (\P + w)\circ A = \{x: HA x \le h+Hw\}  \\ & \P \circ A +w = \{x: HAx \le h+HAw\}   }}
\end{lemma}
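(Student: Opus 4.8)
The plan is to prove Lemma~\ref{lem:setop} by directly unwinding the definitions of the Minkowski addition $\oplus$ and the two composition operations $\P \circ A$ and $A \circ \P$ as given in the main text, then handling each of the three claimed identities in turn. Throughout, $\P = \{x : Hx \le h\}$ is fixed.

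First I would establish $(\P + w) \oplus \Q = (\P \oplus \Q) + w$. This holds for arbitrary sets, not just polyhedra: an element of the left-hand side has the form $(p + w) + q$ with $p \in \P$, $q \in \Q$, and an element of the right-hand side has the form $(p + q) + w$; these coincide by associativity and commutativity of vector addition, so a simple two-way inclusion argument finishes it. It is worth noting explicitly that $\P + w$ is just $\P \oplus \{w\}$, so this identity is really the associativity of $\oplus$ specialized to a singleton.

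Next I would handle $(\P + w) \circ A$. Here I use that $\P + w = \{y : H(y - w) \le h\} = \{y : Hy \le h + Hw\}$, i.e. translating a polyhedron by $w$ shifts its offset vector from $h$ to $h + Hw$. Applying the definition $\S \circ A = \{x : (HA)x \le (\text{offset})\}$ with $\S = \P + w$ then immediately gives $(\P + w) \circ A = \{x : HAx \le h + Hw\}$, as claimed. For the third identity, $\P \circ A + w$, I first compute $\P \circ A = \{x : HAx \le h\}$ from the definition, and then translate this polyhedron by $w$ using the same shift rule as above: its constraint matrix is $HA$, so translating by $w$ changes the offset from $h$ to $h + (HA)w$, yielding $\P \circ A + w = \{x : HAx \le h + HAw\}$.

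None of these steps presents a genuine obstacle — the lemma is a bookkeeping exercise on $H$-representations. The only point requiring a little care is making sure the translation-shift rule for a polyhedron ($\{x : Hx \le h\} + w = \{x : Hx \le h + Hw\}$) is stated and justified once up front via the substitution $x \mapsto x - w$, since it is invoked in both the second and third identities; after that, the three claims follow by direct substitution into the definitions of $\oplus$ and $\circ$ recalled earlier in the text. I would present the argument compactly, proving the translation rule as a preliminary observation and then dispatching the three identities in a few lines each.
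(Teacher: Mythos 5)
Your proposal is correct and follows essentially the same route as the paper's proof: the first identity via associativity/commutativity of vector addition, and the second and third via the translation-shift rule $\{x : Hx \le h\} + w = \{x : Hx \le h + Hw\}$ applied before or after composing with $A$. Stating that shift rule once up front is a minor organizational improvement over the paper, which derives it inline, but the substance is identical.
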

\begin{proof}
\begin{itemize}
    \item The first relationship holds due to the following derivation \eqn{\ald{(\P+w)\oplus \Q & = \{y+q: y\in \P+w, q\in \Q\}\\ & = \{y-w+q+w: y-w\in \P, q\in \Q\} \\ & = \P\oplus \Q +w }}
    \item We first observe:
    \eqn{\ald{\P+w &= \{x:H(x-w)\le h\} \\ &= \{x:Hx\le h+Hw\}}}
    Then we have:
    \eqn{ (\P + w)\circ A  =\{x: HAx\le h+Hw\} } 
    \item We have 
    \eqn{\ald{\P \circ A +w &  = \{x: x-w\in \P\circ A\} \\ & = \{x: HA(x-w)\le h\} \\ & = \{x: HAx \le h + HAw\}}}
\end{itemize}
\end{proof}

Given the above lemma, we are ready to prove Proposition~\ref{prop:linear_cap_set}. For the ease of exposition, we prove the proposition with the following linear dynamics with 3D-LIP model
\eq{\ald{&x_{k+1}   = Ax_k+Bu_k \\& x_k\in \X, \ u_k\in \U \\ & x_T\in \X_T }}
where $A = e^{A_\text{LIP} \dt}$ and $B = A_\text{LIP}(e^{A_\text{LIP} \dt} - I )B_\text{LIP}$ with $A_\text{LIP}$, $B_\text{LIP}$ given in~\eqref{eq:3dlip}.

\begin{proof}

To prove Proposition~\ref{prop:linear_cap_set}, we first observe that determination of the tube of balanced states $\Bx$ and the set of capturable states $\Ca$ rely on the basic $\Pre$ operation defined in~\eqref{eq:pre}. Therefore, proving Proposition~\ref{prop:linear_cap_set} essentially becomes verifying the following condition. 

Given $\Omega_{2,k}  = \Omega_{1,k}+\Delta \Ww$, then 
\eqn{\Omega_{2,k+1} =\Omega_{1,k+1}+\Delta \Ww} with $\Omega_{i,k+1} = \Pre(\Omega_{i,k+1})$.

We prove the above condition by induction. 

\begin{itemize}
    \item It is clear that, the initialization satisfies the condition, i.e., 
    $\X_{T,2} = \X_{T,1}+\Delta\Ww$
    \item Suppose $\Omega_{2,k}  = \Omega_{1,k}+\Delta \w$, then  
    \eq{\label{eq:omega2}\ald{\Omega_{2,k+1}
    & = \Pre(\Omega_{2,k}) \\ 
    & = \Big(\Omega_{2,k}\oplus (-B\circ \U)\Big)\circ A\\
    & = {\Big(}{\Large(}\Omega_{1,k}+\Delta \Ww{\Large)}\oplus {\Large(}-B\circ (\U+\Delta\w){\Large)}{\Big)}\circ A \\
    & = {\Big(}\Omega_{1,k}\oplus {\Large(}-B\circ \U - B \Delta\w){\Large)} +\Delta \Ww{\Big)}\circ A \\ 
    & = {\Big(}\Omega_{1,k}\oplus {\Large(}-B\circ \U {\Large)} + \Delta \Ww - B\Delta \w  {\Big)}\circ A  }}
    Now, thanks to the polytopic nature of the set $\Omega_{1,k}\oplus {\Large(}-B\circ \U {\Large)}$, we assume that it has the following $H$-representation:
    \eqn{\Omega_{1,k}\oplus {\Large(}-B\circ \U {\Large)} = \{x: \tilde{H} x\le \tilde{h}\}.}
    
    It follows from Lemma~\ref{lem:setop} that 
    \eq{\label{eq:omega_2}\Omega_{2,k+1} = \{x: \tilde{H} A x \le \tilde{h}+H (\Delta \Ww-B\Delta \w) \}.}
    
    Also, Lemma~\ref{lem:setop} indicates that 
    \eq{\label{eq:omega_1}\ald{\Omega_{1,k+1} &  = \Pre(\Omega_{1,k})+\Delta \Ww \\
    & ={\Big(}\Omega_{1,k}\oplus {\Large(}-B\circ \U {\Large)} {\Big)}\circ A +\Delta \Ww \\ 
    & = \{x: \tilde{H} A x \le \tilde{h}+H A\Delta \Ww \}}}
    
    By comparing~\eqref{eq:omega_2} and~\eqref{eq:omega_1}, it is immediate that the Proposition is proved if we have $A\Delta \Ww = \Delta \Ww-B\Delta \w$.
    
    Now, recall that $A = e^{A_\text{LIP} \dt}$ and $B = A_\text{LIP}^{-1}(e^{A_\text{LIP} \dt} - I)B_\text{LIP}$, and 
    \eqn{A_\text{LIP} =  \ar{{cccc} 0 & 1 & 0 & 0\\ \frac{\gr}{\hz} & 0 & 0 & 0\\ 0 & 0 & 0 & 1\\ 0 & 0 & \frac{\gr}{\hz} & 0},\ \ B_\text{LIP} = \ar{{cccc} 0 & 0 & 0 & 0\\ - \frac{\gr}{\hz} & 0 & 0 & 0\\ 0 & 0 & 0 & 0 \\ 0 & 0  & -\frac{\gr}{\hz}&0}  .} Denoting by $\omega = \sqrt{\frac{\gr}{\hz}}$, then we have 
    \subeq{\al{ A  & =  \ar{{cc} \bar{A}& \0 \\ \0 & \bar{A} }  ,\text{ with}
    \\ \bar{A} & = \ar{{cc}\cosh(\omega\dt) & \frac{1}{2\omega} (e^{\omega\dt} - e^{-\omega\dt})  \\ \omega \sinh(\omega\dt) & \cosh(\omega \dt)}
    \\  B &=\ar{{cc}1- \cosh(\omega\dt) & 0 \\ - \omega \sinh(\omega\dt) & 0 \\ 0 &  1- \cosh(\omega\dt) \\ 0 & -\omega \sinh(\omega\dt)}}}
    Then, it is immediate that 
    \eq{(A-I)\Delta \Ww  = \ar{{c} (\cosh(\omega \dt)-1)\Delta w_1 \\   (\omega \sinh(\omega \dt))\Delta w_1 \\(\cosh(\omega \dt)-1)\Delta w_2 \\   (\omega \sinh(\omega \dt))\Delta w_2 } = -B\Delta \w,}which hence completes the proof.
\end{itemize}
\end{proof}
\end{appendices}

\end{document}